\crefname{section}{\textsection}{\textsection}
\Crefname{section}{\textsection}{\textsection}
\newcommand{\klng}{\ensuremath{\mathcal{K}}\xspace}
\newcommand{\Declare}{{\sc Declare}\xspace}
\newcommand{\pres}[1]{\ensuremath{{}^{\bullet}#1}\xspace}
\newcommand{\posts}[1]{\ensuremath{#1^{\bullet}}\xspace}
\newcommand{\pdom}[1]{\ensuremath{\Omega{#1}}}
\newcommand{\tuple}[1]{\ensuremath{\langle{#1}\rangle}}
\newcommand{\plantopn}[1]{\ensuremath{\Phi(#1)}\xspace}
\newcommand{\fire}[1]{\ensuremath{\stackrel{{#1}}{\rightarrow}}}
\newcommand{\tstep}[2]{\ensuremath{{#1}^{#2}}\xspace}
\newcommand{\ournet}{DAW-net\xspace}
\newcommand{\pidRel}[2][]{\ensuremath{#2^{\wp\ifthenelse{\equal{#1}{}}{}{(#1)}}}\xspace}
\newcommand{\C}{\mathcal{C}} \newcommand{\D}{\mathcal{D}}
 \renewcommand{\H}{\mathcal{H}}
 \renewcommand{\L}{\mathcal{L}}
\newcommand{\M}{\mathcal{M}}
\renewcommand{\S}{\mathcal{S}} 
 \newcommand{\V}{\mathcal{V}}
\newcommand{\set}[1]{\{ #1 \}}
\newcommand{\writef}{\ensuremath{\textsf{wr}}}
\newcommand{\guardf}{\ensuremath{\textsf{gd}}}
\newcommand{\varset}{\ensuremath{\mathcal{V}}}
\newcommand{\domf}{\ensuremath{\textsf{dm}}}
\newcommand{\guardlang}{\ensuremath{\mathcal{L}(\mathcal{D})}}
\newcommand{\ordf}{\ensuremath{\textsf{ord}}}
\newcommand{\img}{\ensuremath{img}}
\newcommand{\assign}{\eta}
\newcommand{\deff}{\textsf{def}}
\newcommand{\dmodel}{\mathcal{D}}
\newcommand{\nmodel}{\mathcal{N}}
\newcommand{\mrk}{M}
\newcommand{\wkfProj}{\ensuremath{\Pi}\xspace}
\newcommand{\kvar}[1]{\ensuremath{\text{var}_{#1}}\xspace}
\newcommand{\kvardom}[1]{\ensuremath{\text{dom}_{#1}}\xspace}
\newcommand{\kguard}[1]{\ensuremath{\text{grd}_{#1}}\xspace}
\newcommand{\kvardef}[1]{\ensuremath{\text{def}_{#1}}\xspace}
\newcommand{\kvarchange}[1]{\ensuremath{\text{chng}_{#1}}\xspace}
\newcommand{\katom}{\ensuremath{\xi}\xspace}
\newcommand{\kord}{\ensuremath{\text{ord}}\xspace}
\newcommand{\cvalue}[1]{\texttt{#1}}
\newcommand{\Nat}{\mathbb{N}}
\newcommand{\onlytechrep}[1]{#1}
\newcommand{\onlypaper}[1]{}
\begin{document}
% The file aaai.sty is the style file for AAAI Press 
% proceedings, working notes, and technical reports.

\title{Enhancing workflow-nets with data for trace completion}

\author{Riccardo De Masellis\inst{1}\and Chiara Di Francescomarino\inst{1} \and \\ Chiara Ghidini\inst{1} \and Sergio Tessaris\inst{2}}

\institute{FBK-IRST, Italy
  \email{\{r.demasellis,dfmchiara,ghidini\}@fbk.eu}
\and
  Free University of Bozen-Bolzano, Italy
  \email{tessaris@inf.unibz.it}
}

\maketitle

\setlength{\intextsep}{5pt}%

%%%%%%%%%%%%%%%%%%%%%%%%%%%%%
%Important Dates

%18 November, 2016 - Abstracts (electronic submission) due
%22 November, 2016 - Papers (electronic submission, PDF) due
%26 January, 2017 - Notification of acceptance
%The reference timezone for all deadlines is UTC-12. That is, as long as there is still some place anywhere in the world where the deadline has not yet passed, you are on time!

% MASSIMO 8 PAGINE piu al massimo una di references.
%%%%%%%%%%%%%%%%%%%%%%%%%%%%%

\begin{abstract}
  %!TEX root = ./BPAI-17-DataWFNets.tex

The growing adoption of IT-systems for modeling and executing (business) processes or services has thrust the scientific investigation towards techniques and tools which support more complex forms of process analysis. Many of them, such as conformance checking, process alignment, mining and enhancement, rely on \emph{complete} observation of past (tracked and logged) executions. In many real cases, however, the lack of human or IT-support on all the steps of process execution, as well as information hiding and abstraction of model and data, result in incomplete log information of both data and activities. This paper tackles the issue of automatically repairing traces with missing information by notably considering not only activities but also data manipulated by them. Our technique recasts such a problem in a reachability problem and provides an encoding in an action language which allows to virtually use any state-of-the-art planning to return solutions.

% The capability to monitor process and service executions, which has gone to notably increase in the last decades due to the growing adoption of IT-systems, has brought to the diffusion of several reasoning-based tools for the analysis of process executions. Nevertheless, in many real cases, the different degrees of abstraction of models and IT-data, the lack of IT-support on all the steps of the model, as well as information hiding, result in process execution data conveying only incomplete information concerning the process-level activities. This may hamper the capability to analyse and reason about process executions. This paper presents a novel approach to recover missing information about process executions, relying on a logical formulation in terms of a planning problem.

%%% Local Variables:
%%% mode: latex
%%% TeX-master: "BPAI-17-DataWFNets"
%%% save-place: t
%%% End:
\end{abstract}

%\input{structure}

%!TEX root = ./BPAI-17-DataWFNets.tex

\section{Introduction}

% Current business analysis monitoring instruments
The use of IT systems for supporting business activities has brought to a large diffusion of \emph{process mining} techniques and tools that offer business analysts the possibility to observe the current process execution, identify deviations from the model, perform individual and aggregated analysis on current and past executions. 
\begin{wrapfigure}{r}{0.5\linewidth}
   \centering
     \includegraphics[width=.45\textwidth]{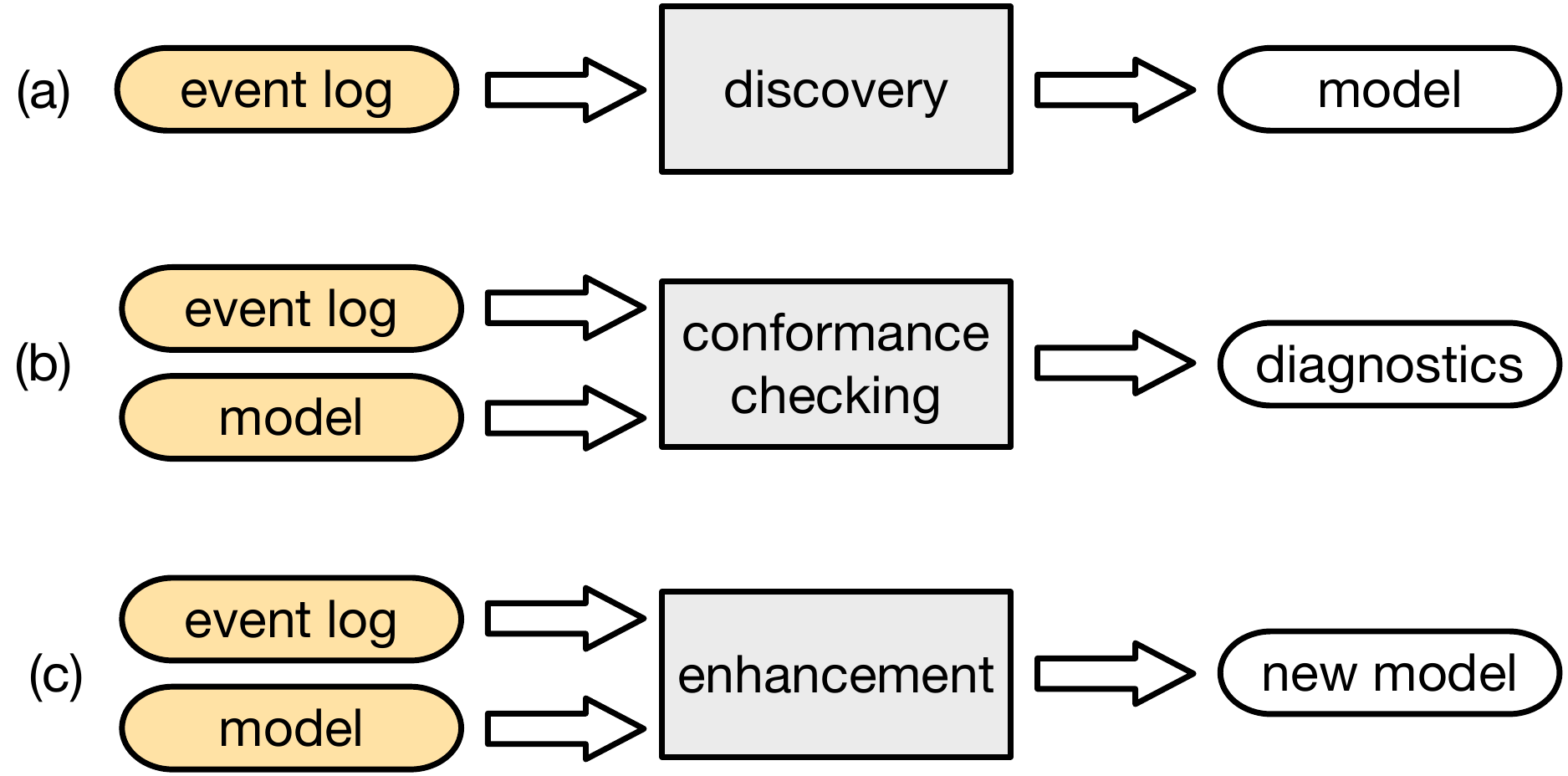}
   \caption{The three types of process mining.}
   \label{fig:imgs_ProcessMiningTypes}
 \end{wrapfigure}
According to the process mining manifesto, all these techniques and tools can be grouped in three basic types: process discovery, conformance checking and process enhancement (see Figure \ref{fig:imgs_ProcessMiningTypes}), and require in input an \emph{event log} and, for conformance checking and enhancement, a \emph{(process) model}.
A log, usually described in the IEEE standard XES format\footnote{\url{http://www.xes-standard.org/}}, is a set of execution traces (or cases) each of which is an ordered sequence of events carrying a payload as a set of attribute-value pairs.
%%
% In accordance with the IEEE standard XES format~\cite{xes}, we can describe an event log as a set of execution traces (or cases). Each trace corresponds to an actual execution of a given process and is composed of events, which are described by means of n-tuples, where each element of the tuple is the value of a given attribute. Examples of event attributes are the event name, its (start / end) timestamps, data elements describing the business objects required / manipulated by the activity, resources executing the activities, and so on.  
%%
Process models instead provide a description of the scenario at hand and can be constructed using one of the available Business Process Modeling Languages, such as BPMN, YAWL and \Declare.

Event logs are therefore a crucial ingredient to the accomplishment of process mining. Unfortunately, a number of difficulties may hamper the availability of event logs. Among these are partial event logs, where the execution traces may bring only \textbf{partial information} in terms of which process activities have been executed and what data or artefacts they produced.
% The partiality of monitoring information in real settings may come from different sources. First, the definitions of business processes often integrate activities that are not supported by IT systems due either to their nature (e.g. they consist of human interactions) or to the high level of abstraction of the description, detached from the implementation.
% Indeed, widely adopted business process modelling formalisms, such as BPMN or YAWL, allow for a clear distinction between e.g. IT-supported and human-based activities. Moreover, it often happens that special activities are introduced in the process model in order to guarantee properties of the model, e.g., to structure the workflow or to satisfy syntactic constraints.
% Second, IT systems may expose only partial information about their workings and the data they produce, hence preventing a complete execution tracing.
%%
Thus repairing incomplete execution traces by reconstructing the missing entries becomes an important task to enable process mining in full, as noted in recent works such as~\cite{Rogge-Solti2013,Di-Francescomarino-C.:2015aa}. While these works deserve a praise for having motivated the importance of trace repair and having provided some basic techniques for reconstructing missing entries using the knowledge captured in process models, they all focus on event logs (and process models) of limited expressiveness. In fact, they all provide techniques for the reconstruction of control flows, thus completely ignoring the data flow component.
This is a serious limitation, given the growing practical and theoretical efforts to extend business process languages with the capability to model complex data objects, along with the traditional control flow perspective~\cite{calvanese-13-PODS-keynote}. 
  
In this paper we show how to exploit state-of-the-art planning techniques to deal with the repair of data-aware event logs in the presence of imperative process models. Specifically we will focus on the well established Workflow Nets~\cite{aalst_soundness:2010}, a particular class of Petri nets that provides the formal foundations of several process models, of the YAWL language and have become one of the standard ways to model and analyze workflows.  
In particular we provide: 
\begin{compactenum}
	\item a modeling language \ournet, an extension of the workflow nets with data formalism introduced in \cite{sidorovastahletal:2011} so to be able to deal with even more expressive data (Section~\ref{sec:ourframework}); 
	\item a recast of data aware trace repair as a reachability problem in \ournet (Section~\ref{sec:encoding:traces});
	\item a sound and complete encoding of reachability in \ournet in a planning problem so to be able to deal with trace repair using planning (Section~\ref{sec:encoding:planning}). 
\end{compactenum}
The solution of the problem are all and only the repairs of the partial trace compliant with the \ournet model. The advantage of using automated planning techniques is that we can exploit the underlying logic language to ensure that generated plans conform to the observed traces without resorting to ad hoc algorithms for the specific repair problem.
The theoretical investigation presented in this work provides an important step forward towards the exploitation of mature planning techniques for the trace repair w.r.t. data-aware processes.
\section{Preliminaries}
% \todo[inline]{Controllare che usiamo nella parte tecnica tutte queste definizioni, altrimenti levarle!}
\subsection{The Workflow Nets modeling language} % (fold)
\label{sub:subsection_name}

Petri Nets (PN) is a modeling language for the description of distributed systems that has widely been applied to the description and analysis of business processes~\cite{vanderaalst:1998}.%~\cite{van1995class}. 
The classical PN is a directed bipartite graph with two node types, called \emph{places} and \emph{transitions}, connected via directed arcs. Connections between two nodes of the same type are not allowed.
\begin{definition}[Petri Net]
	\label{def:PN}
  A \emph{Petri Net} is a triple $\tuple{P,T,F}$ where $P$ is a set of \emph{places}; $T$ is a set of \emph{transitions}; $F\subseteq (P \times T) \cup (T \times P)$ is the flow relation describing the arcs between places and transitions (and between transitions and places).
\end{definition}
\begin{wrapfigure}{r}{0.35\linewidth}
  \centering
    \fbox{\includegraphics[width=.6\linewidth]{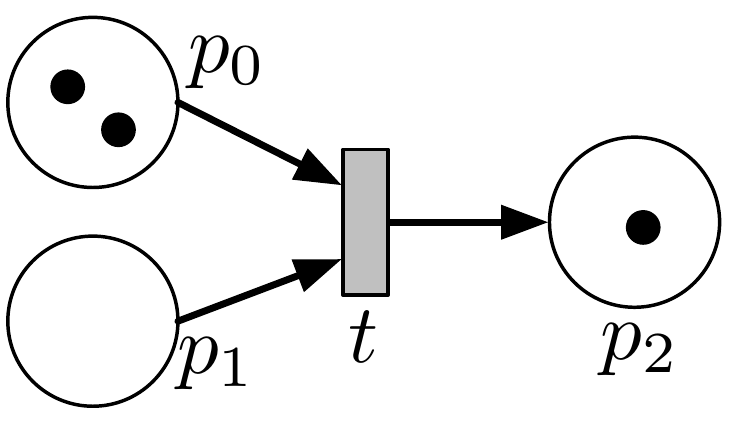}}
  \caption{A Petri Net.}
  \label{fig:imgs_PNsample}
\end{wrapfigure}
The \emph{preset} of a transition $t$ is the set of its input places: $\pres{t} = \{p \in P \mid (p,t) \in F\}$. The \emph{postset} of $t$ is the set of its output places: $\posts{t} = \{p \in P \mid (t,p) \in F\}$. Definitions of pre- and postsets of places are analogous.

Places in a PN may contain a discrete number of marks called tokens. Any distribution of tokens over the places, formally represented by a total mapping  $M : P\mapsto \mathbb{N}$, represents a configuration of the net called a \emph{marking}.  
PNs come with a graphical notation where places are represented by means of circles, transitions by means of rectangles and tokens by means of full dots within places. Figure~\ref{fig:imgs_PNsample} depicts a PN with a marking $M(p_0)=2$, $M(p_1)=0$, $M(p_2)=1$. The preset and postset of $t$ are $\{p_0, p_1\}$ and $\{p_2\}$, respectively.

\begin{figure*}[h]
  \centering
    \tikzstyle{place}=[circle,draw=black,thick]
    \tikzstyle{transition}=[text width=2cm,text centered,rectangle,draw=black!50,fill=black!20,thick]
    \scalebox{0.5}{\begin{tikzpicture}[font=\large]
      \node (start) [place,label=above:$start$] {};
      % \node at ( 0,1) [place] {};
%       \node at ( 0,0) [place] {};
      \node[transition,text width=2cm] (t1)  [right=.5cm of start, anchor=west] {T1:ask application documents};
      \node[place,label=right:$p_1$] (p1) [right=.8cm of t1, anchor=west] {};
      \node[transition] (t2)  [above right=1.5cm of p1, anchor=west] {T2:send student application};
      \node[transition] (t3)  [below right=1.5cm of p1, anchor=west] {T3:send worker application};
      \node[place,label=above:$p_2$] (p2) [right=.5cm of t2] {};
      \node[place,label=below:$p_3$] (p3) [right=.5cm of t3] {};
      \node[transition] (t4)  [right=.5cm of p2, anchor=west] {T4:fill student request};
      \node[transition] (t5)  [right=.5cm of p3, anchor=west] {T5:fill worker request};
      \node[place,label=left:$p_4$] (p4) [right=8cm of p1, anchor=west] {};
      \node[transition,text width=3cm] (t6)  [above right=2cm of p4] {T6:local credit officer approval};
      \node[transition,text width=3cm] (t7)  [right=1.3cm of p4] {T7:senior credit officer approval};
      \node[transition,text width=3cm] (t8)  [below right=2 cm of p4] {T8:bank credit committee approval};
      \node[place,label=above right:$p_5$] (p5) [right=5.5cm of p4, anchor=west] {};
      \node[transition,text width=.5cm] (t9)  [right=.5cm of p5] {T9};
      \node[place,label=above:$p_6$] (p6) [above=1cm of t9] {};
      \node[place,label=below:$p_7$] (p7) [below=1cm of t9] {};
      \node[transition] (t10)  [right=1cm of p6] {T10:send approval to customer};
      \node[transition] (t11)  [right=1cm of p7] {T11:store approval in branch};
      \node[place,label=above:$p_8$] (p8)  [right=1cm of t10] {};
      \node[place,label=below:$p_9$] (p9)  [right=1cm of t11] {};
      \node[transition,text width=1.5cm] (t12)  [below=.8cm of p8] {T12: issue loan};
      \node[place,label=above:$end$] (end)  [right=.5cm of t12] {};
      \draw [->,thick] (start.east) -- (t1.west);
      \draw [->,thick] (t1.east) -- (p1.west);
      \draw [->,thick] (p1.north) to [bend left=45] node[fill=white,inner sep=1pt, near start] {{loanType=s}} (t2.west);
      \draw [->,thick] (p1.south) to [bend right=45] node[fill=white,inner sep=1pt, near start, swap] {{loanType=w}} (t3.west);
      \draw [->,thick] (t2.east) to (p2.west);
      \draw [->,thick] (t3.east) to (p3.west);
      \draw [->,thick] (p2.east) to (t4.west);
      \draw [->,thick] (p3.east) to (t5.west);
      \draw [->,thick] (t4.east) to [bend left=45] (p4.north);
      \draw [->,thick] (t5.east) to [bend right=45] (p4.south);
      \draw [->,thick] (p4.north) to [bend left=45] node[fill=white,inner sep=1pt, midway] {{request$\leq 5k$}} (t6.west);
      \draw [->,thick] (p4.south) to [bend right=45] node[fill=white,inner sep=1pt, midway,swap] {{request$\geq 100k$}} (t8.west);
      \draw [->,thick] (p4.east) to node[fill=white,inner sep=1pt, midway,swap] {else} (t7.west);
      \draw [->,thick] (t6.east) to [bend left=45] (p5.north);
      \draw [->,thick] (t8.east) to [bend right=45] (p5.south);
      \draw [->,thick] (t7.east) to  (p5.west);
      \draw [->,thick] (p5.east) to (t9.west);
      \draw [->,thick] (t9.north) to (p6.south);
      \draw [->,thick] (t9.south) to (p7.north);
      \draw [->,thick] (p6.east) to (t10.west);
      \draw [->,thick] (p7.east) to (t11.west);
      \draw [->,thick] (t10.east) to (p8.west);
      \draw [->,thick] (t11.east) to (p9.west);
      \draw [->,thick] (p8.south) to  (t12.north);
      \draw [->,thick] (p9.north) to  (t12.south);
      \draw [->,thick] (t12.east) to  (end.west);
    \end{tikzpicture}}
  \caption{A process as a Petri Net.}
  %%\vspace{-.2cm}
  \label{fig:imgs_SampleWFnet}
\end{figure*}
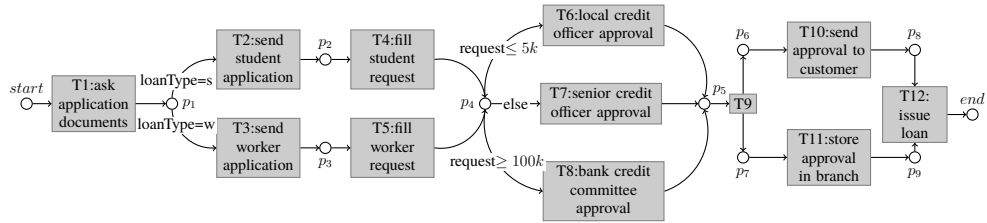
%%
% The idea of using Petri Nets to model processes starts from the observation that business processes are basically ordered set of tasks. Thus they can be mapped onto a Petri Net in the following way: tasks are modeled by transitions and precedence relations are modeled by places.
%%
Process tasks are modeled in PNs as transitions while arcs and places constraint their ordering.
For instance, the process in Figure~\ref{fig:imgs_SampleWFnet}\footnote{For the sake of simplicity we only focus here on the, so-called, happy path, that is the successful granting of the loan.}
% Consider, for instance, the process of a \emph{Loan Request} depicted in Figure~\ref{fig:imgs_SampleWFnet}\footnote{For the sake of simplicity we only focus here on the, so-called, happy path, that is the successful granting of the loan}. The 10 tasks that compose the process are modeled by transitions. This process
exemplifies how PNs can be used to model parallel and mutually exclusive choices, typical of business processes: sequences \emph{T2;T4}-\emph{T3;T5} and transitions \emph{T6-T7-T8} are indeed placed on mutually exclusive paths. Transitions \emph{T10} and \emph{T11} are instead placed on parallel paths. Finally, \emph{T9} is needed to prevent connections between nodes of the same type.

The expressivity of PNs exceeds, in the general case, what is needed to model business processes, which typically have a well-defined starting point and a well-defined ending point. This imposes syntactic restrictions on PNs, that result in the following definition of a workflow net (WF-net)~\cite{vanderaalst:1998}.
%DECISO CON SERGIO DI LASCIARE PERDERE FREE-CHOICE. COMPLICATO DA CAPIRE E NON CI SERVE.
% 
\begin{definition}[WF-net]
A PN $\tuple{P,T,F}$ is a WF-net if it has a single source place $start$, a single sink place $end$, and every place and every transition is on a path from start to end, i.e., for all $n\in P\cup T$, $(start,n)\in F^*$ and $(n,end)\in F^*$, where $F^*$ is the reflexive transitive closure of $F$.
\end{definition}

A marking in a WF-net represents the \emph{workflow state} of a single case. The semantics of a PN/WF-net, and in particular the notion of \emph{valid firing}, defines how transitions route tokens through the net so that they correspond to a process execution. 

\begin{definition}[Valid Firing]
	\label{def:Firing}
A firing of a transition $t\in T$ from $M$ to $M'$ is \emph{valid}, in symbols $M \fire{t} M'$, iff
\begin{enumerate}
  \item $t$ is enabled in $M$, i.e., $\{ p\in P\mid M(p)>0\}\supseteq \pres{t}$; and
  \item the marking $M'$ is such that for every $p\in P$:
  \begin{displaymath}
	  \small
    M'(p) =
    \begin{cases}
      M(p)-1 & \text{if $p\in \pres{t}\setminus\posts{t}$}\\
      M(p)+1  & \text{if $p\in \posts{t}\setminus\pres{t}$}\\
      M(p) & \text{otherwise}
    \end{cases}
  \end{displaymath}
\end{enumerate}
\end{definition}
Condition 1.~states that a transition is enabled if all its input places contain at least one token; 2.~states that when $t$ fires it consumes one token from each of its input places and produces one token in each of its output places. 

A \emph{case} of a WF-Net  is a sequence of valid firings 
$M_0 \fire{t_1} M_1, M_1 \fire{t_2} M_2, \ldots, M_{k-1} \fire{t_k} M_k$ 
where $M_0$ is the marking indicating that there is a single token in $start$.

\begin{definition}[$k$-safeness]
 A marking of a PN is $k$-safe if the number of tokens in all places is at most $k$. A PN is $k$-safe if the initial marking is $k$-safe and the marking of all cases is $k$-safe.
\end{definition}
From now on we concentrate on $1$-safe nets, which generalize the class of \emph{structured workflows} and are the basis for best practices in process modeling~\cite{kiepuszewskihofstedeetal:2013}. We also use safeness as a synonym of 1-safeness. 
% For safe nets the range of markings is restricted to $\{0, 1\}$.
It is important to notice that our approach can be seamlessly generalized to other classes of PNs, as long as it is guaranteed that they are $k$-safe. This reflects the fact that the process control-flow is well-defined (see~\cite{heesidorovaetal:2003}).

% Figure~\ref{fig:imgs_PNsample} depicts a WF-Nets model focused on the so-called control-flow perspective and ignores data associated with the different cases. However, the decisions to be taken in actual executions of this procedure highly depend on characteristics of the loan request (e.g., the size of the loan) and of the house (e.g., its structural status). The need to incorporate the data-flow into WF-Nets has originated some proposals for extending WF-Nets with the ability to manipulate data~\cite{sidorovastahletal:2011,de_leoni:2013}. The main idea of these approaches is to consider data elements (e.g., High liability) that can be used on transitions' pre-conditions, a language for composing these data elements into boolean guards, and (read/write) operations that specify how transitions operate on data elements.

% \begin{definition}[safeness]
%  A marking of a Petri Net is $k$-safe if the number of tokens in all places is at most $k$. A Petri Net is $k$-safe if the initial marking is $k$-safe and the marking of all cases is $k$-safe. 
% \end{definition}

% In this document we focus on 1-safeness, which is equivalent to the original safeness property as defined in~\cite{vanderaalst:1998}.\footnote{In the following we will use safeness as a synonym of 1-safeness.} Note that for safe nets the range of markings is restricted to $\{0, 1\}$.
% subsection subsection_name (end)

\vspace{5pt}
\noindent
{\textbf{Reachability on Petri Nets}.} The behavior of a PN  can be described as a transition system where states are markings and directed edges represent firings. Intuitively, there is an edge from $M_i$ to $M_{i+1}$ labeled by $t_i$ if $M_i \fire{t} M_{i+1}$
is a valid firing. Given a ``goal'' marking $M_g$, the reachability problem amounts to check if there is a path from the initial marking $M_0$ to $M_g$. Reachability on PNs (WF-nets) is of enormous importance in process verification as it allows for checking natural behavioral properties, such as satisfiability and soundness in a natural manner~\cite{Aalst97}.

\subsection{Trace repair} % (fold)
\label{sub:trace_repair}

One of the goals of process mining is to capture the  as-is  processes  as  accurately  as  possible: this is done by examining event logs that can be then exploited to perform % process discovery, conformance checking, and process enhancement, as graphically 
the tasks in Figure~\ref{fig:imgs_ProcessMiningTypes}. In many cases, however, event logs are subject to data quality problems, resulting in \emph{incorrect} or \emph{missing} events in the log. In this paper we focus on the latter issue addressing the problem of \textbf{repairing execution traces that contain missing entries} (hereafter shortened in trace repair).  

The need for trace repair is motivated in depth in~\cite{Rogge-Solti2013}, where missing entities are described as a frequent cause of low data quality in event logs, especially when the definition of the business processes integrates activities that are not supported by IT systems due either to their nature (e.g. they consist of human interactions) or to the high level of abstraction of the description, detached from the implementation. A further cause of missing events are special activities (such as transition \emph{T9} in Figure~\ref{fig:imgs_SampleWFnet}) that are introduced in the model to guarantee properties concerning e.g., the structure of the workflow or syntactic constraints, but are never executed in practice. 
% Second, IT systems may expose only partial information about their workings and the data they produce, hence preventing a complete execution tracing.

The starting point of trace repair are \emph{execution traces} and the knowledge captured in \emph{process models}. Consider for instance the model in Figure~\ref{fig:imgs_SampleWFnet} and the (partial) execution trace 
% \begin{equation}
% \label{eq:trace-nodata}
\{T3, T7\}.
% \end{equation}
By aligning the trace to the model using a replay-based approach or a planning based approach, the techniques presented in~\cite{Rogge-Solti2013} and~\cite{Di-Francescomarino-C.:2015aa} are able to exploit the events stored in the trace and the control flow specified in the model to reconstruct two possible repairs: 

\scalebox{0.9}{
\parbox{\linewidth}{%
\begin{align*}
\label{eq:trace-complete-nodata1}
\{T1, T3, T5, T7, T9, T10, T11, T12 \} \\
% \label{eq:trace-complete-nodata2}
\{T1, T3, T5, T7, T9, T11, T10, T12 \}
\end{align*}
}}

Consider now a different scenario in which the partial trace reduces to \{$T7$\}. In this case, by using the control flow in Figure~\ref{fig:imgs_SampleWFnet} we are not able to reconstruct whether the loan is a student loan or a worker loan. This increases the number of possible repairs and therefore lowers the usefulness of trace repair. Assume nonetheless that the event log conforms to the XES standard and stores some observed data attached to $T7$ (enclosed in square brackets): 
\begin{equation*}
\label{eq:trace-data}
\{T7[request=\cvalue{60k},loan=\cvalue{50k}]\}
\end{equation*}
If the process model is able to specify how transitions can read and write variables, and furthermore some constraints on how they do it, the scenario changes completely. Indeed, assume that transition \emph{T4} is empowered with the ability to write the variable $request$ with a value smaller or equal than $\cvalue{30k}$ (being this the maximum amount of a student loan). Using this fact, and the fact that the request examined by \emph{T7} is greater than $\cvalue{30k}$, we can understand that the execution trace has chosen the path of the worker loan. Moreover, if the model specifies that variable \emph{loanType} is written during the execution of \emph{T1}, when the applicant chooses the type of loan she is interested to, we are able to infer that \emph{T1} sets variable \emph{loanType} to $\cvalue{w}$. 
This example, besides illustrating the idea of trace repair, also motivates why data are important to accomplish this task, and therefore why extending repair techniques beyond the mere control flow is a significant contribution to address data quality problems in event logs.  
% subsection trace_repair (end)

\subsection{The planning language \klng} % (fold)
\label{sub:planning_language}
The main elements of action languages are \emph{fluents} and \emph{actions}. The former represent the state of the system which may change by means of actions. Causation statements describe the possible evolution of the states, and preconditions associated to actions describe which action can be executed according to the current state.
A planning problem in \klng~\cite{eiter_dlvk:2003} is specified using a Datalog-like language where fluents and actions are represented by literals (not necessarily ground). The specification includes the list of fluents, actions, initial state and goal conditions; also a set of statements specifies the dynamics of the planning domain using causation rules and executability conditions.
The semantics of \klng borrows heavily from Answer Set Programming (ASP) paradigm. In fact, the system enables the reasoning with partial knowledge and provides both weak and strong negation.

A \emph{causation rule} is a statement of the form
\begin{lstlisting}
    caused $f$ if $b_1$,$\ldots$, $b_k$, not $b_{k+1}$, $\ldots$, not $b_\ell$ 
               after $a_1$,$\ldots$, $a_m$, not $a_{m+1}$, $\ldots$, not $a_n$.
\end{lstlisting}
%where $f$ is either a classical literal over a fluent or \texttt{false} (representing absurdity), the $b_i$s are classical literals (atoms or strongly negated atoms, indicated using \texttt{-}) over fluents and background predicates and the $a_j$s are positive action atoms or classical literals over fluent and background predicates.
 The rule states that $f$ is true in the new state reached by executing (simultaneously) some actions, provided that $a_1,\ldots,a_m$ are known to hold while $a_{m+1},\ldots,a_n$ are not known to hold in the previous state (some of the $a_j$ might be actions executed on it), and $b_1,\ldots,b_k$ are known to hold while $b_{k+1},\ldots,b_\ell$ are not known to hold in the new state.
 Rules without the \lstinline|after| part are called \emph{static}.

An \emph{executability condition} is a statement of the form
\begin{lstlisting}
    executable $a$ if $b_1$,$\ldots$, $b_k$, not $b_{k+1}$, $\ldots$, not $b_\ell$.
\end{lstlisting}
%where $a$ is an action atom and $b_1,\ldots,b_\ell$ are classical literals (known as pre-conditions in the statement).
 Informally, such a condition says that the action $a$ is eligible for execution in a state, if $b_1,\ldots,b_k$ are known to hold while $b_{k+1},\ldots,b_\ell$ are not known to hold in that state.

Terms in both kind of statements could include variables (starting with capital letter) and the statements must be safe in the usual Datalog meaning w.r.t.\ the first fluent or action of the statements.

%
%Additionally, \klng provides some macros to express commonly used patterns. These are internally expanded using the above two statements together with strong and weak negation.
%%
%For example a fluent can be declared \texttt{\textbf{inertial}}, expressing the fact that its truth value does not change unless explicitly modified by an action, or it could be stated that after an action there should be total knowledge concerning a given fluent. For more details the reader should refer to~\cite{Eiter:2003qf}.

A \emph{planning domain PD} is a tuple $\tuple{D,R}$ where $D$ is a finite set of action and fluent declarations and $R$ a finite set of  rules, initial state constraints, and executability conditions.

The semantics of the language is provided in terms of a transition system where the states are ASP models (sets of atoms) and actions transform the state according to the rules. A state transition is a tuple $t = \tuple{s, A, s'}$ where $s, s'$ are states and $A$ is a set of action instances. The transition is said to be legal if the actions are executable in the first state and both states are the minimal ones that satisfy all causation rules. Semantics of plans including default negation is defined by means of a Gelfond-Lifschitz type reduction to a positive planning domain.
A sequence of state transitions $\tuple{s_0, A_1, s_1}, \ldots,\tuple{s_{n-1}, A_n, s_n}$, $n \geq 0$, is a trajectory for PD, if $s_0$ is a legal initial state of PD and all $\tuple{s_{i-1}, A_i, s_i}$, are legal state transitions of PD.

A \emph{planning problem} is a pair of planning domain PD and a ground goal
\newline
\lstinline|$g_1,\ldots, g_m$, not $g_{m+1}$, $\ldots$, not $g_n$|
% \end{lstlisting}
that is required to be satisfied at the end of the execution.

%%% Local Variables:
%%% mode: latex
%%% TeX-master: "BPAI-17-DataWFNets"
%%% save-place: t
%%% End:
%!TEX root = ./BPAI-17-DataWFNets.tex

\section{Framework}
\label{sec:ourframework}
In this section we suitably extend WF-nets to represent data and their evolution as transitions are performed.
In order for such an extension to be meaningful, i.e., allowing reasoning on data, it has to provide: (i) a model for representing data; (ii) a way to make decisions on actual data values; and (iii) a mechanism to express modifications to data.
Therefore, we enhance WF-nets with the following elements:
\begin{compactitem}
\item a set of variables taking values from possibly different domains (addressing (i));
\item queries on such variables used as transitions preconditions (addressing (ii))
\item variables updates and deletion in the specification of net transitions (addressing (iii)).
\end{compactitem}
Our framework follows the approach of state-of-the-art WF-nets with data~\cite{sidorovastahletal:2011,de_leoni:2013}, from which it borrows the above concepts, extending them by allowing reasoning on actual data values as better explained in Section~\ref{sec:related_works}. 

%Throughout the section we use the WF-net in Figure~\ref{fig:imgs_SampleWFnet} as a running example by extending it with data.
Throughout the section we use the WF-net in Figure~\ref{fig:imgs_SampleWFnet} extended with data as a running example.

\subsection{Data Model}\label{sec:datamodel}
As our focus is on trace repair, we follow the data model of the IEEE XES standard for describing logs,
%~\cite{xes},
 which represents data as a set of variables. Variables take values from specific sets on which a partial order can be defined. As customary, we distinguish between the data model, namely the intensional level, from a specific instance of data, i.e., the extensional level.

\begin{definition}[Data model]
A \emph{data model} is a tuple $\D = (\V, \Delta, \domf, \ordf)$ where:
\begin{compactitem}
\item $\V$ is a possibly infinite set of variables;
\item $\Delta = \set{\Delta_1, \Delta_2, \ldots}$ is a possibly infinite set of domains (not necessarily disjoint);
\item $\domf: \V \rightarrow \Delta$ is a total and surjective function which associates to each variable $v$ its domain $\Delta_i$;
\item $\ordf$ is a partial function that, given a domain $\Delta_i$, if $\ordf(\Delta_i)$ is defined, then it returns a \emph{partial order} (reflexive, antisymmetric and transitive) $\leq_{\Delta_i} \subseteq \Delta_i \times \Delta_i$.
\end{compactitem}
\end{definition}

A data model for the loan example is $\V = \{loanType$, $request, loan\}$, $\domf(loanType)=\set{\cvalue{w}, \cvalue{s}}$, $\domf(request) = \Nat$, $\domf(loan) = \Nat$, with $\domf(loan)$ and $\domf(loanType)$ being total ordered by the natural ordering $\leq$ 
%among
 in $\Nat$.

An actual instance of a data model is simply a partial function associating values to variables.
\begin{definition}[Assignment]
Let $\D = \tuple{\V, \Delta, \domf, \ordf}$ be a data model. An \emph{assignment} for variables in $\V$ is a \emph{partial} function $\assign: \V \rightarrow \bigcup_i\Delta_i$ such that for each $v \in \V$, if $\assign(v)$ is defined, i.e., $v \in \img(\eta)$ where $\img$ is the image of $\eta$, then we have $\assign(v) \in \domf(v)$.
\end{definition}

We now define our boolean query language, which notably allows for equality and comparison. As will become clearer in Section~\ref{sec:our-net}, queries are used as \emph{guards}, i.e., preconditions for the execution of transitions.

\begin{definition}[Query language - syntax]
Given a data model, the language $\guardlang$ is the set of formulas $\Phi$ inductively defined according to the following grammar:
$$\Phi \quad := \quad true \mid \deff(v) \mid t_1 = t_2 \mid t_1 \leq t_2 \mid \neg \Phi_1 \mid \Phi_1 \land \Phi_2$$
where $v \in \V$ and $t_1, t_2 \in \V \cup \bigcup_i \Delta_i$.
\end{definition}

Examples of queries of the loan scenarios are $request \leq \cvalue{5k}$ or $loanType=\cvalue{w}$.
Given a formula $\Phi$ and an assignment $\assign$, we write $\Phi[\assign]$ for the formula $\Phi$ where each occurrence of variable $v \in \img(\assign)$ is replaced by $\assign(v)$.

\begin{definition}[Query language - semantics]
Given a data model $\dmodel$, an assignment $\assign$ and a query $\Phi \in \guardlang$ we say that $\dmodel, \assign$ satisfies $\Phi$, written $D, \assign \models \Phi$ inductively on the structure of $\Phi$ as follows:
\begin{compactitem}
\item $\dmodel, \assign \models true$;
\item $\dmodel, \assign \models \deff(v)$ iff $v \in \img(\assign)$;
\item $\dmodel, \assign \models t_1 = t_2$ iff $t_1[\assign], t_2[\assign] \not \in \V$ and $t_1[\assign] \equiv t_2[\assign]$;
\item $\dmodel, \assign \models t_1 \leq t_2$ iff $t_1[\assign], t_2[\assign] \in \Delta_i$ for some $i$ and $\ordf(\Delta_i)$ is defined and $t_1[\assign] \leq_{\Delta_i} t_2[\assign]$;
\item $\dmodel, \assign \models \neg \Phi$ iff it is not the case that $\dmodel, \assign \models \Phi$;
\item $\dmodel, \assign \models \Phi_1 \land \Phi_2$ iff $\dmodel, \assign \models \Phi_1$ and $\dmodel, \assign \models \Phi_2$.
\end{compactitem} 
\end{definition}

Intuitively, $\deff$ can be used to check if a variable has an associated value or not (recall that assignment $\eta$ is a partial function); equality has the intended meaning and $t_1 \leq t_2$ evaluates to true iff $t_1$ and $t_2$ are values belonging to the same domain $\Delta_i$, such a domain is ordered by a partial order $\leq_{\Delta_i}$ and $t_1$ is actually less or equal than $t_2$ according to $\leq_{\Delta_i}$.

\subsection{Data-aware net}\label{sec:our-net}

We now combine the data model with a WF-net and formally define how transitions are guarded by queries and how they update/delete data. The result is a Data-AWare net (\ournet) that incorporates aspects (i)--(iii) described at the beginning of Section~\ref{sec:ourframework}.

\begin{definition}[\ournet]\label{def:our-net}
  A \emph{\ournet} is a tuple $\langle$$\dmodel,$ $\nmodel,$  $\writef,$~$\guardf$$\rangle$ where:
\begin{compactitem}
    \item $\nmodel=\tuple{P,T,F}$ is a WF-net;
    \item $\dmodel=\tuple{\V, \Delta, \domf, \ordf}$ is a data model;
    \item $\writef: T \mapsto (\V' \mapsto 2^{\domf(\varset)})$, where $\varset'\subseteq\varset$, $\domf(\varset)=\bigcup_{v \in V} \domf(v)$ and $\writef(t)(v)\subseteq\domf(v)$ for each $v\in \varset'$, is a function that associates each transition to a (\emph{partial}) function mapping variables to a finite subset of their domain.
    \item $\guardf: T \mapsto \guardlang$ is a function that associates a guard to each transition.
  \end{compactitem}
\end{definition}

Function $\guardf$ associates a guard, namely a query, to each transition. The intuitive semantics is that a transition $t$ can fire if its guard $\guardf(t)$ evaluates to true (given the current assignment of values to data). Examples are $\guardf(T6)=request \leq \cvalue{5k}$ and $\guardf(T8)= \neg(request \leq \cvalue{99999})$.
Function $\writef$ is instead used to express how a transition $t$ modifies data: after the firing of $t$, each variable $v \in \V'$ can take any value among a specific finite subset of $\domf(v)$. We have three different cases:
\begin{compactitem}
 \item $\emptyset\subset \writef(t)(v) \subseteq \domf(v)$: $t$ nondeterministically assigns a value from $\writef(t)(v)$ to $v$;
  \item $\writef(t)(v) = \emptyset$: $t$ deletes the value of $v$ (hence making $v$ undefined);
  \item $v \not\in dom(\writef(t))$: value of $v$ is not modified by $t$.
\end{compactitem}
Notice that by allowing $\writef(t)(v) \subseteq \domf(v)$ in the first bullet above we enable the specification of restrictions for specific tasks. E.g., $\writef(T4):\set{request} \mapsto \set{\cvalue{0} \ldots \cvalue{30k}}$ says that $T4$ writes the $request$ variable and intuitively that students can request a maximum loan of $\cvalue{30k}$, while $\writef(T5):\set{request} \mapsto \set{\cvalue{0} \ldots \cvalue{500k}}$ says that workers can request up to $\cvalue{500k}$.

The intuitive semantics of $\guardf$ and $\writef$ is formalized next. We start from the definition of \ournet state, which includes both the state of the WF-net, namely its marking, and the state of data, namely the assignment. We then extend the notions of state transition and valid firing. 

\begin{definition}[\ournet state]
A \emph{state} of a \ournet $\tuple{\dmodel, \nmodel, \writef, \guardf}$ is a pair $(\mrk,\assign)$ where $\mrk$ is a marking for $\tuple{P,T,F}$ and $\assign$ is an assignment for $\dmodel$.
\end{definition}

\begin{definition}[\ournet Valid Firing]
Given a \ournet $\tuple{\dmodel, \nmodel, \writef, \guardf}$, a firing of a transition $t\in T$ is a \emph{valid firing} from $(\mrk,\assign)$ to $(\mrk',\assign')$, written as $(\mrk,\assign)\fire{t}(\mrk',\assign')$, iff conditions 1.\ and 2.\ of Def.~\ref{def:Firing} holds for $\mrk$ and $\mrk'$, i.e., it is a WF-Net valid firing, and
  \begin{compactenum}
  % \item $t$ is enabled in $\mrk$, i.e., $\{ p\in P\mid \mrk(p)>0\}\supseteq \pres{t}$ and
  \item $\dmodel, \assign \models \guardf(t)$, 
  % \item $\mrk'$ is such that for every $p\in P$:
  % \begin{displaymath}
  %   \mrk'(p) =
  %   \begin{cases}
  %     \mrk(p)-1 & \text{if $p\in \pres{t}\setminus\posts{t}$}\\
  %     \mrk (p)+1  & \text{if $p\in \posts{t}\setminus\pres{t}$}\\
  %     \mrk (p) & \text{otherwise}
  %   \end{cases}
  % \end{displaymath}
  \item assignment $\assign'$ is such that, if $\textsc{wr} = \set{v \mid \writef(t)(v)\neq\emptyset}$, $\textsc{del}  = \set{ v \mid \writef(t)(v)=\emptyset}$:
  \begin{compactitem}
  	\item its domain $dom(\assign') = dom(\assign)\cup \textsc{wr} \setminus \textsc{del}$;
        \item for each $v\in dom(\assign')$:
        \begin{displaymath}
          \assign'(v) =
          \begin{cases}
            d \in \writef(t)(v) & \text{if $v\in \textsc{wr}$}\\
            \assign(v)  & \text{otherwise.}
          \end{cases}
        \end{displaymath}
  \end{compactitem}
\end{compactenum}
\end{definition}

Condition 1.~and 2.\ extend the notion of valid firing of WF-nets imposing additional pre- and postconditions on data, i.e., preconditions on $\assign$ and postconditions on $\assign'$. Specifically, 1.\ says that for a transition $t$ to be fired its guard $\guardf(t)$ must be satisfied by the current assignment $\assign$. Condition 2.\ constrains the new state of data: the domain of $\assign'$ is defined as the union of the domain of $\eta$ with variables that are written ($\textsc{wr}$), minus the set of variables that must be deleted ($\textsc{del}$). Variables in $dom(\assign')$ can indeed be grouped in three sets depending on the effects of $t$: (i) $\textsc{old} = dom(\eta) \setminus \textsc{wr}$: variables whose value is unchanged after $t$; (ii) $\textsc{new} = \textsc{wr} \setminus dom(\eta)$: variables that were undefined but have a value after $t$; and (iii) $\textsc{overwr} = \textsc{wr} \cap dom(\eta)$: variables that did have a value and are updated with a new one after $t$.
The final part of condition 2.\ says that each variable in $\textsc{new} \cup \textsc{overwr}$ takes a value in $\writef(t)(v)$, while variables in $\textsc{old}$ maintain the old value $\eta(v)$.

A \emph{case} of a \ournet is defined as 
%those 
 a case of a WF-net, with the only difference that the assignment $\assign_0$ of the initial state $(M_0, \assign_0)$ is empty, i.e., $dom(\assign_0)=\emptyset$.

%%% Local Variables:
%%% mode: latex
%%% TeX-master: "BPAI-17-DataWFNets"
%%% save-place: t
%%% End:

%!TEX root = ./BPAI-17-DataWFNets.tex

\section{Trace repair as reachability}\label{sec:encoding:traces}

In this section we provide the intuition behind our technique for solving the trace repair problem via reachability. Full details and proofs are contained in\onlypaper{~\cite{additional:ICAPS:2017}}\onlytechrep{ Appendices~\ref{app:preliminaries}--\ref{app:encoding}}. 

A \emph{trace} is a sequence of observed \emph{events}, each with a payload including the transition it refers to and its effects on the data, i.e., the variables updated by its execution.
%A \emph{trace} is a sequence of observed \emph{events} that include the transitions and their effects, i.e., the variables that are updated by their executions. % As described in Section~\ref{sec:ourframework} a variable update can be either the writing of a new value or the removal of its value (undefined).
%
Intuitively, a \ournet case is \emph{compliant} w.r.t.\ a trace if it contains all the occurrences of the transitions observed in the trace %(agreeing on the actual variable updates)
(with the corresponding variable updates) in the right order.

As a first step, we assume without loss of generality that \ournet models start with a special transition $start_t$ and terminate with a special transition $end_t$. % Both have a single input and output places: $start_t$ input is the start place, and $end_t$ output place is the sink. These transitions have true guards (always satisfiable), don't modify variables, and are not observable. Moreover, start and sink places shouldn't have any incoming and outgoing edge respectively.
Every process can be reduced to such a structure as informally illustrated in the left hand side of Figure~\ref{fig:trace-compilation} by arrows labeled with (1). Note that this change would not modify the behavior of the net: any sequence of firing valid for the original net can be extended by the firing of the additional transitions and vice versa.
%%
% f the process does not have this structure, it can be reduced to it by replacing the original start/sink places with a normal place and introducing new transitions as illustrated in the left hand side of Figure~\ref{fig:trace-compilation} by arrows labeled with (1). This change would not modify the behaviour of the net: any sequence of firing valid for the original workflow can be extended by the firing of the additional transitions and viceversa.
%%

\begin{figure*}[t]
 \begin{center}
  \includegraphics[width=.9\linewidth]{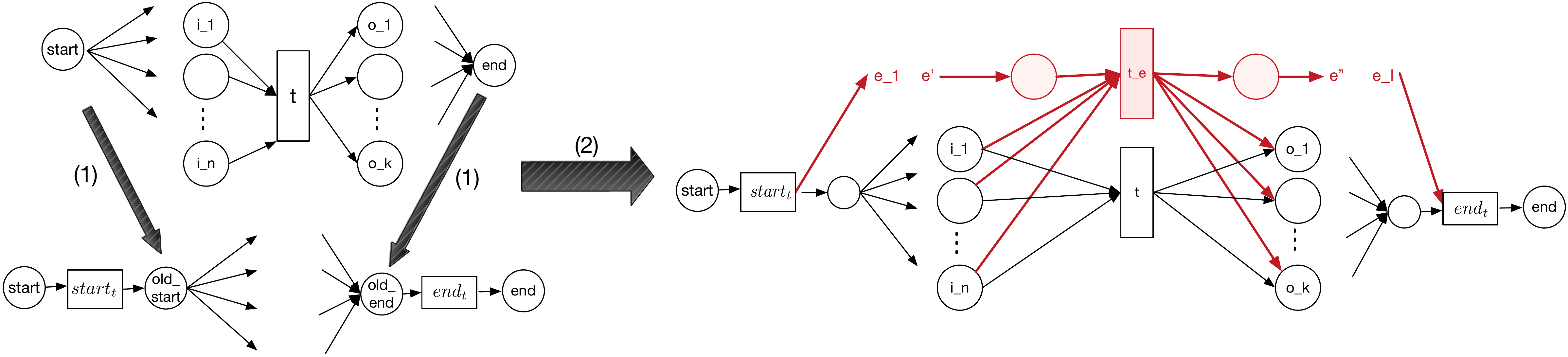}
\end{center}
\vspace{-.5cm}
 \caption{Outline of the trace ``injection''}
 \vspace{-.5cm}
 \label{fig:trace-compilation}
\end{figure*}

Next, we illustrate the main idea behind our approach by means of the right hand side of Figure~\ref{fig:trace-compilation}: we consider the observed events as transitions (in red) and we suitably ``inject'' them in the original \ournet. By doing so, we obtain a new model where, intuitively, tokens are forced to activate the red transitions of \ournet, when events are observed in the trace. When, instead,  there is no red counterpart, i.e., there is missing information in the trace, the tokens move in the black part of the model.
%By doing so, we obtain a new model where, intuitively, tokens are forced to activate the red transitions of \ournet, as those represent happened events, while we leave them to move in the black part, that is the original process, when there is no red counterpart, meaning that there is missing information in the trace.
%%
The objective is then to perform reachability for the final marking (i.e., to have one token in the $end$ place and all other places empty) over such a new model in order to obtain all and only the possible repairs for the partial trace.

More precisely, for each event $e$ with a payload including transition $t$ and some effect on variables we introduce a new transition $t_e$ in the model such that:
\begin{compactitem}
  \item $t_e$ is placed in parallel with the original transition $t$;
  \item $t_e$ includes an additional input place connected to the preceding event and an additional output place which connects it to the next event;
  \item $\guardf(t_e) = \guardf(t)$ and
  \item $\writef(t_e)$ specifies exactly the variables and the corresponding values updated by the event, i.e.\ if the event set the value of $v$ to $d$, then $\writef(t_e)(v) = \set{d}$; if the event deletes the variable $v$, then $\writef(t_e)(v) = \emptyset$.
  %i.e.\ if the event wrote value $d$ in $v$ then $\writef(t_e)(v) = \set{d}$, while for deleted variables $\writef(t_e)(v) = \emptyset$.
  % \item if the original transition is \emph{always observable} then its guard is set to false in order to prevent its execution.
\end{compactitem}

% The resulting so-called \emph{trace workflow} modify the original workflow so that the only cases (i.e., sequences of valid firings) of this new workflow are those that satisfy the observed trace. This, so called, \emph{trace workflow} includes new transitions (filled transitions in Figure~\ref{fig:trace-compilation}) corresponding to the events observed in the trace and making sure that they can be activated only in the right order and behave as the ``original'' transition.
% %
% For each event $e$ corresponding to a transition $t$ setting and deleting the values of some variables we introduce a new transition $t_e$ such that:
% \begin{compactitem}
%   \item it is placed in ``parallel'' with the original transition;
%   \item it includes an additional input place connected to the output place of the preceding event, and an additional output place which connects it to the next event;
%   \item its guard is the same as the original transition; and
%   \item the $\writef(t_e)$ specifies exactly the variables and values of the event, i.e.\ for variables $v$ set to value $d$ then $\writef(t_e)(v) = \set{d}$, while for deleted ones $\writef(t_e)(v) = \emptyset$.
%   % \item if the original transition is \emph{always observable} then its guard is set to false in order to prevent its execution.
% \end{compactitem}

Given a trace $\tau$ and a \ournet $W$, it is easy to see that the resulting \emph{trace workflow} (indicated as $W^\tau$) is a strict extension of $W$ (only new nodes are introduced) and, since all newly introduced nodes are in a path connecting the start and sink places, it is a \ournet, whenever the original one is a \ournet net.
%%
% Besides, $W^\tau$ is such that any of its terminating cases is compliant with $\tau$. % Intuitively this is due to the fact each observed event in $\tau$ is associated to one of the new places and these new places mark the fact that the transition observed in the event is part of the case. \boh{Tokens can only enter in these new places once and before ending up in a place enabling one of the event transitions all the preceding ones (w.r.t.\ the trace) had to be fired.}

We now prove the soundness and completeness of the approach by showing that: \begin{inparaenum}[(1)] \item all cases of  $W^\tau$ are compliant with $\tau$; \item each case of $W^\tau$ is also a case of $W$ and \item if there is a case of $W$ compliant with $\tau$, then that is also a case for $W^\tau$\end{inparaenum}.

Property (1) is ensured by construction. For (2) and (3) we need to relate cases from $W^\tau$ to the original \ournet $W$. We indeed introduce a projection function $\wkfProj_\tau$ that maps elements from cases of the enriched \ournet to cases of elements from the original \ournet. % To simplify the notation we will use the same name to indicate mappings from states, firings and cases.
Essentially, $\wkfProj_\tau$ maps newly introduced transitions $t_e$ to the corresponding transitions in event $e$, i.e., $t$, and also projects away the new places in the markings. % Note that since $W^\tau$ extends the original workflow then the only lost information concerns the new places.
Given that the structure of $W^\tau$ is essentially the same as that of $W$ with additional copies of transitions that are already in $W$, it is not surprising that any case for $W^\tau$ can be replayed on $W$ by mapping the new transitions $t_e$ into the original ones $t$, as shown by the following:
\begin{lemma}
  If $C$ is a case of $W^\tau$ then $\wkfProj_\tau(C)$ is a case of $W$.
\end{lemma}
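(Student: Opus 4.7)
The plan is to proceed by induction on the length $n$ of the case $C = (M_0, \eta_0)\fire{t_1}(M_1,\eta_1)\cdots\fire{t_n}(M_n,\eta_n)$ in $W^\tau$, showing that at each step the projected sequence $\wkfProj_\tau(C)$ is a case of $W$. For the base, I would observe that $(M_0, \eta_0)$ places a single token on $start$ with empty assignment, and since $\wkfProj_\tau$ merely removes the newly introduced event-places (which are empty in the initial marking of $W^\tau$ anyway) and leaves $start \in P$ untouched, the projection gives exactly the initial state of $W$.

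For the inductive step the goal is to show that every valid firing $(M_i,\eta_i)\fire{t_{i+1}}(M_{i+1},\eta_{i+1})$ in $W^\tau$ projects to a valid firing in $W$. I would split into two cases according to whether $t_{i+1}$ is an original transition of $W$ or an injected transition $t_e$. If $t_{i+1}$ is original, then both $\pres{t_{i+1}}$ and $\posts{t_{i+1}}$ lie entirely in the original places (no new arcs were added to existing transitions), and $\guardf$, $\writef$ are unchanged; hence the firing restricted to the original places is literally the same firing in $W$, and the projection preserves enabledness, the token update in Def.~\ref{def:Firing}, and the data conditions. If instead $t_{i+1} = t_e$ corresponds to an event $e$ with original transition $t$, then by construction $t_e$ is placed in parallel with $t$, so the input/output places of $t_e$ lying in the original net coincide with $\pres{t}$ and $\posts{t}$; the extra input/output places of $t_e$ are discarded by $\wkfProj_\tau$. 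Moreover $\guardf(t_e) = \guardf(t)$, so the guard condition carries over to $\wkfProj_\tau(\eta_i)=\eta_i$, and $\writef(t_e)(v)\subseteq \writef(t)(v)$ (it picks one admissible value or deletion), so the data-update clause of the \ournet valid firing is satisfied by $t$ in $W$ with the same assignment $\eta_{i+1}$.

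The remaining technical point is to check that enabledness lifts correctly: $t_{i+1}$ being enabled in $M_i$ means $M_i(p)\ge 1$ for every $p\in \pres{t_{i+1}}$, which in the projected marking $\wkfProj_\tau(M_i)$ still holds for all $p$ in the original preset (whether $t_{i+1}=t$ or $t_{i+1}=t_e$). Safeness of $W^\tau$, which follows from the fact that the injection keeps the net $1$-safe (each added place lies on a single linear path of event-transitions), guarantees that the update rule of Def.~\ref{def:Firing} applied to $\wkfProj_\tau(M_i)$ yields exactly $\wkfProj_\tau(M_{i+1})$.

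The main obstacle I anticipate is the bookkeeping around the parallel placement of $t_e$: one must verify that after projection the token balance on the original places is identical to what firing $t$ would produce, which relies on the precise definition of the injection (the extra input/output places are fresh and independent of $P$). Once this is made explicit, the inductive step is essentially a straightforward case analysis, and composing the per-step claims yields that $\wkfProj_\tau(C)$ is a legitimate case of $W$.
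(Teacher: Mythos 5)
Your proof is correct and follows essentially the same route as the paper's: both reduce the claim to checking that the initial state projects to the initial state of $W$ and that each individual firing projects to a valid firing of $W$, using the structural facts that the preset/postset of an injected $t_e$ restricted to $P$ coincide with those of the original $t$, that guards coincide, and that $\writef(t_e)(v)\subseteq\writef(t)(v)$. Your split between original and injected transitions is merely a more explicit organization of what the paper states uniformly via $\pres{\wkfProj_\tau(t_i)}=\pres{t_i}\cap P$ and $\posts{\wkfProj_\tau(t_i)}=\posts{t_i}\cap P$, and your appeal to safeness of $W^\tau$ in the marking-update step is unnecessary (the token balance on original places follows directly from validity of the firing in $W^\tau$), but neither point affects correctness.
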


This lemma proves that whenever we find a case on $W^\tau$, then it is an example of a case on $W$ that is compliant with $\tau$, i.e., (2). However, to reduce the original problem to reachability on \ournet, we need to prove that \emph{all} the $W$ cases compliant with $\tau$ can be replayed on $W^\tau$, that is, (3).
In order to do that, we can build a case for $W^\tau$ starting from the compliant case for $W$, by substituting the occurrences of firings corresponding to events in $\tau$ with the newly introduced transitions. %Their presence in the original case in the right order is guaranteed by the fact that the case is compliant; therefore it is only a matter of modifying the markings in order to take into account the additional places as well.
The above results pave the way to the following: 

\begin{theorem}
	\label{Th:traceworkflow}
  Let $W$ be a \ournet and $\tau = (e_1,\ldots, e_n)$ a trace; then $W^\tau$ characterises all and only the cases of $W$ compatible with $\tau$. That is
  \begin{compactitem}
    \item[$\Rightarrow$] if $C$ is a case of $W^\tau$ containing $t_{e_n}$ then $\wkfProj_\tau(C)$ is compatible with $\tau$; and 
    \item[$\Leftarrow$] if $C$ is a case of $W$ compatible with $\tau$, then there is a case $C'$ of $W^\tau$ s.t.\ $\wkfProj_\tau(C') = C$.
  \end{compactitem}
\end{theorem}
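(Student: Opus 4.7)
The plan is to prove the two directions separately, leveraging the already-established Lemma that $\wkfProj_\tau$ maps any case of $W^\tau$ into a case of $W$, and exploiting the specific shape of the ``trace injection'': the newly introduced transitions $t_{e_1},\ldots,t_{e_n}$ are chained by a fresh sequence of extra places $q_0,q_1,\ldots,q_n$ (with $q_0$ marked initially together with $start$), so that $t_{e_i}$ can fire only after $t_{e_{i-1}}$ has fired.

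For the $(\Rightarrow)$ direction, assume $C$ is a case of $W^\tau$ that contains a firing of $t_{e_n}$. By the Lemma, $\wkfProj_\tau(C)$ is already a case of $W$, so it suffices to establish compatibility with $\tau$. I would argue that the auxiliary chain of places $q_0,\ldots,q_n$ forces a unique order of occurrence: since each $q_{i-1}$ is in the preset of $t_{e_i}$ only and each $q_i$ is in the postset of $t_{e_i}$ only, and $q_0$ carries the single initial token of the chain, a simple induction on $i$ shows that in $C$ every $t_{e_j}$ with $j\le n$ must fire exactly once, and in the order $t_{e_1},\ldots,t_{e_n}$. Next, by definition of $t_{e_i}$ we have $\guardf(t_{e_i})=\guardf(t_i)$ and $\writef(t_{e_i})$ encodes precisely the payload of $e_i$; hence the valid-firing conditions guarantee that the data effects recorded in $e_i$ are realised at the corresponding step. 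Applying $\wkfProj_\tau$ replaces each $t_{e_i}$ by $t_i$ and leaves the underlying markings and assignments unchanged on the original net, so the projected case contains $t_1,\ldots,t_n$ in order with the observed effects, which is exactly compatibility with $\tau$.

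For the $(\Leftarrow)$ direction, given a case $C=(M_0,\assign_0)\fire{u_1}\cdots\fire{u_k}(M_k,\assign_k)$ of $W$ compatible with $\tau$, I would pick the distinguished subsequence of firings $u_{i_1},\ldots,u_{i_n}$ witnessing compatibility (so $u_{i_j}=t_j$ with payload matching $e_j$) and construct $C'$ by replacing each such $u_{i_j}$ with $t_{e_j}$, leaving all other firings untouched. The bulk of the work is then to verify that the resulting sequence is a legal trajectory in $W^\tau$. For the control-flow part, since $t_{e_j}$ has the same preset/postset as $t_j$ on the original places, plus the extra chain places $q_{j-1},q_j$, I would show by induction on the step index that after each firing the marking on $W^\tau$ coincides with that of $C$ on the original places and marks exactly one $q_{j-1}$ at the point where $t_{e_j}$ is about to fire; this exploits $1$-safeness and the fact that $q_{j-1}$ receives its token exactly when $t_{e_{j-1}}$ fires. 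For the data part, the guard $\guardf(t_{e_j})=\guardf(t_j)$ is satisfied because it was satisfied in $C$, and the write set $\writef(t_{e_j})$ is a singleton per variable that matches the payload of $e_j$, which the compatibility hypothesis guarantees is consistent with $\assign_{i_j}'$. Finally, applying $\wkfProj_\tau$ to $C'$ undoes the substitution, giving back $C$.

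The main obstacle I expect is the careful bookkeeping in the $(\Leftarrow)$ direction: one must show simultaneously that (a) the auxiliary chain tokens are always in the correct place for the next $t_{e_j}$ to fire (which relies on the compatibility-guaranteed ordering and on $1$-safeness preventing interference), and (b) the data write sets of the $t_{e_j}$, being possibly more restrictive than those of the original $t_j$, can still be chosen to match $\assign_{i_j}'$ exactly --- this is where the definition of trace compatibility (the event payload records the actual values written) is essential and must be invoked explicitly. Once these two invariants are carried through the induction, the two equalities $\wkfProj_\tau(C')=C$ and ``$C'$ is a case of $W^\tau$'' follow immediately, closing the theorem.
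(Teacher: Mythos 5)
Your proof takes essentially the same route as the paper's: the forward direction rests on the same two facts (the chain of auxiliary places forces each $t_{e_j}$ to fire at most once and in order, and the singleton write sets of the $t_{e_j}$ realise exactly the observed payloads), and the backward direction is the same explicit substitution of the $\gamma$-selected firings by the injected transitions followed by a validity check. The only cosmetic deviation is that you mark the first chain place initially, whereas the paper has $start_t$ deposit that token; this does not affect the argument.
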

Theorem~\ref{Th:traceworkflow} provides the main result of this section and is the basis for the reduction of the trace repair for $W$ and $\tau$ to the reachability problem for $W^\tau$. In fact, by enumerating all the cases of $W^\tau$ reaching the final marking (i.e.\ a token in $end$) we can provide all possible repairs for the partial observed trace.
Moreover, the transformation generating $W^\tau$ is preserving the safeness properties of the original workflow:
\begin{lemma}
  Let $W$ be a \ournet and $\tau$ a trace of $W$. If $W$ is $k$-safe then $W^\tau$ is $k$-safe as well.
\end{lemma}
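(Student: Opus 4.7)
The plan is to split the places of $W^\tau$ into two disjoint groups and argue $k$-safeness for each group separately: the \emph{original places} inherited from $W$, and the \emph{chain places} newly introduced by the trace injection construction (one fresh place between every consecutive pair of events in $\tau$, including the initial and final ones).

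For the original places, I would leverage the projection lemma stated just above Theorem~\ref{Th:traceworkflow}. Fix an arbitrary reachable marking $(M,\eta)$ of $W^\tau$ and a case $C$ of $W^\tau$ whose final state is $(M,\eta)$. By that lemma, $\wkfProj_\tau(C)$ is a case of $W$; and since $\wkfProj_\tau$ only relabels each injected transition $t_e$ back to the original $t$ and discards the chain places (leaving the original places untouched), the marking reached by $\wkfProj_\tau(C)$ on original places coincides with $M$ restricted to $P$. By hypothesis $W$ is $k$-safe, hence every reachable marking of $W$ assigns at most $k$ tokens per place, so $M(p) \le k$ for every original place $p$.

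For the chain places $q_0,\dots,q_n$, I would argue by direct induction on the length of a firing sequence of $W^\tau$. The initial marking places exactly one token in $q_0$ and none elsewhere along the chain. The only transitions that touch chain places are the newly introduced $t_{e_i}$, and by construction $t_{e_i}$ has $q_{i-1}$ as its unique chain-input and $q_i$ as its unique chain-output. Hence each such firing preserves the invariant that the total token count $\sum_{j=0}^{n} M(q_j)$ equals $1$. In particular every $q_j$ carries at most one token, and since $W$ is $k$-safe with a token initially in $start$ we have $k \ge 1$, so the chain places are trivially $k$-safe.

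The main conceptual obstacle is just the mild mismatch that the projection lemma is stated for cases (firing sequences) rather than markings, so I have to phrase the first part as ``pick any case ending at the marking in question,'' but this is straightforward because reachable markings are by definition endpoints of cases. Combining the two bullets yields $k$-safeness on all of $W^\tau$, and safeness of the initial marking is immediate from the construction together with $k$-safeness of the initial marking of $W$.
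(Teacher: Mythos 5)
Your proposal is correct and follows essentially the same route as the paper's proof: the same split into original places (handled via the projection lemma, since $\wkfProj_\tau(C)$ is a case of $W$ whose markings agree with those of $C$ on $P$) and newly introduced chain places (handled by a token-conservation argument, which the paper isolates as a separate auxiliary lemma bounding $\sum_{p\in P^\tau\setminus P} M_i(p)$ by $M_0(start)$). The one small slip is that in the paper's construction the initial marking puts its single token in $start$, not in the first chain place, so your chain invariant should read $\sum_{j} M(q_j)\le 1$ (the token enters the chain only when $start_t$ fires and leaves it when $end_t$ fires) rather than $=1$; the conclusion that each chain place carries at most one token, hence at most $k$, is unaffected.
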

This is essential to guarantee the decidability of the reasoning techniques described in the next section.
% The preservation of $k$-safeness, and in particular of 1-safeness, \boh{is important in order to employ the proposed automated reasoning techniques described in Section~\ref{sec:encoding:planning}.}

%%% Local Variables:
%%% mode: latex
%%% TeX-master: "BPAI-17-DataWFNets"
%%% save-place: t
%%% End:
%!TEX root = ./BPAI-17-DataWFNets.tex

\section{Reachability as a planning problem}
\label{sec:encoding:planning}
In this section we exploit the similarity between workflows and planning domains in order to describe the evolution of a \ournet by means of a planning language. Once the original workflow behaviour has been encoded into an equivalent planning domain, we can use the automatic derivation of plans with specific properties to solve the reachability problem.
In our approach we introduce a new action for each transition (to ease the description we will use the same names) and represent the status of the workflow -- marking and variable assignments -- by means of fluents.
Although their representation as dynamic rules is conceptually similar we will separate the description of the encoding by considering first the behavioural part (the WF-net) and then the encoding of data (variable assignments and guards).

\subsection{Encoding \ournet behaviour}

Since we focus on 1-safe WF-nets the representation of markings is simplified by the fact that each place can either contain 1 token or no tokens at all. This information can be represented introducing a propositional fluent for each place, true iff the corresponding place holds a token.
Let us consider $\tuple{P,T,F}$ the \emph{safe WF-net} component of a \ournet system. The declaration part of the planning domain will include:
\begin{compactitem}
\item a fluent declaration $p$ for each place $p\in P$;
\item an action declaration $t$ for each task $t\in T$.
\end{compactitem}
Since each transition can be fired\footnote{Guards will be introduced in the next section.} only if each input place contains a token, then the corresponding action can be executed when place fluents are true: for each task $t\in T$, given $\{i^t_1, \ldots, i^t_n\} = \pres{t}$, we include the executability condition:
  \begin{lstlisting}
    executable $t$ if $i^t_1, \ldots, i^t_n$.
  \end{lstlisting}
As valid firings are sequential, namely only one transition can be fired at each step, we disable concurrency in the planning domain introducing the following rule for each pair of tasks $t_1,t_2\in T$\footnote{For efficiency reasons we can relax this constraint by disabling concurrency only for transitions sharing places or updating the same variables. This would provide shorter plans.}
  \begin{lstlisting}
    caused false after $t_1$, $t_2$.
  \end{lstlisting}
Transitions transfer tokens from input to output places. Thus the corresponding actions must clear the input places and set the output places to true. This is enforced by including  
  \begin{lstlisting}
    caused -$i^t_1$ after $t$.  $\ldots$ caused -$i^t_n$ after $t$.
    caused $o^t_1$ after $t$.  $\ldots$ caused $o^t_k$ after $t$.
  \end{lstlisting}
for each task $t\in T$ and $\{i^t_1, \ldots, i^t_n\} = \pres{t}\setminus\posts{t}$, $\{o^t_1, \ldots, o^t_k\} = \posts{t}$.
Finally, place fluents should be inertial since they preserve their value unless modified by an action. This is enforced by adding for each $p\in P$
  \begin{lstlisting}
    caused p if not -$p$ after $p$.
  \end{lstlisting}

\vspace{5pt}
\noindent
{\textbf{Planning problem}.} 
Besides the domain described above, a planning problem includes an initial state, and a goal.
In the initial state the only place with a token is the source:
  \begin{lstlisting}
    initially: $start$.
  \end{lstlisting}
The formulation of the goal depends on the actual instance of the reachability problem we need to solve. 
% In general it is a specific marking.
The goal corresponding to the state in which the only place with a token is $end$ is written as:
  \begin{lstlisting}
    goal: $end$, not $p_1$, $\ldots$, not $p_k$?
  \end{lstlisting}
where $\{p_1, \ldots, p_k\} = P\setminus\{ end \}$.

\subsection{Encoding data}

For each variable $v\in \varset$ we introduce a fluent unary predicate $\kvar{v}$ holding the value of that variable. Clearly, $\kvar{v}$ predicates must be functional and have no positive instantiation for undefined variables.

We also introduce auxiliary fluents to facilitate the writing of the rules. Fluent $\kvardef{v}$ indicates whether the $v$ variable is \emph{not} undefined -- it is used both in tests and to enforce models where the variable is assigned/unassigned. The fluent $\kvarchange{v}$ is used to inhibit inertia for the variable $v$ when its value is updated because of the execution of an action.

\ournet includes the specification of the set of values that each transition can write on a variable. This information is static, therefore it is included in the background knowledge by means of a set of unary predicates $\kvardom{v,t}$ as a set of facts:
  \begin{lstlisting}
    $\kvardom{v,t}$(e).
  \end{lstlisting}
  for each $v\in\varset$, $t\in T$, and $e\in \writef(t)(v)$.

\vspace{5pt}
\noindent
{\textbf{Constraints on variables}.} 
% \paragraph{Constraints on variables}
For each variable $v\in \varset$:

\begin{compactitem}
  \item we impose functionality
  \begin{lstlisting}
    caused false if $\kvar{v}$(X), $\kvar{v}$(Y), X != Y.
  \end{lstlisting}
  \item we force its value to propagate to the next state unless it is modified by an action ($\kvarchange{v}$)
  \begin{lstlisting}
    caused $\kvar{v}$(X) if not -$\kvar{v}$(X), not $\kvarchange{v}$ 
                         after $\kvar{v}$(X).
  \end{lstlisting}
  \item the defined fluent is the projection of the argument
  \begin{lstlisting}
    caused $\kvardef{v}$ if $\kvar{v}$(X).
  \end{lstlisting}
\end{compactitem}

\vspace{5pt}
\noindent
{\textbf{Variable updates}.} 
% \paragraph{Variable updates}
The value of a variable is updated by means of causation rules that depend on the transition $t$ that operates on the variable, and depends on the value of $\writef(t)$. For each $v$ in the domain of $\writef(t)$:
\begin{itemize}
  \item $\writef(t)(v) = \emptyset$: delete (undefine) a variable $v$
   \begin{lstlisting}
    caused false if $\kvardef{v}$ after t.
    caused $\kvarchange{v}$ after t.
  \end{lstlisting}
  \item $\writef(t)(v) \subseteq \domf(v)$: set $v$ with a value nondeterministically chosen among a set of elements from its domain
   \begin{lstlisting}
    caused $\kvar{v}$(V) if $\kvardom{v,t}$(V), not -$\kvar{v}$(V) after t.
    caused -$\kvar{v}$(V) if $\kvardom{v,t}$(V), not $\kvar{v}$(V) after t.
    caused false if not $\kvardef{v}$ after t.
    caused $\kvarchange{v}$ after t.
  \end{lstlisting}
   If $\writef(t)(v)$ contains a single element $d$, then the assignment is deterministic and the first three rules above can be substituted with\footnote{The deterministic version is a specific case of the non-deterministic ones and equivalent in the case that there is a single $\kvardom{v,t}(d)$ fact.}
   \begin{lstlisting}
    caused $\kvar{v}$(d) after t.
  \end{lstlisting}
\end{itemize}

\vspace{5pt}
\noindent
{\textbf{Guards}.} 
% \paragraph{Guards}
To each subformula $\varphi$ of transition guards is associated a fluent $\kguard{\varphi}$ that is true when the corresponding formula is satisfied. To simplify the notation, for any transition $t$, we will use $\kguard{t}$ to indicate the fluent $\kguard{\guardf(t)}$.
Executability of transitions is conditioned to the satisfiability of their guards; instead of modifying the executability rule including the $\kguard{t}$ among the preconditions, we use a constraint rule preventing executions of the action whenever its guard is not satisfied:
  \begin{lstlisting}
    caused false after t, not $\kguard{t}$.
  \end{lstlisting}

  Translation of atoms (\katom) is defined in terms of $\kvar{v}$ predicates. For instance $\katom(v = w)$ corresponds to \lstinline|$\kvar{v}$(V)|, \lstinline|$\kvar{w}$(W)|, \lstinline|V == W|. That is $\katom(v,T) = \kvar{t}\text{\lstinline|(T)|}$ for $t\in\V$, and $\katom(d,T) = \kvar{t}\text{\lstinline|T ==\ $d$|}$ for $d\in\bigcup_i \Delta_i$.
For each subformula $\varphi$ of transition guards a static rule is included to ``define'' the fluent $\kguard{\varphi}$:

\begin{tabular}{rl}
$true$ :& {\lstinline|caused $\kguard{\varphi}$ if true .|} \\
$\deff(v)$ :& {\lstinline|caused $\kguard{\varphi}$ if $\kvardef{v}$ .|} \\
$t_1 = t_2$ :& {\lstinline|caused $\kguard{\varphi}$ if $\katom$($t_1$,T1), $\katom$($t_2$,T2), T1 == T2 .|} \\
$t_1 \leq t_2$ :& {\lstinline|caused $\kguard{\varphi}$ if $\katom$($t_1$,T1), $\katom$($t_2$,T2), $\kord$(T1,T2) .|} \\
$\neg \varphi_1$ :& {\lstinline|caused $\kguard{\varphi}$ if not $\kguard{\varphi_1}$ .|} \\
$\varphi_1 \land \ldots \land \varphi_n$ :& {\lstinline|caused $\kguard{\varphi}$ if $\kguard{\varphi_1},\ldots,\kguard{\varphi_n}$ .|}
\end{tabular}
 
\subsection{Correctness and completeness}

We provide a sketch of the correctness and completeness of the encoding. Proofs can be found in~\cite{additional:ICAPS:2017}.

Planning states include all the information to reconstruct the original \ournet states. In fact, we can define a function $\plantopn{\cdot}$ mapping consistent planning states into \ournet states as following: $\plantopn{s} = (\mrk,\assign)$ with

\vspace{-.4cm}
\small
  \begin{align*}
    \forall p\in P,\; \mrk(p) = \begin{cases}
    1 \text{ if } p \in s \\
    0 \text{ otherwise}
  \end{cases} \quad
  \assign = \set{(v,d)\mid \kvar{v}(d)\in s}
  \end{align*}

\noindent  
\normalsize
$\plantopn{s}$ is well defined because $s$ it cannot be the case that $\set{\kvar{v}(d), \kvar{v}(d')}\subseteq s$ with $d\neq d'$, otherwise the static rule
  \begin{lstlisting}
    caused false if $\kvar{v}$(X), $\kvar{v}$(Y), X != Y.
  \end{lstlisting}
    would not be satisfied.
Moreover, 1-safeness implies that we can restrict to markings with range in $\{0,1\}$.
% \boh;{and there is not loss of information between markings and planning states.}
%%
By looking at the static rules we can observe that those defining the predicates $\kvardef{v}$ and $\kguard{t}$ are stratified. Therefore their truth assignment depends only on the extension of $\kvar{v}(\cdot)$ predicates. This implies that $\kguard{t}$ fluents are satisfied iff the variables assignment satisfies the corresponding guard $\guardf(t)$.
Based on these observations, the correctness of the encoding is relatively straightforward since we need to show that a legal transition in the planning domain can be mapped to a valid firing. This is proved by inspecting the dynamic rules.
%\begin{lemma}[Correctness]\label{lemm:plantopn:correctness} Let $W$ be a \emph{safe \ournet} and $\pdom(PN)$ the corresponding planning problem. If $\tuple{s,\{t\},s'}$ is a legal transition in $\pdom(PN)$, then $\plantopn{s} \fire{t} \plantopn{s'}$ is a valid firing of $PN$.
\begin{lemma}[Correctness]\label{lemm:plantopn:correctness} Let $W$ be a \ournet and $\pdom(W)$ the corresponding planning problem. If $\tuple{s,\{t\},s'}$ is a legal transition in $\pdom(W)$, then $\plantopn{s} \fire{t} \plantopn{s'}$ is a valid~firing~of~$W$.
\end{lemma}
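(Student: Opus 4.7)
The plan is to unpack the definition of a legal planning transition $\tuple{s,\{t\},s'}$ and verify each clause of the \ournet valid firing separately. Write $\plantopn{s}=(\mrk,\assign)$ and $\plantopn{s'}=(\mrk',\assign')$. I would structure the argument around the four conditions of a valid firing: WF-net enablement, guard satisfaction, marking update, and assignment update, and then obtain $\plantopn{s}\fire{t}\plantopn{s'}$ by conjoining them.

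For enablement, the executability rule for $t$ forces $\pres{t}\subseteq s$, which by construction of $\plantopn{\cdot}$ yields $\mrk(p)\ge 1$ for every $p\in\pres{t}$. For the guard, the constraint that fires \emph{false} after $t$ whenever $\kguard{t}$ is absent forces $\kguard{t}\in s$. Here I would invoke the stratification observation preceding the lemma, which guarantees that the truth of each $\kguard{\varphi}$ fluent depends only on the extension of the $\kvar{v}(\cdot)$ predicates. A routine structural induction on $\varphi$ would then give $\kguard{\varphi}\in s$ iff $\dmodel,\assign\models\varphi$, and specialising at $\varphi=\guardf(t)$ delivers the guard precondition.

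For the marking update I would combine the clearing rules for $i\in\pres{t}\setminus\posts{t}$, the production rules for $o\in\posts{t}$, and the place-inertia rule, and verify that $s'$ projects via $\plantopn{\cdot}$ to exactly the marking prescribed by Definition~\ref{def:Firing}; 1-safeness rules out overflow on $\posts{t}\setminus\pres{t}$, and places in $\pres{t}\cap\posts{t}$ are correctly preserved because they are not among the cleared places but are re-asserted by the production rules. For the data part, I would split on the role of each $v$: variables outside $\mathrm{dom}(\writef(t))$ are preserved by the $\kvar{v}$ inertia rule, whose blocker $\kvarchange{v}$ is caused only when $t$ writes or deletes $v$; variables with $\writef(t)(v)=\emptyset$ are expelled from $s'$ by the constraint forbidding $\kvardef{v}$; and variables with $\writef(t)(v)\neq\emptyset$ acquire exactly one value from $\writef(t)(v)$ thanks to the paired guessing rules, functionality, and the safeguard that $\kvardef{v}$ hold after $t$.

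The hard part will be this last case, since the guessing rules rely on default negation and the intended unique choice emerges only through the minimal-model (Gelfond--Lifschitz reduct) semantics of \klng rather than via direct forward propagation. I would argue that any stable model of the reduct must, by functionality, contain at most one $\kvar{v}(d)$ with $d\in\writef(t)(v)$, and, by the constraint demanding $\kvardef{v}$ after $t$, at least one---hence exactly one, yielding a well-defined $\assign'(v)\in\writef(t)(v)$. A secondary subtlety is ensuring that the inertia rule for $\kvar{v}$ does not spuriously preserve an overwritten value; here the $\kvarchange{v}$ flag, asserted by the same action, correctly blocks inertia in precisely the overwriting case, and a brief check confirms that the three groups \textsc{old}, \textsc{new}, \textsc{overwr} from the \ournet semantics are matched one-to-one by the planning rules.
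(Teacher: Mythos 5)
Your proposal is correct and follows essentially the same route as the paper's proof: it verifies the four clauses of a valid firing one by one, using the executability rule for enablement, the \lstinline|caused false after t, not $\kguard{t}$| constraint together with the stratified guard-translation lemma for the precondition, a case split on $\pres{t}\setminus\posts{t}$, $\posts{t}$, and the remaining places for the marking, and the three-way split on $\writef(t)(v)$ with the minimal-model (reduct) argument for the nondeterministic writes. The only cosmetic difference is that you make the appeal to 1-safeness for places in $\posts{t}\setminus\pres{t}$ explicit, which the paper leaves implicit in its safeness assumption.
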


The proof of completeness is more complex because -- given a valid firing -- we need to build a new planning state and show that it is minimal w.r.t.\ the transition. Since the starting state $s$ of $\tuple{s,\{t\},s'}$ does not require minimality we just need to show its existence, while $s'$ must be carefully defined on the basis of the rules in the planning domain.

%\begin{lemma}[Completeness]\label{lemm:plantopn:completeness} Let $PN$ be a \emph{safe \ournet} and $\pdom(PN)$ the corresponding planning problem.
%Let $(\mrk,\assign)\fire{t}(\mrk',\assign')$ be a valid firing of $PN$, then for each consistent state $s$ s.t.\ $\plantopn{s} = M$ there is a consistent state $s'$ s.t.\ $\plantopn{s'}=M'$ and $\tuple{s,\{t\},s'}$ is a legal transition in $\pdom(PN)$.
\begin{lemma}[Completeness]\label{lemm:plantopn:completeness} Let $W$ be a \ournet, $\pdom(W)$ the corresponding planning problem and $(\mrk,\assign)\fire{t}(\mrk',\assign')$ be a valid firing of $W$. Then for each consistent state $s$ s.t.\ $\plantopn{s} = M$ there is a consistent state $s'$ s.t.\ $\plantopn{s'}=M'$ and $\tuple{s,\{t\},s'}$ is a legal transition in $\pdom(W)$.
\end{lemma}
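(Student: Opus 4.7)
The plan is to construct $s'$ explicitly from the target DAW-net state $(\mrk',\assign')$ and then verify both that $\tuple{s,\{t\},s'}$ satisfies every rule of $\pdom(W)$ and that $s'$ is minimal. Before doing this, however, I would check that the preconditions for firing $t$ at $s$ are met. Since $t$ is enabled in $(\mrk,\assign)$, every input place $i\in\pres{t}$ has $\mrk(i)=1$, so by hypothesis $i\in s$ and the \lstinline|executable| condition is satisfied. For the guard, the observation in the text that the $\kguard{\cdot}$ fluents are stratified and functionally determined by the $\kvar{v}(\cdot)$ extension gives that $\kguard{t}\in s$ iff $\assign\models\guardf(t)$; the latter holds because the firing is valid, so the constraint rule \lstinline|caused false after t, not $\kguard{t}$.| is not violated.

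Next, I would define $s'$ by putting in $s'$: (a) exactly the place fluents $p$ such that $\mrk'(p)=1$; (b) exactly one atom $\kvar{v}(\assign'(v))$ for each $v\in dom(\assign')$; (c) the auxiliary fluents $\kvardef{v}$ for $v\in dom(\assign')$ and $\kguard{\varphi}$ for every subformula $\varphi$ satisfied by $\assign'$, both of which are forced by the stratified static rules; (d) $\kvarchange{v}$ for each $v\in dom(\writef(t))$; and (e) the background facts $\kvardom{v,t'}(\cdot)$. Consistency of $s'$ (functionality, etc.) is immediate from the fact that $\assign'$ is a function and $\mrk'$ is $1$-safe.

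Then I would verify the dynamic rules one-by-one. The place-update rules \lstinline|caused -$i$ after $t$| and \lstinline|caused $o$ after $t$| are discharged directly from the definition of valid firing in Definition 5, which forces $\mrk'$ to coincide with the construction of $s'$ on $\pres{t}\setminus\posts{t}$ and $\posts{t}$; places outside $\pres{t}\triangle\posts{t}$ carry over by the place inertia rule, using that no \lstinline|-$p$| is derivable. For variables, I would split on the three groups \textsc{old}, \textsc{new}, \textsc{overwr} exactly as in Definition 5: for $v\in dom(\writef(t))$ the update rules (deterministic or nondeterministic) admit the choice $\assign'(v)\in\writef(t)(v)$ and the \lstinline|-$\kvar{v}$(V')| rules eliminate all other values; the constraint \lstinline|caused false if not $\kvardef{v}$ after t.| is satisfied because $\assign'(v)$ is defined; the deletion case is dual. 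For $v\notin dom(\writef(t))$, $\kvarchange{v}\notin s'$, so the variable-inertia rule transports $\kvar{v}(\assign(v))$ from $s$ to $s'$, which matches $\assign'(v)=\assign(v)$.

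The main obstacle I anticipate is proving minimality of $s'$ in the Gelfond--Lifschitz sense, since legality of a transition in \klng requires $s'$ to be the \emph{minimal} model of the positive reduct obtained from $s$ and $\{t\}$. The delicate point is the nondeterministic update rules for variables, whose heads \lstinline|$\kvar{v}$(V)| and \lstinline|-$\kvar{v}$(V)| are mutually excluding via \lstinline|not|; I would argue that after reducing with respect to $s'$ the only surviving positive head for $\kvar{v}$ is the chosen value $\assign'(v)$, and that dropping any other atom from $s'$ would either leave an unsupported place fluent (violating the place update or place inertia rules) or leave a guard/definedness fluent underived (breaking the stratified static rules) or leave the firing unsupported (breaking an \lstinline|after t| rule). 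Once minimality is established, $\plantopn{s'}=(\mrk',\assign')$ follows immediately by construction, completing the proof.
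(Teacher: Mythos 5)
Your overall strategy---explicitly constructing $s'$ from $(\mrk',\assign')$, checking executability and each causation rule, and then arguing minimality literal by literal---is the same as the paper's. The genuine gap is in the construction of $s'$ itself: you populate it only with \emph{positive} literals (place fluents with $\mrk'(p)=1$, the atoms $\kvar{v}(\assign'(v))$, the auxiliary and background fluents), whereas a legal successor state must also contain the \emph{strongly negated} literals that the dynamic rules force. Concretely, the paper's $s'$ additionally contains $s'_{P^-}=\set{-p\mid p\in\pres{t}\setminus\posts{t}}$ and $s'_{\varset^-}=\set{-\kvar{v}(d)\mid d\in\writef(t)(v),\ (v,d)\notin\assign'}$. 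These are not optional: the rule \lstinline|caused -$i^t_j$ after $t$.| has the negated literal in its \emph{head}, so legality requires $-i^t_j\in s'$, not merely $i^t_j\notin s'$; likewise the rule \lstinline|caused -$\kvar{v}$(V) if $\kvardom{v,t}$(V), not $\kvar{v}$(V) after t.| forces $-\kvar{v}(d)$ into $s'$ for every non-chosen $d\in\writef(t)(v)$. Your claim that the place-update rules are ``discharged directly from the definition of valid firing'' therefore does not go through: with the $s'$ you describe, those rules are violated and the transition is not legal.

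The omission also undermines your reduct and inertia analysis. For $p\in\pres{t}\setminus\posts{t}$ with $\mrk(p)=1$ we have $p\in s$; the inertia rule \lstinline|caused $p$ if not -$p$ after $p$.| is removed from the Gelfond--Lifschitz reduct precisely when $-p\in s'$. Without $-p$ the rule survives in the reduct and forces $p\in s'$, contradicting $\mrk'(p)=0$; the same issue arises for overwritten variable values, where $-\kvar{v}(d)$ is what blocks the variable-inertia rule. So the negative literals are needed both to satisfy the negated-head causation rules and to disable inertia in the reduct. Once they are added (and consistency re-checked, which follows from validity of the firing since $\mrk'(p)=0$ on $\pres{t}\setminus\posts{t}$ and the conditions defining $s'_{\varset^+}$ and $s'_{\varset^-}$ are mutually exclusive), the remainder of your plan---the case split on $\textsc{old}$, $\textsc{new}$, $\textsc{overwr}$ and the literal-by-literal minimality argument---matches the paper's proof.
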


% We briefly comment here on the construction of $s$ and $s'$. The starting state $s$ of $\tuple{s,\{t\},s'}$ does not require minimality so we just need to show its existence, while $s'$ must be carefully defined on the basis of the rules in the planning domain. 
% To simplify its characterization, $s'$ is defined as the union of the following components, which correspond to the rules providing the encoding described earlier on:
 % \scalebox{0.8}{
 % \parbox{\linewidth}{%
 %  \begin{align*}
 %    s'_{P^+} &= \set{p \in P\mid M'(p)>0} & 
 %    s'_{P^-} &= \set{-p \mid p\in \pres{t} \setminus \posts{t}} \\
 %    s'_{\varset^+} &= \set{\kvar{v}(e)\mid (v,e)\in\assign'} &
 %    s'_{\varset^-} &= \set{-\kvar{v}(e)\mid e\in\writef(t)(v), \\
 %    & & & \hspace{2.8cm} (v,e)\not\in\assign'} \\
 %    s'_{\varset^\downarrow} &= \set{\kvardef{v}\mid \kvar{v}(e)\in s'_{\varset^+}} &
 %    s'_{\varset^c} &= \set{\kvarchange{v}\mid v\in dom(\writef(t))}\\
 %    s'_{\writef} &= \set{\kvardom{v,t}(e)\mid \forall v,t,e .  &
 %    s'_{\guardf} &= \set{\kguard{t}\mid \forall t . \mrk,\assign\models\guardf(t) }\\
 %    & \hspace{2cm} e\in \writef(t)(v)}
 % \end{align*}}}

Lemmata \ref{lemm:plantopn:correctness} and \ref{lemm:plantopn:completeness}  provide the basis for the inductive proof of the following theorem:
\begin{theorem}\label{thm:wfnet:eq}
%Let $PN$ be a \emph{safe WF-net} and $\pdom(PN)$ the corresponding planning problem. Let $(\mrk_0,\assign_0)$ be the initial state of $PN$ -- i.e.\ with a single token in the source and no assignments -- and $s_0$ the planning state satisfying the initial condition.
Let $W$ be a \emph{safe WF-net} and $\pdom(PN)$ the corresponding planning problem. Let $(\mrk_0,\assign_0)$ be the initial state of $W$ -- i.e.\ with a single token in the source and no assignments -- and $s_0$ the planning state satisfying the initial condition.
 \begin{description} 
 
	 \small
 \item[$(\Rightarrow)$]
 %For \emph{any} case in $PN$
 For \emph{any} case in $W$
 \vspace{-.2cm}
  \begin{displaymath}
    \zeta: (\mrk_0,\assign_0) \fire{t_1} (\mrk_1,\assign_1) \ldots (\mrk_{n-1},\assign_{n-1})\fire{t_n} (\mrk_{n},\assign_{n})
  \end{displaymath}
  %there is a trajectory in $\pdom(PN)$
  there is a trajectory in $\pdom(W)$
  \begin{displaymath}
    \eta: \tuple{s_0,\{t_1\},s_1}, \ldots, \tuple{s_{n-1},\{t_n\},s_{n}}
  \end{displaymath}
  such that $(\mrk_i,\assign_i) = \plantopn{s_i}$ for each $i \in \{0 \ldots n\}$
  and viceversa.
  \item[$(\Leftarrow)$]
  For each trajectory
  \vspace{-.2cm}
  \begin{displaymath}
    \eta: \tuple{s_0,\{t_1\},s_1}, \ldots, \tuple{s_{n-1},\{t_n\},s_{n}}
  \end{displaymath}
   %in $\pdom(PN)$, the following sequence of firings is a  case of $PN$
   in $\pdom(W)$, the following sequence of firings is a  case of $W$
   \vspace{-.2cm}
  \begin{displaymath}
    \zeta: \plantopn{s_0} \fire{t_1} \plantopn{s_1}\ldots \plantopn{s_{n-1}} \fire{t_n} \plantopn{s_{n}}.
  \end{displaymath}
 \end{description}
\end{theorem}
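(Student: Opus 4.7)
The plan is to prove both directions by induction on the length $n$ of the case/trajectory, using Lemmata~\ref{lemm:plantopn:correctness} and~\ref{lemm:plantopn:completeness} as the inductive steps and showing the base case separately.

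\textbf{Base case.} First I would verify that $\plantopn{s_0} = (\mrk_0,\assign_0)$. The initial condition \lstinline|initially: $start$.| together with the default minimality of ASP models forces $s_0$ to contain exactly the fluent $start$ and no $\kvar{v}(\cdot)$ atoms (since no variable is asserted in the initial state and the inertial rules only propagate values that already hold). Hence the marking induced by $s_0$ via $\plantopn{\cdot}$ has a single token in $start$ and an empty assignment, matching $(\mrk_0,\assign_0)$. Uniqueness of $s_0$ (up to the planning-state equivalence used by $\plantopn{\cdot}$) is what lets the induction proceed deterministically on the state side.

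\textbf{Inductive step, $(\Rightarrow)$.} Assume the claim holds for cases of length $n-1$, so the prefix $(\mrk_0,\assign_0)\goto{t_1}\cdots\goto{t_{n-1}}(\mrk_{n-1},\assign_{n-1})$ is matched by a trajectory ending in some $s_{n-1}$ with $\plantopn{s_{n-1}}=(\mrk_{n-1},\assign_{n-1})$. The $n$-th firing $(\mrk_{n-1},\assign_{n-1})\goto{t_n}(\mrk_n,\assign_n)$ is valid, so Lemma~\ref{lemm:plantopn:completeness} applied to $s_{n-1}$ yields a consistent $s_n$ with $\plantopn{s_n}=(\mrk_n,\assign_n)$ such that $\tuple{s_{n-1},\{t_n\},s_n}$ is legal in $\pdom(W)$. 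Appending this to the inductive trajectory gives the desired $\eta$.

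\textbf{Inductive step, $(\Leftarrow)$.} Symmetrically, suppose the trajectory $\tuple{s_0,\{t_1\},s_1},\ldots,\tuple{s_{n-1},\{t_n\},s_n}$ is legal; by IH the prefix projects to a case $\plantopn{s_0}\goto{t_1}\cdots\goto{t_{n-1}}\plantopn{s_{n-1}}$ of $W$. Then Lemma~\ref{lemm:plantopn:correctness} applied to $\tuple{s_{n-1},\{t_n\},s_n}$ gives that $\plantopn{s_{n-1}}\goto{t_n}\plantopn{s_n}$ is a valid firing, and concatenation yields a full case $\zeta$. Note we use here that the concurrency-blocking rule \lstinline|caused false after $t_1$, $t_2$.| ensures each legal transition involves exactly one action, matching the single-transition firings of a \ournet case.

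\textbf{Main obstacle.} The delicate point is the interplay between the existential quantifier in Lemma~\ref{lemm:plantopn:completeness} (``for \emph{each} consistent $s$ with $\plantopn{s}=M$ there is $s'$\ldots'') and the need to \emph{thread} a single trajectory through the induction: we must verify that the $s_{n-1}$ produced by the previous inductive step is itself a suitable witness to feed into the next application of the completeness lemma. This follows because $s_{n-1}$ is by construction consistent and satisfies $\plantopn{s_{n-1}}=(\mrk_{n-1},\assign_{n-1})$, but it relies on Lemma~\ref{lemm:plantopn:completeness} being stated uniformly in the choice of representative state---which it is. A secondary subtlety is $1$-safeness: it guarantees that place fluents stay boolean so that $\plantopn{\cdot}$ is well defined at every step, and this was already invoked in the paragraph preceding the theorem.
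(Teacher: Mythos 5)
Your proposal is correct and follows essentially the same route as the paper: both directions by induction on the length of the case/trajectory, with $\plantopn{s_0}=(\mrk_0,\assign_0)$ as the base case and Lemmata~\ref{lemm:plantopn:completeness} and~\ref{lemm:plantopn:correctness} supplying the inductive steps. Your remark that the universal quantification over consistent states $s$ in the completeness lemma is what allows the trajectory to be threaded through the induction is exactly the reason the paper states that lemma in that form.
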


Theorem~\ref{thm:wfnet:eq} above enables the exploitation of planning techniques to solve the reachability problem in \ournet. Indeed, to verify whether the final marking is reachable it is sufficient to encode it as a condition for the final state and verify the existence of a trajectory terminating in a state where the condition is satisfied.
Decidability of the planning problem is guaranteed by the fact that domains are effectively finite, as in Definition~\ref{def:our-net} the $\writef$ functions range over a finite subset of the domain. 

%%% Local Variables:
%%% mode: latex
%%% TeX-master: "BPAI-17-DataWFNets"
%%% save-place: t
%%% End:

\section{Related Work and Conclusions}
\label{sec:related_works}

%\todo[inline]{Add~\cite{Di-Francescomarino-C.:2015aa,DeGiacomo:etal:ICAPS:2016}} 

%\todo{Add sidorova}

The key role of data 
%and, in particular, of complex data objects,
 in the context of business processes 
%and the need to extend business process languages with the capability deal with complex data objects %and to specify the interaction between the control and the data flow
 has been recently recognized.
 % both from a theoretical and a practical point of view~\cite{calvanese-13-PODS-keynote}. 
%To this aim, a 
A number of variants of PNs have been enriched so as to make tokens able to carry data and transitions aware of the data, as in the case 
%of the Colored PNs (CPNs)\cite{vanderAalst:2011} or 
of Workflow nets enriched with data~\cite{sidorovastahletal:2011,de_leoni:2013}, the model adopted by the business process community. 
%The key role of data in the context of workflows and their interaction with the control flow has been deeply investigated by the artefact-centric approaches, in which processes are guided by the evolution of business data objects, i.e., artefacts~\cite{cohn2009business}. %An artifact is described by means of business-relevant data and a lifecycle model constraining the data evolution. 
%Although, similarly to these approaches, we also focus on the interaction between data and workflows, we are not interested to the data lifecycle, but rather we aim at exploiting data in order to further restrict the set of plans compliant with the available partial observations.
%Models for dealing with workflows and data have been proposed for instance for checking workflow soundness~\cite{sidorovastahletal:2011}. 
In detail, Workflow Net transitions are enriched with information about data (e.g., a variable $request$) and about how it is used by the activity (for reading or writing purposes).
%that are read and written by the transition and 
%This information is then exploited to verify the net soundness or to check the conformance between traces and model. 
Nevertheless, these nets do not consider data values (e.g., in the example of Section~\ref{sub:trace_repair} we would not be aware of the values of the variable $request$ that \emph{T4} is enabled to write). 
They only allow for the identification of whether the value of the data element is \texttt{defined} or \texttt{undefined}, thus limiting the reasoning capabilities that can be provided on top of them. For instance, in the example of Section~\ref{sub:trace_repair}, we would not be able to discriminate between the worker and the student loan for the trace in (\ref{eq:trace-data}), as we would only be aware that $request$ is \texttt{defined} after \emph{T4}.
%They only allow for the identification of whether the value of the data element is ``defined'' or ``undefined'' (e.g., in the example of Subsection~\ref{sub:trace_repair} we could only know that after \emph{T4} and \emph{T3} $request$ is ``defined''), thus limiting the reasoning capabilities that can be provided on top of them.
%These models without data values, indeed, would have not allowed us to discriminate between the worker and the student loan for the trace in (\ref{eq:trace-data}) of Subsection~\ref{sub:trace_repair}, as we could only be aware that $request$ is valued.
%Indeed, in the example of Subsection~\ref{sub:trace_repair}, given the trace in (\ref{eq:trace-data}), a model without data values would have not allowed us to discriminate between the worker and the student loan, because we could only be aware of the fact that ``request'' is valued. 

The problem of incomplete traces has been investigated in a number of works of trace alignment in the field of process mining, where it still represents one of the challenges. 
Several works
%~\cite{Adriansyahetal:2011,deLeonietal:2012b,deLeonietal:2012}
 have addressed the problem of aligning event logs and procedural models, without~\cite{Adriansyahetal:2011} and with~\cite{deLeonietal:2012b,de_leoni:2013} data. 
All these works, however, explore the search space of possible moves in order to find the best one aligning the log and the model.
Differently from them,
%the works above
 in this work (i) we assume that the model is correct and we focus on the repair of incomplete execution traces; (ii) we want to exploit state-of-the-art planning techniques to reason on control and data flow rather than solving an optimisation problem.   
We can overall divide the approaches facing the problem of reconstructing flows of model activities given a partial set of information in two groups: quantitative and qualitative. 
%We can divide the existing proposals in two groups: quantitative 
%(i.e., relying on the availability of a probabilistic model of execution and knowledge)
% and qualitative
 % (i.e., relying on equally likely ``alternative worlds'')
%  approaches.
The former
%, typically based on Partially Observable Markov Decision Models,
  rely on the availability of a probabilistic model of execution and knowledge. 
For example, in~\cite{Rogge-Solti2013}, 
the authors exploit stochastic PNs and Bayesian Networks to recover missing information (activities and their durations). % of process execution traces. 
The latter stand on the idea of describing ``possible outcomes'' regardless of likelihood; hence, knowledge about the world will consist of equally likely ``alternative worlds'' given the available observations in time, as in this work. 
For example, in~\cite{Bertoli2013} the same issue of reconstructing missing information has been tackled by reformulating it in terms of a Satisfiability(SAT) problem rather than as a planning problem.

Planning techniques have already been used in the context of business processes, e.g., for verifying process constraints~\cite{RegisRAM12} 
%,for accomplishing business process reengineering~\cite{expertsystems07} as well as
 or for the construction and adaptation of autonomous process models~\cite{SilvaL11,CoopIS2012}.
In~\cite{DeGiacomo2016} automated planning techniques have been applied for aligning execution traces and declarative models. 
%Intuitively,
%the declarative model and the current trace are encoded in terms of a planning domain and problem instance, respectively, and  
%a solution plan corresponds to the actions to be performed in order to align the model and the trace.
%, is computed.
 As in this work, in~\cite{Di-Francescomarino-C.:2015aa}, planning techniques have been used for addressing the problem of incomplete execution traces with respect to procedural models. 
 %However, differently from the two approaches above, this work targets the problem of repairing incomplete execution traces with respect to a procedural model which also takes into account data and the value they can assume. 
 However, differently from the two approaches above, this work uses for the first time planning techniques to target the problem of completing incomplete execution traces with respect to a procedural model that also takes into account data and the value they can assume. 
 %the effect of activities on its value. 
 %To the best of our knowledge, planning techniques have not been used so far for targeting the problem of incomplete execution traces with respect to data-aware procedural models. 
% To the best of our knowledge, planning techniques have not been used so far for repairing incomplete execution traces with respect to data-aware procedural models. 
%However, to the best of our knowledge, planning approaches have not yet been applied to specifically face the problem of incomplete execution traces.

%\todo[inline]{Reference to data-centric processes, e.g.\ GSM}

%%% Local Variables:
%%% mode: latex
%%% TeX-master: "BPAI-17-DataWFNets"
%%% save-place: t
%%% End:

%\section{Conclusions}
%\todo[inline]{completion vs alignment}
%\todo[inline]{$\C$ can be used instead of \klng (wider range of tools)}

Despite this work mainly focuses on the problem of trace completion, the proposed automated planning approach can easily exploit reachability for model satisfiability and trace compliance and furthermore can be easily extended also for aligning data-aware procedural models and execution traces.
 Moreover, the presented encoding in the planning language \klng, can be directly adapted to other action languages with an expressiveness comparable to $\C$~\cite{lif99}.
 %Another restriction of this work is the encoding provided in the \klng  planning language. However, also in this case, the encoding is easily extensible to other planning languages, as for instance to the classical language $\C$.
  In the future, we would like to explore these extensions and implement the proposed approach and its variants in a prototype. 
  %We indeed believe that the exploitation of mature and scalable planning techniques on data-aware processes comes after, as it relies on, a clear theoretical investigation.
  %empirically evaluate the efficiency of the proposed solutions. 
%will explore these extensions in order to make the approach as much as general as possible. Moreover, we would also be interested to empirically evaluate the efficiency of the proposed approach

%%% Local Variables:
%%% mode: latex
%%% TeX-master: "BPAI-17-DataWFNets"
%%% save-place: t
%%% End:

\onlytechrep{
\appendix
\section{Preliminaries} \label{app:preliminaries}

\subsection{Workflow Nets}

\begin{definition}[Petri Net~\cite{de_leoni:2013}]
  A Petri Net is a triple $\tuple{P,T,F}$ where
  \begin{itemize}
    \item $P$ is a set of places;
    \item $T$ is a set of transitions;
    \item $F\subseteq (P \times T) \cup (T \times P)$ is the flow relation describing the ``arcs'' between places and transitions (and between transitions and places).
  \end{itemize}
  
  The \emph{preset} of a transition t is the set of its input places: $\pres{t} = \{p \in P \mid (p,t) \in F\}$. The \emph{postset} of $t$ is the set of its output places: $\posts{t} = \{p \in P \mid (t,p) \in F\}$. Definitions of pre- and postsets of places are analogous.
  
  The \emph{marking} of a Petri net is a total mapping $M : P\mapsto \mathbb{N}$.
\end{definition}

\begin{definition}[WF-net~\cite{sidorovastahletal:2011}]
A Petri net $\tuple{P,T,F}$ is a workflow net (WF-net) if it has a single source place start, a single sink place end, and every place and every transition is on a path from start to end; i.e.\ for all $n\in P\cup T$, $(start,n)\in F^*$ and $(n,end)\in F^*$, where $F^*$ is the reflexive transitive closure of $F$.
\end{definition}

The semantics of a PN is defined in terms of its markings and \emph{valid firing} of transitions which change the marking. A firing of a transition $t\in T$ from $M$ to $M'$ is valid -- denoted by $M \fire{t_0} M$ -- iff:
\begin{itemize}
  \item $t$ is enabled in $M$, i.e., $\{ p\in P\mid M(p)>0\}\supseteq \pres{t}$; and
  \item the marking $M'$ satisfies the property that for every $p\in P$:
  \begin{displaymath}
    M'(p) =
    \begin{cases}
      M(p)-1 & \text{if $p\in \pres{t}\setminus\posts{t}$}\\
      M(p)+1  & \text{if $p\in \posts{t}\setminus\pres{t}$}\\
      M(p) & \text{otherwise}
    \end{cases}
  \end{displaymath}
\end{itemize}
A \emph{case} of PN is a sequence of valid firings
$$M_0 \fire{t_1} M_1, M_1 \fire{t_2} M_2, \ldots, M_{k-1} \fire{t_k} M_k$$ where $M_0$ is the marking where there is a single token in the start place.

\begin{definition}[safeness]
 A marking of a Petri Net is $k$-safe if the number of tokens in all places is at most $k$. A Petri Net is $k$-safe if the initial marking is $k$-safe and the marking of all cases is $k$-safe. 
\end{definition}

In this document we focus on 1-safeness, which is equivalent to the original safeness property as defined in~\cite{vanderaalst:1998}.\footnote{In the following we will use safeness as a synonym of 1-safeness.} Note that for safe nets the range of markings is restricted to $\{0, 1\}$.

\subsection{Action Language \klng}

The formal definition of \klng can be found in Appendix A of~\cite{eiter_dlvk:2003}; here, as reference, we include the main concepts.

We assume disjoint sets of action, fluent and type names, i.e., predicate symbols of arity $\geq 0$, and disjoint sets of constant and variable symbols. Literals can be positive or negative atoms; denoted by $-$. Given a set of literals $L$, $L^+$ (respectively, $L^-$) is the set of positive (respectively, negative) literals in $L$. A set of literals is \emph{consistent} no atoms appear both positive and negated.

The set of all action (respectively, fluent, type) literals is denoted as $\L_{act}$ (respectively, $\L_{fl}$, $\L_{typ}$). 

Furthermore, $\L_{fl,typ} = \L_{fl} \cup \L_{typ}$, $\L_{dyn} = \L_{fl} \cup \L^+_{act}$, and $\L = \L_{fl,typ} \cup \L^+_{act}$.

\begin{definition}[Causation rule]
  A (causation) rule is an expression of the form
    \begin{lstlisting}
caused $f$ if $b_1,\ldots, b_k$, not $b_{k+1}$, $\ldots$, not $b_\ell$ 
         after $a_1,\ldots, a_m$, not $a_{m+1}$, $\ldots$, not $a_n$.
  \end{lstlisting}
were $f\in \L_{fl}\cup \{ false \}$, $b_i\in\L_{fl,typ}$, $a_i\in\L$, $\ell\geq k\geq 0$ and $n\geq m\geq 0$. 

If $n=0$ the rule is called \emph{static}.

We define $h(r) = f$, $pre^+(r) = \{a_1,\ldots, a_m\}$, $pre^-(r) = \{a_{m+1},\ldots, a_n\}$, $post^+(r) = \{b_1,\ldots, b_k\}$, $post^-(r) = \{b_{k+1},\ldots, b_\ell\}$
\end{definition}

\begin{definition}[Initial state constraints]
  An initial state constraint is a static rule preceded by the keyword \lstinline|initially|.
\end{definition}

\begin{definition}[Executability condition]
  An executability condition e is an expression of the form
    \begin{lstlisting}
executable $a$ if $b_1,\ldots, b_k$, not $b_{k+1}$, $\ldots$, not $b_\ell$.
  \end{lstlisting}
were $a\in \L_{act}^+$, $b_i\in\L_{fl,typ}$, and $\ell\geq k\geq 0$.

We define $h(e) = a$, $pre^+(e) = \{b_1,\ldots, b_k\}$, and $pre^-(e) = \{b_{k+1},\ldots, b_\ell\}$
\end{definition}

Since in this document we're dealing with ground plans, for the definition of \emph{typed instantiation} the reader is referred to the original paper.

\begin{definition}[Planning domain, \cite{eiter_dlvk:2003} Def.\ A.5]
An action description $\tuple{D,R}$ consists of a finite set $D$ of action and fluent declarations and a finite set $R$ of safe causation rules, safe initial state constraints, and safe executability conditions. A \klng planning domain is a pair $PD = \tuple{\Pi,AD}$, where $\Pi$ is a stratified Datalog program (the background knowledge) which is safe, and $AD$ is an action description. We call $PD$ positive, if no default negation occurs in AD.
\end{definition}

The set $lit(PD)$ contains all the literals appearing in PD.

\begin{definition}[State, State transition]
A state w.r.t.\ a planning domain PD is any consistent set $s\subseteq\L_{fl} \cap (lit(PD) \cup lit(PD)^-)$ of legal fluent instances and their negations. A state transition is any tuple $t = \tuple{s, A, s'}$ where $s, s'$ are states and $A \subseteq \L_{act} \cap lit(PD)$ is a set of legal action instances in PD.
\end{definition}

Semantics of plans including default negation is defined by means of a Gelfond–Lifschitz type reduction to a positive planning domain.
\begin{definition}
  Let PD be a ground and well-typed planning domain, and let $t = \tuple{s,A,s'}$ be a state transition. Then, the reduction $PD^t$ of PD by $t$ is the planning domain where the set of rules $R$ of PD is substituted by $R^t$ obtained by deleting
  \begin{enumerate}
    \item each $r\in R$,where either $post^-(r) \cap s'\neq\emptyset$ or $pre^-(r)\cap s\neq\emptyset$,and
    \item all default literals \lstinline|not $\ell$| ($\ell\in\L$) from the remaining $r\in R$.
  \end{enumerate}
\end{definition}

\begin{definition}[Legal initial state, executable action set, legal state transition]
For any planning domain $PD = \tuple{D,R}$
\begin{itemize}
  \item a state $s_0$ is a legal initial state, if $s_0$ is the least set s.t.\ for all static and initial rules $r$ $post(r)\subseteq s_0$ implies $h(r) \subseteq s_0$;
  \item a set $A\subseteq\L^+_{act}$ is an executable action set w.r.t.\ a state $s$, if for each $a\in A$ there is an executability condition $e\in R^{\tuple{s,A, \emptyset}}$ s.t.\ $h(e)=\{a\}$, $pre(e) \cap \L_{fl} \subseteq s$, and $pre(e) \cap \L^+_{act} \subseteq A$;
  \item a state transition $t = \tuple{s, A, s'}$ is legal if $A$ is an executable action set w.r.t.\ $s$, and $s'$ is the minimal consistent set that satisfies all causation rules in $R^{\tuple{s,A, s'}}$ w.r.t.\ $s \cup A$. A causation rule $r \in  R^{\tuple{s,A, s'}}$, is satisfied if the three conditions
      \begin{enumerate}
        \item $post(r) \subseteq s'$
        \item $pre(r) \cap \L_{fl} \subseteq s$
        \item $pre(r) \cap \L_{act} \subseteq A$
      \end{enumerate}
      all hold, then $h(r) \neq \{false\}$ and $h(r) \subseteq s'$.
\end{itemize}
\end{definition}

\begin{definition}[Trajectory]
A sequence of state transitions $$\tuple{s_0, A_1, s_1}, \tuple{s_1, A_2, s_2}, \ldots,\tuple{s_{n-1}, A_n, s_n}$$, $n \geq 0$, is a trajectory for PD, if $s_0$ is a legal initial state of PD and all $\tuple{s_{i-1}, A_i, s_i}$, $1\leq i\leq n$, are legal state transitions of PD.

If $n = 0$, then the trajectory is empty.
\end{definition}

\begin{definition}[Planning problem]
  A planning problem is a pair of planning domain PD and a ground goal $q$
    \begin{lstlisting}
$g_1,\ldots, g_m$, not $g_{m+1}$, $\ldots$, not $g_n$.
  \end{lstlisting}
where $g_i\in\L_{ft}$ and $n\geq m\geq 0$.

A state $s$ \emph{satisfies} the goal if $\{g_1,\ldots, g_m\}\subseteq s$ and $\{g_{m+1},\ldots, g_n\}\cap s = \emptyset$.
\end{definition}

\begin{definition}[Optimistic plan]
  A sequence of action sets $A_1,\ldots, A_k$ is an optimistic plan for a planning problem $\tuple{PD, q}$ if there is a trajectory $\tuple{s_0, A_1, s_1}, \ldots,\tuple{s_{k-1}, A_k, s_k}$ establishing the goal $q$, i.e.\ $s_k$ satisfies $q$.
\end{definition}

\begin{definition}[Secure plan]
  An optimistic plan $A_1,\ldots, A_n$ is secure if for every legal initial state $s_0$ and trajectory $\tuple{s_0, A_1, s_1}, \tuple{s_1, A_2, s_2}, \ldots,\tuple{s_{k-1}, A_k, s_k}$ $0\leq k\leq n$, it holds that
  \begin{enumerate}
    \item if $k=n$ then $s_k$ satisfies the goal;
    \item if $k<n$, then there is a legal transition $\tuple{s_k,A_{k+1},s_{k+1}}$.
  \end{enumerate}
\end{definition}

\nocite{vazquez:2014,eiter_dlvk:2003}

%%% Local Variables:
%%% mode: latex
%%% TeX-master: "DataWFNets-planning.tex"
%%% save-place: t
%%% End:
\section{Framework}

\subsection{Data Model}\label{sec:datamodel}

\begin{definition}[Data model]
A data model for is a couple $\D = (\V, \Delta, \domf, \ordf)$ where:
\begin{compactitem}
\item $\V$ is a possibly infinite set of variables;
\item $\Delta = \set{\Delta_1, \ldots, \Delta_n}$ is a set of domains (not necessarily disjoints);
\item $\domf: \V \rightarrow \Delta$ is a total and surjective function which associate to each variable $v$ its finite domain $\Delta_i$;
\item $\ordf$ is a partial function that, given a domain $\Delta_i$, if $\ordf(\Delta_i)$ is defined, then it returns a \emph{partial order} (reflexive, antisymmetric and transitive) $\leq_{\Delta_i} \subseteq \Delta_i \times \Delta_i$.
\end{compactitem}
\end{definition}

\begin{definition}[Assignment]
Let $\D = \tuple{\V, \Delta, \domf, \ordf}$ be a data model. An assignment for variables in $\V$ is a \emph{partial} function $\assign: \V \rightarrow \Delta_v$ such that for each $v \in \V$, if $\assign(v)$ is defined, then we have $\assign(v) \in \domf(v)$. We write $\Phi[\assign]$ for the formula $\Phi$ where each occurrence of a variable $v \in \img(\assign)$ is replaced by $\assign(v)$.
\end{definition}

\begin{definition}[Guard language, syntax]
Given a data model, the language $\guardlang$ of guards is the set of formulas $\Phi$ inductively defined according to the following grammar:
\[ \begin{array}{lcl}
\Phi & := & true \mid \deff(v) \mid t_1 = t_2 \mid t_1 \leq t_2 \mid \neg \Phi_1 \mid \Phi_1 \land \Phi_2
\end{array} \]
where $v \in \V$ and $t_1, t_2 \in \V \cup \bigcup_i \Delta_i$.
\end{definition}

\begin{definition}[Guard language, semantics]
Given a data model $\dmodel$, an assignment $\assign$ and a guard $\Phi \in \guardlang$ we say that $\dmodel, \assign$ satisfies $\Phi$, written $D, \assign \models \Phi$ inductively on the structure of $\Phi$ as follows:
\begin{itemize}
\item $\dmodel, \assign \models true$;
\item $\dmodel, \assign \models \deff(v)$ iff $v \in \img(\assign)$;
\item $\dmodel, \assign \models t_1 = t_2$ iff $t_1[\assign], t_2[\assign] \not \in \V$ and $t_1[\assign] \equiv t_2[\assign]$;
\item $\dmodel, \assign \models t_1 \leq t_2$ iff $t_1[\assign], t_2[\assign] \in \Delta_i$ for some $i$ and $\ordf(\Delta_i)$ is defined and $t_1[\assign] \leq_{\Delta_i} t_2[\assign]$;
\item $\dmodel, \assign \models \neg \Phi$ iff it is not the case that $\dmodel, \assign \models \Phi$;
\item $\dmodel, \assign \models \Phi_1 \land \Phi_2$ iff $\dmodel, \assign \models \Phi_1$ and $\dmodel, \assign \models \Phi_2$.
\end{itemize} 
\end{definition}

\subsection{Petri Nets with Data}\label{sec:our-net}

\begin{definition}[\ournet] 
  A Petri Net with data is a tuple $\tuple{\dmodel, \nmodel, \writef, \guardf}$ where:
\begin{itemize}
    \item $\nmodel=\tuple{P,T,F}$ is a Petri Net;
    \item $\dmodel=\tuple{\V, \Delta, \domf, \ordf}$ is a data model;
    \item $\writef: T \mapsto (\V' \mapsto 2^{\domf(\varset)})$, where $\varset'\subseteq\varset$ and $\writef(t)(v)\subseteq\domf(v)$ for each $v\in \varset'$, is a function that associate each transition to a (\emph{partial}) function mapping variables to a subset of their domain.
    \item $\guardf: T \mapsto \guardlang$ a function that associates a guard expression to each transition.
  \end{itemize}
\end{definition}

The definition of $\writef$ provides a fine grained description of the way that transitions modify the state of the \ournet, enabling the description of different cases:
\begin{itemize}
  \item $\emptyset\subset \writef(t)(v) \subseteq \domf(v)$: transition $t$ nondeterministically assigns a value from $\writef(t)(v)$ to $v$;\footnote{Allowing a subset of  $\domf(v)$ enables the specification of restrictions for specific tasks, e.g., while a task selects among \texttt{yes, no, maybe} another one can only choose between \texttt{yes} and \texttt{no}.}
  \item $\writef(t)(v) = \emptyset$: transition $t$ deletes the value of $v$ (undefined);
  \item $v \not\in dom(\writef(t))$: value of $v$ is not modified by transition $t$.
\end{itemize}

\begin{definition}
A state of a \ournet $\tuple{\dmodel, \nmodel, \writef, \guardf}$ is a pair $(\mrk,\assign)$ where $\mrk$ is a marking for $\tuple{P,T,F}$ and $\assign$ is an assignment. State transitions and firing are adapted to the additional information about data.
\end{definition}

\begin{definition}[Valid Firing]\label{def:dpn:firing}
  Given a \ournet $\tuple{\dmodel, \nmodel, \writef, \guardf}$, a firing of a transition $t\in T$ is valid firing in $(\mrk,\assign)$ resulting in a state $(\mrk',\assign')$ (written as $(\mrk,\assign)\fire{t}(\mrk',\assign')$) iff:
  \begin{itemize}
  \item $t$ is enabled in $\mrk$, i.e., $\{ p\in P\mid \mrk(p)>0\}\supseteq \pres{t}$; and
  \item $\dmodel, \assign \models \guardf(t)$;
  \item the marking $\mrk'$ satisfies the property that for every $p\in P$:
  \begin{displaymath}
    \mrk'(p) =
    \begin{cases}
      \mrk(p)-1 & \text{if $p\in \pres{t}\setminus\posts{t}$}\\
      \mrk (p)+1  & \text{if $p\in \posts{t}\setminus\pres{t}$}\\
      \mrk (p) & \text{otherwise}
    \end{cases}
  \end{displaymath}
  \item the assignment $\assign'$ satisfies the properties that its domain is $$dom(\assign') = dom(\assign)\cup\set{v\mid \writef(t)(v)\neq\emptyset}\setminus\set{v\mid \writef(t)(v)=\emptyset}$$ and for each $v\in dom(\assign')$:
  \begin{displaymath}
    \assign'(v) =
    \begin{cases}
      d \in \writef(t)(v) & \text{if $v\in dom(\writef(t))$}\\
      \assign(v)  & \text{otherwise.}
    \end{cases}
  \end{displaymath}
\end{itemize}
\end{definition}

Cases of \ournet are defined as those of WF-nets, with the only difference that in the initial state the assignment is empty.

%%% Local Variables:
%%% mode: latex
%%% TeX-master: "DataWFNets-planning.tex"
%%% save-place: t
%%% End:

\section{Trace completion as Reachability}\label{sec:encoding:traces}

Within this document we consider the possibility that some of the activities can be observable or not. In the sense that they \emph{might} or \emph{can never} appear in logs. This enables a fine grained information on the different type of activities that compose a process. For example, is common practice in modelling the introduction of transitions for routing purposes (e.g.\ and-joins) that do not correspond to real activities and as such they would never be observed. On the other end, some activities must be logged by their nature -- e.g.\ a database update -- so if they are not observed we can be sure that they never occurred. We use the term \emph{always observable} for transitions that must appear in the logs and \emph{never observable} for those that would never appear in logs; all other transitions may or may not be present in the logs but they may occur in actual cases.

Since the focus of the paper is on the use of planning techniques to provide reasoning services for workflows with data, we decided to omit this aspect for reasons of space.

\begin{definition}[Trace]
Let Let $P = \tuple{\dmodel, \nmodel, \writef, \guardf}$ be a \ournet.
An event of $P$ is a tuple \tuple{t,w, w^d} where $t\in T$ is a transition, $w \in \domf(\varset)^{\varset'}$ -- with $\varset'\subseteq\varset$ and $w(v) \in \writef(t)(v)$ for all $v\in\varset'$ -- is a partial function that represents the variables written by the execution of $t$, and $w^d\subseteq\varset$ the set of variables deleted (undefined) by the execution of $t$. Obviously, $w^d\cap\varset' = \emptyset$.

A trace of $P$ is a finite sequence of events
$\tau = (e_1,\ldots, e_n)$. In the following we indicate the $i$-th
event of $\tau$ as $\tstep{\tau}{i}$. Given a set of tasks $T$, the set of traces is inductively
defined as follows:
\begin{compactitem}
\item $\epsilon$ is a trace;
\item if $\tau$ is a trace and $e$ an event, then $\tau \cdot e$
  is a trace.
\end{compactitem}
\end{definition}

\begin{definition}[Trace Compliance] \label{def:compliance} A (valid) firing $(\mrk,\assign)\fire{t}(\mrk',\assign')$ is \emph{compliant} with an event \tuple{t',w, w^d} iff $t = t'$, $w^d = \set{v\mid \writef(t')(v)=\emptyset}$, $dom(\assign') = dom(w) \cup dom(\assign) \setminus w^d$, and for all $v\in dom(w)$ $w(v) = \assign'(v)$.

  A case $$(\mrk_0,\assign_0)\fire{t_1}(\mrk_1,\assign_1)\ldots (\mrk_{k-1},\assign_{k-1})\fire{t_k}(\mrk_{k},\assign_{k})$$ is \emph{compliant} with the trace $\tau = (e_1,\ldots, e_\ell)$ iff there is an injective mapping $\gamma$ between $[1\ldots \ell]$ and $[1\ldots k]$ such that:\footnote{If the trace is empty then $\ell=0$ and $\gamma$ is empty.}
  \begin{align}
    \forall i,j \text{ s.t. } 1\leq i < j \leq \ell \quad \gamma(i) < \gamma(j)\\
    \forall i \text{ s.t. } 1\leq i \leq \ell \quad (\mrk_{\gamma(i-1)},\assign_{\gamma(i-1)})\fire{t_{\gamma(i)}}(\mrk_{\gamma(i)},\assign_{\gamma(i)}) \text{ is compliant with } e_i\\
    \forall i \text{ s.t. } 1\leq i \leq k\quad t_i \text{ \emph{always observable} implies } \exists j \text{ s.t. } \gamma(j) = i 
  \end{align}
\end{definition}

We assume that the workflow starts and terminates with special transitions -- indicated by $start_t$ and $end_t$ -- while $start$ and $end$ denote start place and sink respectively.

\begin{definition}[Trace workflow]\label{def:trace:workflow}
Let $W = \tuple{\dmodel, \nmodel = \tuple{P,T,F}, \writef, \guardf}$ be a \ournet and $\tau = (e_1,\ldots, e_n)$ -- where $e_i = \tuple{t_i,w_i, w_i^d}$ -- a trace of $W$. The \emph{trace workflow} $W^\tau = \tuple{\dmodel, \nmodel^\tau = \tuple{P^\tau,T^\tau,F^\tau}, \writef^\tau, \guardf^\tau}$ is defined as following:
\begin{align*}
  P^\tau = {} & P \cup \set{p_{e_0}}\cup\set{p_{e}\mid e\in \tau} & \text{$p_{e_0}$, $p_e$ new places}\\
  T^\tau = {} & T \cup \set{t_e\mid e\in \tau} &\text{$t_e$ new transitions}\\
  F^\tau = {} & F \cup {} \\
  & \set{(t_{e_i},p)\mid i=1\ldots n, (t_i,p)\in F}\cup\set{(p,t_{e_i})\mid i=1\ldots n, (p,t_i)\in F} \cup {}\\
  & \set{(t_{e_i},p_{e_i})\mid i=1\ldots n}\cup\set{(p_{e_{i-1}},t_{e_i})\mid i=1\ldots n} \cup \set{(start_t,p_{e_0}), (p_{e_n},end_t)}\\
  \writef^\tau(t) = {} & 
  \begin{cases}
    \set{(v,\set{j})\mid (v,j)\in w_i}\cup\set{(v,\emptyset)\mid v\in w_i^d} & \text{for $t = t_{e_i}$}\\
    \writef(t) & \text{for $t\in T$}
  \end{cases}
  \\
  \guardf^\tau(t) = {} & 
  \begin{cases}
    \guardf(t_i) & \text{for $t = t_{e_i}$}\\
    \textsf{false} & \text{for $t\in T$ fully observable} \\
    \guardf(t) & \text{for $t\in T$ not fully observable}
  \end{cases}
\end{align*}
\end{definition}

It's not difficult to see that whenever the original \ournet $W$ is a workflow net, then $W^\tau$ is a workflow net as well because the newly introduced nodes are in a the path $start, start_t, p_{e_0}, t_{e_1}, p_{e_1}, \ldots, t_{e_n}, p_{e_n}, end_t, end$.

To relate cases from $W^\tau$ to the original workflow $W$ we introduce a ``projection'' function $\wkfProj_\tau$ that maps elements from cases of the enriched workflow to cases using only elements from the original workflow. To simplify the notation we will use the same name to indicate mappings from states, firings and cases.

\begin{definition}
  Let $W = \tuple{\dmodel, \nmodel = \tuple{P,T,F}, \writef, \guardf}$ be a \ournet, $\tau = (e_1,\ldots, e_n)$ -- where $e_i = \tuple{t_i,w_i, w_i^d}$ a trace of $W$, and $W^\tau = \tuple{\dmodel, \nmodel^\tau = \tuple{P^\tau,T^\tau,F^\tau}, \writef^\tau, \guardf^\tau}$ the corresponding trace workflow. The mapping $\wkfProj_\tau$ is defined as following:
  \begin{enumerate}
    \item let $(\mrk',\assign')$ be a marking of $W^\tau$, then
    $$\wkfProj_\tau(\mrk') = (\mrk'\cap P\times\mathbb{N})$$
    is a state of $W$;
    \item let $(\mrk',\assign')$ be a state of $W^\tau$, then
    $$\wkfProj_\tau((\mrk',\assign')) = (\wkfProj_\tau(\mrk'),\assign')$$
    is a state of $W$;
    \item let $t$ be a transition in $T^\tau$, then
    $$\wkfProj_\tau(t) = \begin{cases}
    t_i & \text{for $t = t_{e_i}$}\\
    t & \text{for $t\in T$}
  \end{cases}$$
  \item let $(\mrk,\assign)\fire{t}(\mrk',\assign')$ be a firing in $W^\tau$, then
  $$\wkfProj_\tau((\mrk,\assign)\fire{t}(\mrk',\assign')) = \wkfProj_\tau((\mrk,\assign))\fire{\wkfProj_\tau(t)}\wkfProj_\tau((\mrk',\assign'))$$
  \item let $C = f_0,\ldots,f_k$ be a case of $W^\tau$, then 
  $$\wkfProj_\tau(C) = \wkfProj_\tau(f_0),\ldots,\wkfProj_\tau(f_k)$$
  \end{enumerate}
\end{definition}

In the following we consider a \ournet $W = \tuple{\dmodel, \nmodel = \tuple{P,T,F}, \writef, \guardf}$ and a trace $\tau = (e_1,\ldots, e_n)$ of $W$ -- where $e_i = \tuple{t_i,w_i, w_i^d}$. Let $W^\tau = \tuple{\dmodel, \nmodel^\tau = \tuple{P^\tau,T^\tau,F^\tau}, \writef^\tau, \guardf^\tau}$ be the corresponding trace workflow. To simplify the notation, in the following we will use $t_{e_0}$ as a synonymous for $start_t$ and $t_{e_{n+1}}$ as $end_t$; as if they were part of the trace.

\begin{lemma}
  Let $C$ be a case of $W^\tau$, then $\wkfProj_\tau(C)$ is a case of $W$.
\end{lemma}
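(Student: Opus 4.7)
The goal is to show that projecting away the freshly introduced places and renaming the new transitions back to their originals turns any case of $W^\tau$ into a case of $W$. I would argue firing-by-firing, establishing that $\wkfProj_\tau$ commutes with the valid-firing relation, and then handle the initial state separately.

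First, I would verify that the initial state is preserved: the initial marking of $W^\tau$ has a single token in $start \in P$ and an empty assignment; since $\wkfProj_\tau$ only removes tokens from the new places $p_{e_0}, p_{e_1}, \ldots, p_{e_n}$ and leaves the assignment untouched, the projected initial state is exactly the initial state of $W$.

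Next, the core step: for every valid firing $(\mrk,\assign)\fire{t}(\mrk',\assign')$ appearing in $C$, I would show that $\wkfProj_\tau((\mrk,\assign))\fire{\wkfProj_\tau(t)}\wkfProj_\tau((\mrk',\assign'))$ is a valid firing of $W$ in the sense of the \ournet firing definition. Split into two cases. If $t \in T$, then by Definition~\ref{def:trace:workflow} the arcs incident to $t$ in $F^\tau$ coincide with those in $F$ (no new place is in $\pres{t}$ or $\posts{t}$), so $\pres{t}$ and $\posts{t}$ are identical in $W^\tau$ and $W$; moreover $\writef^\tau(t) = \writef(t)$ and $\guardf^\tau(t) = \guardf(t)$, and hence enablement, the marking update, the guard and the assignment update all transfer verbatim after projection. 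If $t = t_{e_i}$, then $\pres{t_{e_i}} = \pres{t_i} \cup \{p_{e_{i-1}}\}$ and $\posts{t_{e_i}} = \posts{t_i} \cup \{p_{e_i}\}$; enabledness of $t_{e_i}$ in $\mrk$ therefore implies enabledness of $t_i$ in $\wkfProj_\tau(\mrk)$, and the symmetric difference of $\pres{t_{e_i}}$ and $\posts{t_{e_i}}$ restricted to $P$ equals the symmetric difference of $\pres{t_i}$ and $\posts{t_i}$, so the marking update on $P$ agrees with a firing of $t_i$. For the guards, $\guardf^\tau(t_{e_i}) = \guardf(t_i)$ by construction, so the precondition on $\assign$ holds. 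For the assignments, by definition of $\writef^\tau(t_{e_i})$ each written value $j$ satisfies $(v,j) \in w_i$ and (since $e_i$ is an event of $W$) $j \in \writef(t_i)(v)$; similarly deletions coincide, so the update of $\assign$ into $\assign'$ is admissible for a firing of $t_i$ in $W$.

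Assembling the cases: $\wkfProj_\tau(C)$ is a sequence of states and firings of $W$ where each firing is valid and the first state is the initial state of $W$. This is exactly the definition of a case of $W$. The main obstacle is really just the careful bookkeeping for the $t_{e_i}$ case, namely checking that stripping the auxiliary tracking places from pre- and postsets does not disturb the arithmetic in the marking update clause and that the singleton/empty images of $\writef^\tau(t_{e_i})$ are contained in the corresponding images of $\writef(t_i)$; everything else is a direct unfolding of the definitions of $W^\tau$ and $\wkfProj_\tau$.
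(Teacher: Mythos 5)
Your proof is correct and follows essentially the same route as the paper's: verify that the projected initial state is the initial state of $W$, then show firing-by-firing that validity is preserved, using the facts that $\pres{\wkfProj_\tau(t)}$ and $\posts{\wkfProj_\tau(t)}$ are the restrictions to $P$ of $\pres{t}$ and $\posts{t}$, that guards coincide, and that $\writef^\tau(t_{e_i})(v)\subseteq\writef(t_i)(v)$. The only cosmetic difference is that you split into the cases $t\in T$ and $t=t_{e_i}$ where the paper treats both uniformly via intersection with $P$.
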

\begin{proof}
  Let $C = (\mrk_0,\assign_0)\fire{t_1}(\mrk_1,\assign_1)\ldots (\mrk_{k-1},\assign_{k-1})\fire{t_k}(\mrk_{k},\assign_{k})$, to show that $\wkfProj_\tau(C)$ is a case of $W$ we need to prove that 
 \begin{inparaenum}[(i)]
  \item $\wkfProj_\tau((\mrk_0,\assign_0))$ is an initial state of $W$ and that 
  \item the firing $\wkfProj_\tau((\mrk_{i-1},\assign_{i-1})\fire{t_i}(\mrk_{i},\assign_{i}))$ is valid w.r.t.\ $W$ for all $1\leq i\leq n$.
\end{inparaenum}
\begin{compactenum}[i)]
\item By definition $\wkfProj_\tau((\mrk_0,\assign_0)) = (\wkfProj_\tau(\mrk_0),\assign')$ and $\wkfProj_\tau(\mrk_0)\subseteq\mrk_0$. Since the start place is in $P$, then start is the only place with a token in $\wkfProj_\tau(\mrk_0)$.
\item Let consider an arbitrary firing $f_i = (\mrk_{i-1},\assign_{i-1})\fire{t_i}(\mrk_{i},\assign_{i})$ in $C$ (valid by definition), then $\wkfProj_\tau(f_i) = (\wkfProj_\tau(\mrk_{i-1}),\assign_{i-1})\fire{\wkfProj_\tau(t_i)}(\wkfProj_\tau(\mrk_{i}),\assign_{i})$.

  Note that -- by construction -- $\guardf(t_i) = \guardf(\wkfProj_\tau(t_i))$, $\posts{\wkfProj_\tau(t_i)} = \posts{t_i}\cap P$, $\pres{\wkfProj_\tau(t_i)} = \pres{t_i}\cap P$, $dom(\writef(t_i)) = dom(\writef(\wkfProj_\tau(t_i)))$ and $\writef(t_i)(v)\subseteq \writef(\wkfProj_\tau(t_i))(v)$ ; therefore
  
    \begin{itemize}
  \item $\set{ p\in P^\tau\mid \mrk_{i-1}>0}\cap P = \{ p\in P\mid \wkfProj_\tau(\mrk_{i-1})>0\}\supseteq \pres{\wkfProj_\tau(t_i)}$ because $\set{ p\in P^\tau\mid \mrk_{i-1}>0}\supseteq \pres{t_i}$;
  \item $\dmodel, \assign \models \guardf(\wkfProj_\tau(t_i))$ because $\dmodel, \assign \models \guardf(t_i)$
  \item for all $p\in P$ $\wkfProj_\tau(\mrk_{j})(p) = \mrk_{j}(p)$, therefore:
  \begin{displaymath}
    \mrk_{i}(p) = \wkfProj_\tau(\mrk_{i})(p) =
    \begin{cases}
      \mrk_{i-1}(p)-1 = \wkfProj_\tau(\mrk_{i-1})(p) - 1 & \text{if $p\in \pres{\wkfProj_\tau(t_i)}\setminus\posts{\wkfProj_\tau(t_i)}$}\\
      \mrk_{i-1}(p)+1 = \wkfProj_\tau(\mrk_{i-1})(p) + 1 & \text{if $p\in \posts{\wkfProj_\tau(t_i)}\setminus\pres{\wkfProj_\tau(t_i)}$}\\
      \mrk_{i-1}(p) = \wkfProj_\tau(\mrk_{i-1})(p) & \text{otherwise}
    \end{cases}
  \end{displaymath}
  because $f_i$ is valid w.r.t.\ $W^\tau$;
  \item the assignment $\assign_i$ satisfies the properties that its domain is $$dom(\assign_i) = dom(\assign_{i-1})\cup\set{v\mid \writef(\wkfProj_\tau(t_i))(v)\neq\emptyset}\setminus\set{v\mid \writef(\wkfProj_\tau(t_i))(v)=\emptyset}$$ and for each $v\in dom(\assign_i)$:
  \begin{displaymath}
    \assign_i(v) =
    \begin{cases}
      d \in \writef(t_i)(v)\subseteq \writef(\wkfProj_\tau(t_i))(v) & \text{if $v\in dom(\writef(t_i)) = dom(\writef(\wkfProj_\tau(t_i)))$}\\
      \assign_{i-1}(v)  & \text{otherwise.}
    \end{cases}
  \end{displaymath}
  because $f_i$ is valid.
\end{itemize}
\end{compactenum}

\end{proof}

Before going into details, we will consider some properties of the ``trace'' workflow.

\begin{lemma}\label{lemma:trace:places}
  Let $W = \tuple{\dmodel, \nmodel = \tuple{P,T,F}, \writef, \guardf}$ be a \ournet and $\tau = (e_1,\ldots, e_n)$ -- where $e_i = \tuple{t_i,w_i, w_i^d}$ -- a trace of $W$. If $C = (\mrk_0,\assign_0)\fire{t_1}(\mrk_1,\assign_1)\ldots (\mrk_{k-1},\assign_{k-1})\fire{t_k}(\mrk_{k},\assign_{k})$ is a case of $W^\tau$ then for all $0\leq i\leq k$: $$\Sigma_{p\in P^\tau\setminus P} \mrk_i(p)\leq \mrk_0(start)$$
\end{lemma}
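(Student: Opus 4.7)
The plan is to exploit the fact that the newly introduced places $P^\tau \setminus P = \{p_{e_0}, \ldots, p_{e_n}\}$ form an essentially linear ``chain'' that is connected to the rest of the net in a very restricted way. By inspecting $F^\tau$ in Definition~\ref{def:trace:workflow}, the only arcs touching a new place are $(start_t, p_{e_0})$, the internal pairs $(p_{e_{i-1}}, t_{e_i})$ and $(t_{e_i}, p_{e_i})$ for $i = 1, \ldots, n$, and finally $(p_{e_n}, end_t)$. No transition in $T$ other than $start_t$ and $end_t$ touches any $p_{e_j}$, and conversely the only fresh transitions $t_{e_1}, \ldots, t_{e_n}$ interact with the new places only along that chain.

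From this structural observation I would classify, for each transition $t \in T^\tau$, its net effect on the quantity $S(\mrk) := \sum_{p \in P^\tau \setminus P} \mrk(p)$: firing $start_t$ produces one token in $p_{e_0}$, contributing $+1$; firing $t_{e_i}$ removes one token from $p_{e_{i-1}}$ and deposits one in $p_{e_i}$, contributing $0$; firing $end_t$ removes one token from $p_{e_n}$, contributing $-1$; every other transition only touches places in $P$ and contributes $0$. A straightforward induction on $i$ then yields $S(\mrk_i) = \#\{j \leq i \mid t_j = start_t\} - \#\{j \leq i \mid t_j = end_t\}$, and hence $S(\mrk_i)$ is bounded above by the number of firings of $start_t$ in the prefix $t_1, \ldots, t_i$.

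To bound that number of firings of $start_t$, I would use that $start$ is the unique source place of the WF-net $W$, so no transition in $T$ puts tokens into $start$; the construction of $W^\tau$ likewise adds no arc whose head is $start$. Consequently $start_t$ is the sole consumer of $start$, each firing of it removes one token, and the supply is never replenished; so the number of $start_t$ firings in any prefix of $C$ is at most $\mrk_0(start)$. Combining this with the previous step gives $S(\mrk_i) \leq \mrk_0(start)$ for every $0 \leq i \leq k$, which is exactly the statement.

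The main obstacle I expect is the bookkeeping in the case analysis of step two: verifying that none of the events $e_j$ in $\tau$ refers to the artificial transitions $start_t$ or $end_t$, because otherwise the associated copies $t_{e_j}$ would have presets intersecting $\{start\}$ (or postsets intersecting $\{end\}$) and could contribute to $S$ in unexpected ways. I would discharge this either by appealing to the paper's convention that $start_t$ and $end_t$ are routing transitions not present in observed logs, or by strengthening the invariant to $\mrk_i(start) + S(\mrk_i) \leq \mrk_0(start)$ and checking preservation directly for every transition kind, including the pathological copies; the latter makes the inductive step uniform and avoids relying on implicit assumptions about the trace.
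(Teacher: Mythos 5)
Your proof is correct and follows essentially the same route as the paper's: both arguments rest on the observation that the places in $P^\tau\setminus P$ form a chain in which each new place has a single producing and a single consuming transition, so that the sum over the new places is preserved by every firing except $start_t$ (which increments it) and $end_t$ (which decrements it), and both then conclude by an induction over the prefix of the case. If anything, your version is slightly more careful than the paper's: its inductive step tacitly treats $start_t$ as $t_{e_0}$ and appeals to a decrement of the nonexistent place $p_{e_{-1}}$ when $start_t$ fires, whereas you explicitly close that case by bounding the number of $start_t$ firings by $\mrk_0(start)$ via the fact that the source place has no incoming arcs.
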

\begin{proof}
  By induction on the length of $C$.
  \begin{itemize}
    \item For $k=1$ then the only executable transition is $start_t$, therefore $t_1 = start_t$ which -- by assumption -- has two output places and -- by construction -- $\posts{start_t}\setminus P = \set{p_{e_0}}$. Since the firing is valid, then $\mrk_1(p_{e_0}) = \mrk_0(p_{e_0}) + 1 = 1\leq \mrk_0(start)$.
    \item Let's assume that the property is true a case $C$ of length $n$ and consider $C' = C (\mrk_{n},\assign_{n})\fire{t_{n+1}}(\mrk_{n+1},\assign_{n+1})$. By construction, each $p\in P^\tau\setminus P$ has a single incoming edge and $\set{t\in T^\tau\mid e_i\in \posts{t}} = \set{t_{e_i}}$ and $\set{t\in T^\tau\mid e_i\in \pres{t}} = \set{t_{e_{i+1}}}$. Therefore the only occurrence in which a $p_{e_i}\in P^\tau\setminus P$ can \emph{increase} its value is when $t_{n+1} = t_{e_i}$. Since the transition is valid, then $\mrk_{n+1}(p_{e_i}) = \mrk_{n}(p_{e_i}) + 1$ and $\mrk_{n+1}(p_{e_{i-1}}) = \mrk_{n}(p_{e_{i-1}}) - 1$; therefore $\Sigma_{p\in P^\tau\setminus P} \mrk_i(p) = \Sigma_{p\in P^\tau\setminus P} \mrk_{i-1}(p)\leq \mrk_0(start)$ -- by the inductive hypothesis.
  \end{itemize}
\end{proof}

\begin{lemma}\label{lemma:trace:places:seq}
  Let $W = \tuple{\dmodel, \nmodel = \tuple{P,T,F}, \writef, \guardf}$ be a \ournet and $\tau = (e_1,\ldots, e_n)$ -- where $e_i = \tuple{t_i,w_i, w_i^d}$ -- a trace of $W$, $C = (\mrk_0,\assign_0)\fire{t_1}(\mrk_1,\assign_1)\ldots (\mrk_{k-1},\assign_{k-1})\fire{t_k}(\mrk_{k},\assign_{k})$ a case of $W^\tau$, and $t_{e_i}$ is a transition of a firing $f_m$ in $C$ with $1\leq i\leq n$, then
    \begin{inparaenum}[(i)]
    \item $t_{e_{i-1}}$ is in a transition of a firing in $C$ that precedes $f_m$, 
    \item and if $\mrk_0(start)=1$ then there is a single occurrence of $t_{e_i}$ in $C$.
    \end{inparaenum}

\end{lemma}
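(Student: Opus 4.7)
The plan exploits the structural observation from Definition~\ref{def:trace:workflow} that the newly introduced places $p_{e_0}, p_{e_1}, \ldots, p_{e_n}$ form a linear chain: for each $1 \leq i \leq n$, the place $p_{e_{i-1}}$ has exactly one incoming arc (from $t_{e_{i-1}}$) and exactly one outgoing arc (to $t_{e_i}$) in $F^\tau$. Consequently, the token count in $p_{e_{i-1}}$ can change along $C$ only through firings of $t_{e_{i-1}}$ (which add one) or $t_{e_i}$ (which remove one), and initially $\mrk_0(p_{e_{i-1}}) = 0$. These bookkeeping facts underpin both parts.

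For part (i), since the firing $f_m$ of $t_{e_i}$ is valid, $\mrk_{m-1}(p_{e_{i-1}}) \geq 1$. A straightforward induction on the prefix of $C$ up to position $m-1$, using the bookkeeping above, shows that the number of occurrences of $t_{e_{i-1}}$ in that prefix strictly exceeds the number of occurrences of $t_{e_i}$; in particular $t_{e_{i-1}}$ must appear at some position $\ell < m$.

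For part (ii), I would induct on $i$. For the base case $i = 0$, note that $t_{e_0} = start_t$ consumes the unique token from the source place $start$ of the original WF-net, and Definition~\ref{def:trace:workflow} adds no arc into $start$ (the only new arc touching $start_t$ is $(start_t, p_{e_0})$); since no original transition has $start$ in its postset either, $start$ can never be replenished and $t_{e_0}$ fires at most once. For the inductive step, assume $t_{e_{i-1}}$ fires at most once in $C$. Because $\mrk_0(start) = 1$, Lemma~\ref{lemma:trace:places} gives $\mrk_j(p_{e_{i-1}}) \leq 1$ for every $j$, so each firing of $t_{e_i}$ must be preceded by a distinct firing of $t_{e_{i-1}}$ whose produced token has not yet been consumed. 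Hence the number of firings of $t_{e_i}$ is bounded by the number of firings of $t_{e_{i-1}}$, which is at most one by the inductive hypothesis.

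The subtlest point will be combining the $1$-safeness bound from Lemma~\ref{lemma:trace:places} with the counting argument in the inductive step: one must observe that because $p_{e_{i-1}}$ cannot hold more than one token simultaneously, two firings of $t_{e_i}$ genuinely force two \emph{separated} productions by $t_{e_{i-1}}$, rather than a single production that happens to suffice for both. The rest of the proof is token-accounting on a linear gadget and does not require any new ideas beyond the structural properties of $W^\tau$ and the safeness bound already established.
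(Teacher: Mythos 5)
Your proof is correct and takes essentially the same route as the paper's: both parts hinge on the fact that each new place $p_{e_{i-1}}$ has $t_{e_{i-1}}$ as its only producer and $t_{e_i}$ as its only consumer, and part (ii) is an induction along the chain anchored at $t_{e_0}=start_t$ (you run it upward directly, while the paper runs it downward by contradiction --- the same argument in contrapositive form). The only point worth noting is that your appeal to Lemma~\ref{lemma:trace:places} is unnecessary: the conservation identity stating that the token count of $p_{e_{i-1}}$ equals the number of prior firings of $t_{e_{i-1}}$ minus those of $t_{e_i}$ and must remain non-negative already bounds the occurrences of $t_{e_i}$ by those of $t_{e_{i-1}}$, without invoking safeness.
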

\begin{proof}
  The proof for the first part follows from the structure of the workflow net; because -- by construction -- each $p\in P^\tau\setminus P$ has a single incoming edge and $\set{t\in T^\tau\mid e_i\in \posts{t}} = \set{t_{e_i}}$ and $\set{t\in T^\tau\mid e_i\in \pres{t}} = \set{t_{e_{i+1}}}$.
  Since each firing must be valid -- if $f_m = (\mrk_{m-1},\assign_{m-1})\fire{t_{e_i}}(\mrk_{m},\assign_{m})$ is in $C$, then $\mrk_{m-1}(p_{e_{i-1}})\geq 1$ and this can only be true if there is a firing $f_r = (\mrk_{r-1},\assign_{r-1})\fire{t_{e_{i-1}}}(\mrk_{r},\assign_{r})$ in $C$ s.t.\ $r<m$.
  
  To prove the second part is enough to show that for each $1\leq i\leq n$, if $t_{e_i}$ appears more than once in $C$ then there must be multiple occurrences of $t_{e_{i-1}}$ as well. In fact, if this is the fact, then we can use the previous part to show that there must be multiple occurrences of $t_{e_0} = start$, and this is only possible if $\mrk_0(start)>1$. 
  
  By contradiction let's assume that there are two firings $f_m$ and $f_m'$, with $m<m'$, with the same transition $t_{e_i}$, but there is only a single occurrence of $t_{e_{i-1}}$ in a firing $f_r$. Using the previous part of this lemma we conclude that $r<m<m'$, therefore $\mrk_{m-1}(p_{e_{i-1}}) = 1$ because a token could be transferred into $p_{e_{i-1}}$ only by $t_{e_{i-1}}$, so $\mrk_{m}(p_{e_{i-1}}) = 0$. In the firings between $m$ and $m'$ there are no occurrences of $t_{e_{i-1}}$, so $\mrk_{m'-1}(p_{e_{i-1}}) = \mrk_{m}(p_{e_{i-1}}) = 0$ which is in contradiction with the assumption that $f_m'$ is a valid firing. 
\end{proof}

Now we're ready to show that the ``trace'' workflow characterises all and only the cases compliant wrt\ the given trace. We divide the proof into correctness and completeness.

\begin{lemma}[Correctness]\label{lemma:wkf:trace:encoding:crct} Let $C = (\mrk_0,\assign_0)\fire{t_1}(\mrk_1,\assign_1)\ldots (\mrk_{k-1},\assign_{k-1})\fire{t_k}(\mrk_{k},\assign_{k})$ be a case of $W^\tau$ s.t.\ $\mrk_0(start) = 1$, and $\ell = max(\set{i\mid t_i\text{ is in a firing of }C}\cup\set{0})$, then the case $\wkfProj_\tau(C)$ of $W$ is compliant with $\tau' = (e_1,\ldots, e_\ell)$ or the empty trace if $\ell$ is $0$.
\end{lemma}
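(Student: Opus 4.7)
The plan is to derive the compliance witness $\gamma$ directly from the positions at which the transitions $t_{e_1},\ldots,t_{e_\ell}$ appear in $C$, leveraging the structural properties already established in Lemma~\ref{lemma:trace:places:seq} and the previously proved fact that $\wkfProj_\tau(C)$ is a case of $W$.

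First I would fix $I = \set{i\mid 1\leq i\leq n,\ t_{e_i}\text{ appears in some firing of }C}$ and, using the hypothesis $\mrk_0(start)=1$, invoke the second part of Lemma~\ref{lemma:trace:places:seq} to conclude that for every $i\in I$ there is a \emph{unique} firing position in $C$ whose transition is $t_{e_i}$; call this position $\gamma(i)$. By the first part of the same lemma, $i\in I$ and $i\geq 1$ imply $i-1\in I$, so $I$ is downward closed and therefore coincides with $\set{1,\ldots,\ell}$ by the definition of $\ell$. Iterating the first part also yields $\gamma(1)<\gamma(2)<\cdots<\gamma(\ell)$, giving order preservation and injectivity of $\gamma$, which covers the first compliance clause of Definition~\ref{def:compliance}.

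Next I would verify the firing compliance clause for each $1\leq i\leq \ell$. Consider the $\gamma(i)$-th firing of $C$, which has the form $(\mrk_{\gamma(i)-1},\assign_{\gamma(i)-1})\fire{t_{e_i}}(\mrk_{\gamma(i)},\assign_{\gamma(i)})$. Applying $\wkfProj_\tau$ relabels the transition to $t_i = \wkfProj_\tau(t_{e_i})$ and leaves the assignments untouched (the projection acts only on places and on synthetic transition names). By Definition~\ref{def:trace:workflow}, $\writef^\tau(t_{e_i})$ sends each $v\in dom(w_i)$ to the singleton $\set{w_i(v)}$ and each $v\in w_i^d$ to $\emptyset$. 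Hence validity of the firing in $W^\tau$ forces $dom(\assign_{\gamma(i)}) = dom(\assign_{\gamma(i)-1})\cup dom(w_i)\setminus w_i^d$ together with $\assign_{\gamma(i)}(v)=w_i(v)$ for every $v\in dom(w_i)$, which are exactly the compliance conditions with event $e_i$.

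The previously established lemma that $\wkfProj_\tau(C)$ is itself a case of $W$ takes care of validity of the firings outside the image of $\gamma$, so no further work is needed for those. The only subtlety I anticipate is the identity $w_i^d=\set{v\mid \writef(t_i)(v)=\emptyset}$ demanded by the event-level compliance condition; this is a well-formedness property of events as components of a trace of $W$ and follows by reading off Definition~\ref{def:trace:workflow} together with the original event constraint $w_i^d\cap dom(w_i)=\emptyset$, so it amounts to bookkeeping rather than a genuine obstacle. The always-observable clause of Definition~\ref{def:compliance} is outside the scope of the main body and does not need to be discharged here.
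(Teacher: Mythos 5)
Your construction is sound where it goes, and it takes a genuinely different route from the paper: you build the compliance witness $\gamma$ in one shot, reading off the (unique) positions of $t_{e_1},\ldots,t_{e_\ell}$ in $C$ and using Lemma~\ref{lemma:trace:places:seq} for well-definedness, for monotonicity, and for the identification of your set $I$ with $\set{1,\ldots,\ell}$, whereas the paper proceeds by induction on the length of $C$, extending $\gamma$ by one entry each time a firing with some $t_{e_\ell}$ is appended. Both arguments lean on the same structural lemma; yours is more direct and separates cleanly the combinatorial part (existence and ordering of $\gamma$) from the data part (firing-level compliance via $\writef^\tau(t_{e_i})$ mapping each written variable to a singleton and each deleted one to $\emptyset$), while the paper's induction produces the witness in lock-step with the case.

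There is, however, one genuine omission. The lemma is stated and proved in the appendix, where compliance is Definition~\ref{def:compliance} \emph{including} its third clause: every firing of an always-observable transition must lie in the image of $\gamma$. You explicitly decline to discharge this clause on the grounds that observability is ``outside the scope of the main body,'' but the statement you are proving is the appendix statement, and $\wkfProj_\tau(C)$ may contain firings of transitions of $T$ outside the image of $\gamma$; nothing in your argument rules out that one of them is always observable. The missing (short) step is the one the paper supplies: by Definition~\ref{def:trace:workflow}, $\guardf^\tau(t)=\textsf{false}$ for every always-observable $t\in T$, so no such transition can occur in a valid firing of $W^\tau$; hence every firing of $C$ whose transition lies in $T$ involves a non-always-observable transition, and every firing whose transition is some $t_{e_i}$ is matched by $\gamma(i)$. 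Without this, you have established only the first two clauses of compliance. (Your appeal to well-formedness of events for $w_i^d=\set{v\mid\writef(t_i)(v)=\emptyset}$ is at the same level of rigour as the paper's ``by definition of $t_{e_\ell}$'' and is acceptable.)
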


\begin{proof}
  By induction on the length of $C$.
  \begin{itemize}
    \item If $C = (\mrk_0,\assign_0)\fire{t_1}(\mrk_1,\assign_1)$ then $t_1 = start_t$ because the firing is valid and the only place with a token in $\mrk_0$ is $start$; therefore $\ell = 0$ and $\tau'$ is the empty trace. $C$ trivially satisfy the empty trace because no observable transitions are in $\wkfProj_\tau(C)$.
    \item Let $C = (\mrk_0,\assign_0)\fire{t_1}(\mrk_1,\assign_1)\ldots (\mrk_{k-1},\assign_{k-1})\fire{t_k}(\mrk_{k},\assign_{k})$ s.t.\ $\wkfProj_\tau(C)$ is compliant with $\tau'$. Let's consider $C' = C \cdot (\mrk_{k},\assign_{k})\fire{t_{k+1}}(\mrk_{k+1},\assign_{k+1})$: either $t_{k+1}\in T^\tau\setminus T$ or $t_{k+1}\in T$.
    In the first case $t_{k+1} = t_{e_\ell}$ for some $1\leq \ell\leq n$, and -- by using Lemma~\ref{lemma:trace:places:seq} -- in $C$ there are occurrences of all the $t_{e_i}$ for $1\leq i<\ell$ and it's the only occurrence of $t_{e_\ell}$. This means that $\ell = max(\set{i\mid t_i\text{ is in a firing of }C}\cup\set{0})$ and we can extend $\gamma$ to $\gamma'$ by adding the mapping from $\ell$ to $k+1$. The mapping is well defined because of the single occurrence of $t_{e_\ell}$. By definition of $t_{e_\ell}$, $(\mrk_{k},\assign_{k})\fire{t_{k+1}}(\mrk_{k+1},\assign_{k+1})$ is compliant with ${e_\ell}$ and the mapping $\wkfProj_\tau$ preserve the assignments, therefore $\wkfProj_\tau(\mrk_{k},\assign_{k})\fire{t_{k+1}}(\mrk_{k+1},\assign_{k+1})$ is compliant with ${e_\ell}$ as well. By using the inductive hypnotises we can show that $C'$ is compliant as well.
    In the second case the mapping is not modified, therefore the inductive hypothesis can be used to provide evidence of the first two conditions for trace compliance of Definition~\ref{def:compliance}. For the third (transitions always observable) it's sufficient to consider that $t_{k+1}$ cannot be always observable because its guard is never satisfiable in $W^\tau$.
  \end{itemize}
\end{proof}

\begin{lemma}[Completeness]\label{lemma:wkf:trace:encoding:cmpl} Let $C = (\mrk_0,\assign_0)\fire{t_1}(\mrk_1,\assign_1)\ldots (\mrk_{k-1},\assign_{k-1})\fire{t_k}(\mrk_{k},\assign_{k})$ be a case of $W$ compatible with $\tau = (e_1,\ldots, e_n)$, then there is a case $C'$ of $W^\tau$ s.t.\ $\wkfProj_\tau(C') = C$.
\end{lemma}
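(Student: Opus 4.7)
The plan is to construct $C'$ explicitly from $C$, guided by the injective monotone witness $\gamma\colon [1..n]\to[1..k]$ provided by compatibility. For the transitions, I set the $m$-th transition of $C'$ to $t_{e_i}$ whenever $m = \gamma(i)$ and to the original $t_m$ otherwise. For the markings, I set $M'_0 = M_0$; for $1\le m \le k-1$, I set $M'_m = M_m \cup \{p_{e_{j(m)}}\}$, where $j(m)$ is the largest $i$ with $\gamma(i)\le m$ (with $j(m)=0$ when $m < \gamma(1)$, reflecting the token deposited in $p_{e_0}$ by $start_t$); finally $M'_k = M_k$, since $end_t = t_k$ consumes the $p_{e_n}$ token. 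Assignments $\eta_m$ are kept unchanged throughout.

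The heart of the proof is to verify, by induction on $m$, that each step $(M'_{m-1},\eta_{m-1})\fire{t'_m}(M'_m,\eta_m)$ is a valid firing in $W^\tau$. Enabledness of non-event firings is immediate: their preset in $W^\tau$ coincides with their preset in $W$ and $M'_{m-1}\supseteq M_{m-1}$. For $m=\gamma(i)$, the preset of $t_{e_i}$ is $\pres{t_i}\cup\{p_{e_{i-1}}\}$; the first set is marked because $C$ is valid, while $p_{e_{i-1}}$ lies in $M'_{m-1}$ thanks to monotonicity of $\gamma$ (so $\gamma(i-1)<m$) combined with the definition of $j(m-1)$. Guards are satisfied because $\guardf^\tau(t_{e_i})=\guardf(t_i)$ and guards on transitions in $T$ are unchanged. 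The marking update on $P$ matches that of $C$ since $t_{e_i}$ acts on $P$ exactly as $t_i$ does, while the extra places $p_{e_{i-1}}$ and $p_{e_i}$ are swapped as required by the construction. For the assignment update, the definition of an event yields $dom(w_i) = \{v \mid \writef(t_i)(v)\neq \emptyset\}$ and $w_i^d = \{v \mid \writef(t_i)(v)=\emptyset\}$, and compatibility gives $\eta_{\gamma(i)}(v) = w_i(v)$ on $dom(w_i)$; since $\writef^\tau(t_{e_i})$ forces exactly these writes (each to the singleton $\{w_i(v)\}$) and deletions, the update produces precisely $\eta_{\gamma(i)}$.

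The projection identity $\wkfProj_\tau(C')=C$ then follows by unfolding the definition of $\wkfProj_\tau$: states lose their extra $p_{e_j}$ tokens to recover $M_m$ and retain $\eta_m$, while transitions $t_{e_i}$ are sent back to $t_i$ and the other transitions are fixed. The main obstacle will be the bookkeeping around the boundary firings $start_t$ and $end_t$ and ensuring the auxiliary index $j(m)$ is always well-defined so that precisely the needed $p_{e_j}$ is marked at each intermediate step; monotonicity and injectivity of $\gamma$, together with the observation that no $t_{e_i}$ appears outside position $\gamma(i)$ by construction, keep this bookkeeping unambiguous and reduce the remaining verification to routine case analysis on whether $m$ lies in $\gamma([1..n])$.
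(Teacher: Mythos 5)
Your proposal is correct and follows essentially the same route as the paper's proof: both construct $C'$ explicitly from $C$ using the compliance witness $\gamma$ (replacing the transition at position $\gamma(i)$ by $t_{e_i}$ and augmenting each marking with a single token in the appropriate new place $p_{e_{j}}$), then verify validity of each firing by separating the conditions on original places, on the new chain of places (via monotonicity of $\gamma$), and on guards and assignments (via the compliance conditions on $w_i$ and $w_i^d$). Your bookkeeping with $j(m)$ as the largest $i$ with $\gamma(i)\le m$ is exactly the marking the paper's recursive construction computes, so the two arguments coincide.
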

\begin{proof}
  Since $C$ is compliant with $\tau$, then there is a mapping $\gamma$ satisfying the conditions of Definition~\ref{def:compliance}. Let $C' = (\mrk'_0,\assign_0)\fire{t'_1}(\mrk'_1,\assign_1)\ldots (\mrk'_{k-1},\assign_{k-1})\fire{t'_k}(\mrk'_{k},\assign_{k})$ a sequence of firing of $W^\tau$ defined as following:
  \begin{itemize}
    \item $\mrk'_0 = \mrk_0\cup\set{(p_{e_i},0)\mid 0\leq i\leq n}$
    \item $t'_1 = t_1$ and $\mrk'_1 = \mrk_1\cup\set{(p_{e_j},0)\mid 1\leq j\leq n} \cup\set{(p_{e_0},1)}$
    \item for each $(\mrk'_{i-1},\assign_{i-1})\fire{t'_i}(\mrk'_{i},\assign_{i})$, $2\leq i\leq n$:
    \begin{itemize}
    \item if there is $\ell$ s.t.\ $\gamma(\ell) = i$ then $t'_i = t_{e_\ell}$ and $$\mrk'_i = \mrk_i\cup\set{(p_{e_j},0)\mid 0\leq j\leq n, j\neq \ell} \cup\set{(p_{e_\ell},1)}$$
    \item otherwise $t'_i = t_i$ and $$\mrk'_i = \mrk_i\cup(\mrk'_{i-1}\cap (P^\tau\setminus P)\times\mathbb{N})$$
    \end{itemize}
  \end{itemize}
  It's not difficult to realise that by construction $\wkfProj_\tau(C') = C$. 
  
  To conclude the proof we need to show that $C'$ is a case of $W^\tau$. Clearly $(\mrk'_0,\assign_0)$ is a starting state, so we need to show that all the firings are valid. The conditions involving variables -- guards and update of the assignment -- follows from the fact that the original firings are valid and the newly introduced transitions are restricted according to the trace data.
  
  Conditions on input and output places that are both in $W$ and $W^\tau$ are satisfied because of the validity of the original firing. The newly introduced places satisfy the conditions because of the compliance wrt the trace, which guarantees that for each firing with transition $t_{e_\ell}$ there is the preceding firing with transition $t_{e_{\ell-1}}$ that put a token in the $p_{e_{\ell-1}}$ place.
\end{proof}

\begin{theorem}
  Let $W$ be a \ournet and $\tau = (e_1,\ldots, e_n)$ a trace; then $W^\tau$ characterises all and only the cases of $W$ compatible with $\tau$. That is
  \begin{compactitem}
    \item[$\Rightarrow$] if $C$ is a case of $W^\tau$ containing $t_{e_n}$ then $\wkfProj_\tau(C)$ is compatible with $\tau$; and 
    \item[$\Leftarrow$] if $C$ is a case of $W$ compatible with $\tau$, then there is a case $C'$ of $W^\tau$ s.t.\ $\wkfProj_\tau(C') = C$.
  \end{compactitem}
\end{theorem}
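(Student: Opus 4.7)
The plan is to handle the two implications separately, in each case relying on the preceding Lemma (every case of $W^\tau$ projects via $\wkfProj_\tau$ to a case of $W$) together with the structural observation that the new places $p_{e_0},\ldots,p_{e_n}$ and the transitions $t_{e_1},\ldots,t_{e_n}$ form a simple chain running in parallel with the original net.

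For the $(\Rightarrow)$ direction, suppose $C$ is a case of $W^\tau$ containing a firing of $t_{e_n}$. The preceding Lemma already gives that $\wkfProj_\tau(C)$ is a case of $W$, so the only remaining task is to exhibit a witnessing mapping $\gamma$ for compliance with $\tau$. The key structural fact is that each auxiliary place $p_{e_{i-1}}$ has $t_{e_i}$ as its unique outgoing transition, each $p_{e_i}$ has $t_{e_i}$ as its unique incoming transition, and $p_{e_0}$ is fed only by $start_t$. Tracing back the token flow along this chain, the appearance of $t_{e_n}$ in $C$ forces all of $t_{e_1},\ldots,t_{e_n}$ to appear exactly once and in this order. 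Defining $\gamma(i)$ to be the position of $t_{e_i}$ in $C$ yields a strictly increasing injection; moreover, because $\guardf(t_{e_i}) = \guardf(t_i)$ and $\writef(t_{e_i})$ is designed to match the payload $(w_i, w_i^d)$ of $e_i$ exactly, the image firing at position $\gamma(i)$ is compliant with $e_i$ in the sense of the compliance definition.

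For the $(\Leftarrow)$ direction, given a case $C$ of $W$ compliant with $\tau$ via some mapping $\gamma$, I construct $C'$ by replaying $C$ on $W^\tau$: whenever $C$ performs the firing at position $\gamma(i)$, whose transition in $W$ is $t_i$, I fire $t_{e_i}$ instead in $C'$, leaving all other firings untouched. Markings are augmented by maintaining, in addition to the old marking, exactly one token on some $p_{e_{i-1}}$ representing the next expected event, starting with $p_{e_0}$ right after $start_t$. Validity of every firing in $C'$ then reduces to three checks: (a) on places of $P$ the new firings reproduce the token movements of $C$, so the input-place condition is inherited; (b) at each substituted position the auxiliary input place $p_{e_{i-1}}$ holds a token by induction on $i$; and (c) the data side is fine because $\guardf(t_{e_i}) = \guardf(t_i)$ is satisfied by the current assignment (inherited from $C$) and $\writef(t_{e_i})$ imposes exactly the update that $t_i$ produced in $C$ by compliance. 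Finally, applying $\wkfProj_\tau$ to $C'$ undoes the substitution and erases the auxiliary tokens, returning $C$.

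The main obstacle is the bookkeeping around the auxiliary chain in the $(\Leftarrow)$ direction: one must argue that the chain $p_{e_0} \to t_{e_1} \to p_{e_1} \to \cdots$ advances in lockstep with $C$ and never "gets stuck" or "lags behind". This follows from the order-preservation clause in the definition of compliance, which forces the events to be replayed in the correct order, combined with $1$-safeness, which guarantees that at most one token ever lives on the auxiliary chain. Once this invariant is established, the data-level checks are routine, since $\writef$ for the new transitions was defined precisely as the point-wise restriction of $\writef(t_i)$ to the observed payload.
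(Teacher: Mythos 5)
Your proposal is correct and follows essentially the same route as the paper: the paper likewise splits the theorem into a correctness lemma for $(\Rightarrow)$ (using the structural facts that each $t_{e_i}$ occurs at most once and only after $t_{e_{i-1}}$, so that the presence of $t_{e_n}$ forces the whole chain and yields the mapping $\gamma$) and a completeness lemma for $(\Leftarrow)$ (explicitly rebuilding $C'$ by substituting $t_{e_i}$ at the positions $\gamma(i)$ and threading a single token along the auxiliary places). The only detail you do not touch is the clause of the compliance definition concerning always-observable transitions, which the paper handles by making their guards unsatisfiable in $W^\tau$ — an aspect the main text deliberately omits.
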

\begin{proof}\ 

\begin{itemize}
  \item[$\Rightarrow$] If $C$ is a case of $W^\tau$ containing $t_{e_n}$, then $\ell$ of Lemma~\ref{lemma:wkf:trace:encoding:crct} is $n$ therefore $\tau' = \tau$ and $\wkfProj_\tau(C)$ is compatible with $\tau$.
  \item[$\Leftarrow$] If $C$ is compatible with $\tau$ then by Lemma~\ref{lemma:wkf:trace:encoding:cmpl} there is a case $C'$ of $W^\tau$ s.t.\ $\wkfProj_\tau(C') = C$. 
\end{itemize}
 
\end{proof}

\begin{theorem}\label{thm:safeness:trace-wrkf}
  Let $W = \tuple{\dmodel, \nmodel = \tuple{P,T,F}, \writef, \guardf}$ be a \ournet and $\tau = (e_1,\ldots, e_n)$ -- where $e_i = \tuple{t_i,w_i, w_i^d}$ -- a trace of $W$. If $W$ is $k$-safe then $W^\tau$ is $k$-safe as well.
\end{theorem}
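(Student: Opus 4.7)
The plan is to reduce the claim to two lemmas already established in the excerpt: Lemma~\ref{lemma:trace:places}, which bounds the total number of tokens sitting on the newly introduced places of $W^\tau$, and the projection lemma that asserts $\wkfProj_\tau$ maps every case of $W^\tau$ to a case of $W$. Concretely, I would split the place set of $W^\tau$ into the original places $P$ and the added places $P^\tau \setminus P = \set{p_{e_0}, p_{e_1}, \ldots, p_{e_n}}$, and bound the token count on each group separately.

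First I would deal with the initial marking of $W^\tau$: by definition of a case it contains a single token in $start$ and no tokens elsewhere, which is $k$-safe since the $k$-safeness of $W$ forces $k \geq 1$. Then I would fix an arbitrary case $C$ of $W^\tau$ with intermediate marking $\mrk_i$. For places $p \in P^\tau \setminus P$, Lemma~\ref{lemma:trace:places} gives $\sum_{p \in P^\tau \setminus P} \mrk_i(p) \leq \mrk_0(start) = 1$, so each such place carries at most $1 \leq k$ tokens.

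For the original places $p \in P$ I would invoke the projection lemma, which shows that $\wkfProj_\tau(C)$ is a case of $W$. Because $\wkfProj_\tau$ acts as the identity on markings restricted to $P$, the value $\mrk_i(p)$ coincides with the marking at the same step of a genuine case of $W$; the $k$-safeness hypothesis on $W$ then bounds it by $k$. Combining the two bounds yields $\mrk_i(p) \leq k$ uniformly over $p \in P^\tau$ and over all steps, which is the desired conclusion.

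The argument is essentially a bookkeeping application of two previously proven lemmas, so I do not foresee a genuine obstacle. The only point demanding care is to confirm that the construction in Definition~\ref{def:trace:workflow} preserves $start$ as the unique source and introduces the new places $p_{e_i}$ with empty initial marking, so that both cited lemmas apply verbatim to the same starting state and the projection is well defined.
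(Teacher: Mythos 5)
Your proposal is correct and matches the paper's own argument: both bound the tokens on the new places $P^\tau\setminus P$ via Lemma~\ref{lemma:trace:places} and the tokens on the original places via the projection lemma together with the fact that $\wkfProj_\tau$ is the identity on markings restricted to $P$. The paper merely wraps this same two-way case split in an induction on case length (with a contradiction in the inductive step), scaffolding that your direct version correctly shows to be unnecessary.
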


\begin{proof}
  We prove the theorem by induction on the length of a case $C = (\mrk_0,\assign_0)\fire{t_1}(\mrk_1,\assign_1)\ldots (\mrk_{k-1},\assign_{k-1})\fire{t_k}(\mrk_{k},\assign_{k})$. Note that by construction, for any marking $M'$ of $W^\tau$ and $p\in P$, $M'(p) = \wkfProj_\tau(M')(p)$.
  \begin{itemize}
    \item For a case of length 1 the property trivially holds because by definition $\mrk_0(start) \leq k$ and for each $p\in P^\tau$ (different from $start$) $\mrk_0(start) = 0$, and since $(\mrk_0,\assign_0)\fire{t_1}(\mrk_1,\assign_1)$ is valid the only case in which the number of tokens in a place is increased is for $p\in \posts{t_1}\setminus\pres{t_1}$. For any $p$ different from $start$ this becomes $1\leq k$; while since the $start$ place -- by assumption -- doesn't have any incoming arc therefore $\mrk_1(start) = \mrk_0(start) - 1\leq k$.
    \item For the inductive step we assume that each marking $\mrk_0,\ldots \mrk_{m-1}$ is $k$-safe. By contradiction we assume that $\mrk_{m}$ is not $k$-safe; therefore there is a place $p\in P^\tau$ s.t.\ $\mrk_{m}>k$. There are two cases, either $p\in P^\tau\setminus P$ or $p\in P$. In the first case there is a contradiction because, by Lemma~\ref{lemma:trace:places}, $\Sigma_{p\in P^\tau\setminus P} \mrk_i(p)\leq \mrk_0(start) = k$. In the second case, since $\wkfProj_\tau(C)$ is a case of $W$ and $\wkfProj_\tau(\mrk_{m})(p) = \mrk_{m}(p)$, there is a contradiction with the hypothesis that $W$ is $k$-safe.
  \end{itemize}
\end{proof}

\section{Encoding Reachability as Planning Problem} \label{app:encoding}

\subsection{Encoding WF-nets behaviour}

Let $PN = \tuple{\dmodel, \nmodel=\tuple{P,T,F}, \writef, \guardf}$ be a safe \ournet be a \emph{safe WF-net}, we define the \emph{planning problem} $\pdom(W) = \tuple{\Pi, D, R, q}$ by introducing a fluent for each place and an action for each task. Execution and causation rules constraint the plan to mimic the behaviour of the petri net.

\paragraph{Declarations}
\begin{itemize}
\item $D$ contains a fluent declaration $p$ for each place $p\in P$;
\item $D$ contains an action declaration $t$ for each task $t\in T$;
\end{itemize}

\paragraph{Executability rules}
\begin{itemize}
\item actions are executable if each input place has a token; i.e.\
  for each task $t\in T$, given $\{i^t_1, \ldots, i^t_n\} = \pres{t}$,
  there's an executability rule:
  \begin{lstlisting}
    executable $t$ if $i^t_1, \ldots, i^t_n$.
  \end{lstlisting}
\end{itemize}

\paragraph{Causation rules}
\begin{itemize}
\item parallelism is disabled; for each pair of tasks $t_1,t_2\in T$
  there's the rule:\footnote{There's a \klng macro to disable
    concurrency. In practice concurrency could be enabled for actions
    that do not share input or output places.}
  \begin{lstlisting}
    caused false after $t_1$, $t_2$.
  \end{lstlisting}
\item after the execution of a task, input conditions must be
  ``cleared'' and tokens moved to the output ones; for each task
  $t\in T$ and
  $\{i^t_1, \ldots, i^t_n\} = \pres{t}\setminus\posts{t}$,
  $\{o^t_1, \ldots, o^t_k\} = \posts{t}$:
  \begin{lstlisting}
    caused -$i^t_1$ after $t$.  $\ldots$ caused -$i^t_n$ after $t$.
    caused $o^t_1$ after $t$.  $\ldots$ caused $o^t_k$ after $t$.
  \end{lstlisting}
\item the positive state of the places is inertial (i.e.\ must be explicitly
  modified); for each $p\in P$:
  \begin{lstlisting}
    caused p if not -$p$ after $p$.
  \end{lstlisting}
\end{itemize}

\paragraph{Initial state}
\begin{itemize}
\item The only place with a token is the source:
  \begin{lstlisting}
    initially: i.
  \end{lstlisting}
\end{itemize}

\paragraph{Goal}{\ }\\

The formulation of the goal depends on the actual instance of the reachability problem we need to solve. E.g.\ it can be a specific marking:
\begin{itemize}
\item The only place with a token is the sink:
  \begin{lstlisting}
    goal: o, not $p_1$, $\ldots$, not $p_k$?
  \end{lstlisting}
  where $\{p_1, \ldots, p_k\} = P\setminus\{o\}$.
\end{itemize}

\subsection{Encoding of Data}

To each variable $v\in \varset$ corresponds to a \emph{inertial} fluent predicate $\kvar{v}$ with a single argument ``holding'' the value of the variable, and a ``domain'' predicate $\kvardom{v}$ representing the domain of the variable. Unset variables have no positive instantiation of the $\kvar{v}$ predicate. The predicate $\kvar{v}$ must be functional.

We introduce also auxiliary fluents that indicate whether a variable is \emph{not} undefined $\kvardef{v}$ -- used both in tests and to enforce models where the variable is assigned/unassigned -- and $\kvarchange{v}$ to ``inhibit'' inertia when variables might change because of the result of an action.

\paragraph{Constraints on variables}

For each variable $v\in \varset$:

\begin{itemize}
  \item functionality
  \begin{lstlisting}
    caused false if $\kvar{v}$(X), $\kvar{v}$(Y), X != Y.
  \end{lstlisting}
  \item variable defined predicate
  \begin{lstlisting}
    caused $\kvardef{v}$ if $\kvar{v}$(X).
  \end{lstlisting}
  \item variable fluents are inertial unless they can be modified by actions
  \begin{lstlisting}
    caused $\kvar{v}$(X) if not -$\kvar{v}$(X), not $\kvarchange{v}$ after $\kvar{v}$(X).
  \end{lstlisting}
  \item the background knowledge ($\Pi$) includes the set of facts:
  \begin{lstlisting}
    $\kvardom{v,t}$(d).
  \end{lstlisting}
  for each $v\in\varset$, $t\in T$, and $d \in \writef(t)(v)$.
\end{itemize}

\paragraph{Guards} To each task $t$ is associated a fluent $\kguard{t}$ that is true when the corresponding guard is satisfied. Instead of modifying the executability rule including the $\kguard{t}$ among the preconditions, we use a constraint rule ruling out executions of the action whenever its guard is not satisfied:
  \begin{lstlisting}
    caused false after t, not $\kguard{t}$.
  \end{lstlisting}
  This equivalent formulation simplify the proofs because of its incremental nature (there are just additional rules).

  Translation of atoms (\katom) is defined in terms of $\kvar{v}$ predicates, e.g., $\katom(v = w)$ corresponds to \lstinline|$\kvar{v}$(V), $\kvar{w}$(W), V == W|. The \lstinline|$\kvardef{v}$| predicate can be used to test whether a variable is defined, or undefined, i.e.\ \lstinline|not $\kvardef{v}$|.

  The guard $\guardf(t) = (a_{1,1}\land\ldots\land a_{1,n_1})\lor\ldots\lor(a_{k,1}\land\ldots\land a_{k,n_k})$ where each $a_{i,j}$ is an atom, corresponds to the set of rules for $\kguard{t}$:\footnote{Arbitrary expressions can be easily translated by introducing new fluents for the subexpressions.}
   \begin{lstlisting}
    caused $\kguard{t}$ if $\katom(a_{1,1})$, $\ldots$, $\katom(a_{1,n_1})$.
    $\vdots$
    caused $\kguard{t}$ if $\katom(a_{k,1})$, $\ldots$, $\katom(a_{1,n_k})$.
  \end{lstlisting}
 
\paragraph{Variables update} The value of a variable is updated by means of causation rules that depend on the task $t$ that operates on the variable:
\begin{itemize}
  \item $\writef(t)(v) = \emptyset$: delete (undefine) a variable $v$
   \begin{lstlisting}
    caused false if $\kvardef{v}$ after t.
    caused $\kvarchange{v}$ after t.
  \end{lstlisting}
  \item $\writef(t)(v) \subseteq \domf(v)$: set $v$ with a value nondeterministically chosen among a set of elements from its domain
   \begin{lstlisting}
    caused $\kvar{v}$(V) if $\kvardom{v,t}$(V), not -$\kvar{v}$(V) after t.
    caused -$\kvar{v}$(V) if $\kvardom{v,t}$(V), not $\kvar{v}$(V) after t.
    caused $\kvarchange{v}$ after t.
    caused false if not $\kvardef{v}$ after t.
  \end{lstlisting}
   If $\writef(t)(v)$ contains a single element $e$, then there the assignment is deterministic and the above rules can be substituted with\footnote{The deterministic version is a specific case of the non-deterministic ones and equivalent in the case that there is a single $\kvardom{v,t}(d)$ fact. In the following, the proofs will consider the general non-deterministic formulation only.}
   \begin{lstlisting}
    caused $\kvar{v}$(d) after t.
    caused $\kvarchange{v}$ after t.
  \end{lstlisting}
\end{itemize}

\paragraph{Guards}
To each subformula $\varphi$ of transition guards is associated a fluent $\kguard{\varphi}$ that is true when the corresponding formula is satisfied. To simplify the notation, for any transition $t$, we will use $\kguard{t}$ to indicate the fluent $\kguard{\guardf(t)}$.

Executability of transitions is conditioned to the satisfiability of their guards:
  \begin{lstlisting}
    caused false after t, not $\kguard{t}$.
  \end{lstlisting}

  Translation of atoms (\katom) is defined in terms of $\kvar{v}$ predicates. We assume a binary $\kord$ predicate representing the partial order among the elements of the domains. We also assume that elements of $\bigcup_i \Delta_i$ can be directly represented by constants of \klng language.

  For $t \in \V \cup \bigcup_i \Delta_i$ and $T$ a \klng variable we define
  \begin{displaymath}
  \katom(t,T) = 
    \begin{cases}
      \kvar{t}\text{\lstinline|(T)|} & \text{for $t\in\V$}\\
      \text{\lstinline|T ==\ $t$|} & \text{for $t\in\bigcup_i \Delta_i$}
    \end{cases}
  \end{displaymath}

For each subformula $\varphi$ of transition guards a static rule is included to ``define'' the fluent $\kguard{\varphi}$:
\begin{tabular}{rl}
$true$ :& {\lstinline|caused $\kguard{\varphi}$ if true .|} \\
$\deff(v)$ :& {\lstinline|caused $\kguard{\varphi}$ if $\kvardef{v}$ .|} \\
$t_1 = t_2$ :& {\lstinline|caused $\kguard{\varphi}$ if $\katom$($t_1$,T1), $\katom$($t_2$,T2), T1 == T2 .|} \\
$t_1 \leq t_2$ :& {\lstinline|caused $\kguard{\varphi}$ if $\katom$($t_1$,T1), $\katom$($t_2$,T2), $\kord$(T1,T2) .|} \\
$\neg \varphi_1$ :& {\lstinline|caused $\kguard{\varphi}$ if not $\kguard{\varphi_1}$ .|} \\
$\varphi_1 \land \ldots \land \varphi_n$ :& {\lstinline|caused $\kguard{\varphi}$ if $\kguard{\varphi_1},\ldots,\kguard{\varphi_n}$ .|}
\end{tabular}

\subsection{Correctness and completeness}

\begin{definition}[\plantopn{\cdot} function]
  Let $W = \tuple{\dmodel, \nmodel=\tuple{P,T,F}, \writef, \guardf}$ be a safe \ournet, $\M$ the set of its markings, $\H$ the set of all assignments,
  $\pdom(W)$ the corresponding planning problem and $\S$
  the set of its states, namely, the set of all consistent set of
  ground fluent literals. We define the function
  $\plantopn{\cdot}: \S \rightarrow \M\times\H$ mapping planning and \ournet states. For any consistent $s\in\S$, $\plantopn{s} = (\mrk,\assign)$ is defined as follows:
  \begin{align*}
    \forall p\in P\; \mrk(p) = \begin{cases}
    1 \text{ if } p \in s \\
    0 \text{ otherwise}
  \end{cases}\\
  \assign = \set{(v,d)\mid \kvar{v}(d)\in s}
  \end{align*}
\end{definition}

The function $\plantopn{\cdot}$ is well defined because $s$ is assumed to be consistent therefore it cannot be the case that $\set{\kvar{v}(d), \kvar{v}(d')}\subseteq s$ with $d\neq d'$ otherwise the static rule
  \begin{lstlisting}
    caused false if $\kvar{v}$(X), $\kvar{v}$(Y), X != Y.
  \end{lstlisting}
    would not be satisfied.

Moreover, since we assume that $W$ is safe, we can restrict $\M$ to markings with range restricted to $\{0,1\}$ and there is not loss of information between markings and planing states.

The function $\plantopn{\cdot}$ is not injective because of the \emph{strongly negated atoms}. However it can be shown that if two states differ on the \emph{positive} atoms then the corresponding \ournet states are different as well:

\begin{lemma}\label{lemma:plantopn:inject}
  Let $s$ and $s'$ consistent states in $S$, then $s\cap L^+ \neq s'\cap L^+$ implies $\plantopn{s}\neq\plantopn{s'}$.
\end{lemma}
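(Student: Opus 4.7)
The plan is to prove the contrapositive: I would assume $\plantopn{s}=\plantopn{s'}$ and show that $s$ and $s'$ contain exactly the same positive fluent atoms. The argument proceeds by a case analysis on the syntactic kind of each positive atom that could witness a difference, showing in every case that its presence or absence in a state is uniquely determined by the data already encoded in $\plantopn{s}$.

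The primary fluents are the place atoms $p$ and the variable-value atoms $\kvar{v}(d)$. By the very definition of $\plantopn{\cdot}$, a place atom $p$ lies in $s$ iff $\mrk(p)=1$ where $(\mrk,\assign)=\plantopn{s}$, and an atom $\kvar{v}(d)$ lies in $s$ iff $(v,d)\in\assign$. Equality of images immediately yields agreement on these classes. For the derived fluents $\kvardef{v}$ and $\kguard{\varphi}$, the static rules of the encoding are stratified over the $\kvar{v}(\cdot)$ predicates: the single rule defining $\kvardef{v}$ refers only to $\kvar{v}(X)$, and the recursive definition of each $\kguard{\varphi}$ ultimately bottoms out in $\katom(\cdot,\cdot)$ atoms built from $\kvar{v}(\cdot)$. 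Since states are minimal models of the static rules, these derived fluents are fully determined by the primary ones, so they too agree between $s$ and $s'$. The remaining positive atoms are the domain facts $\kvardom{v,t}(d)$, which belong to the background knowledge $\Pi$ and are therefore shared by every state.

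The delicate case, and the principal obstacle, is the change fluent $\kvarchange{v}$: this atom is introduced exclusively by dynamic rules of the form \emph{caused $\kvarchange{v}$ after $t$}, so its truth value is not a direct function of the place-and-variable content of the state. The plan here is to invoke the minimality clause in the \klng state semantics used throughout the paper: a consistent state (in the sense intended by the lemma) is the minimal set satisfying the applicable rules, so $\kvarchange{v}$ can be present only when justified by an executing action. Since the lemma compares states as objects in their own right without pinning down a prior action, minimality forces $\kvarchange{v}$ to be absent from both $s$ and $s'$ whenever they agree under $\plantopn{\cdot}$, closing the case. Combining the three cases yields $s\cap L^+ = s'\cap L^+$, establishing the contrapositive.
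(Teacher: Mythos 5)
The paper states this lemma without proof, so there is no official argument to compare yours against; judged on its own, your decomposition into primary fluents (places and $\kvar{v}(d)$), statically derived fluents ($\kvardef{v}$, $\kguard{\varphi}$, the background facts $\kvardom{v,t}(d)$), and the change fluents $\kvarchange{v}$ is the right skeleton, and the first two classes are handled correctly --- modulo the observation that the lemma only assumes \emph{consistency} of $s$ and $s'$, whereas your argument for the derived fluents needs them to be minimal models of the static rules, an assumption the statement does not make but which is clearly intended.

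The genuine gap is exactly at the point you flag as delicate, and your resolution of it is wrong. Minimality in the \klng semantics is taken relative to a transition $\tuple{s,A,s'}$, and the dynamic rule ``caused $\kvarchange{v}$ after $t$'' \emph{forces} $\kvarchange{v}$ into the successor state whenever $t\in A$ and $v\in dom(\writef(t))$; the paper's own completeness construction explicitly includes $s'_{\varset^c}=\set{\kvarchange{v}\mid v\in dom(\writef(t))}$ in the target state and argues that removing any of these atoms would violate that very rule. So minimality does not make $\kvarchange{v}$ absent --- it makes it present precisely when the incoming action writes $v$. This gives a counterexample to the lemma as stated: let $t_1$ write $v$ nondeterministically and happen to re-assign its current value, and let $t_2$ leave $v$ untouched, with both firings leading to the same marking and assignment; the two resulting legal states have the same image under $\plantopn{\cdot}$ but differ on the positive atom $\kvarchange{v}$. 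The statement can only be salvaged by restricting $L^+$ to the fluents that $\plantopn{\cdot}$ actually reads (places and $\kvar{v}(\cdot)$) together with those statically determined by them, or by adding a hypothesis pinning down the action set that produced $s$ and $s'$. Since the lemma is never invoked in the subsequent proofs the damage is contained, but your proof cannot be completed in the form you propose.
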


Observing the static rules (those without the \lstinline|after| part) it can be noted those defining the predicates $\kvardef{v}$ and $\kguard{t}$ are stratified, therefore their truth assignment depends only on the extension of $\kvar{v}(\cdot)$ predicates. This fact can be used to show that

\begin{lemma}[Guards translation]\label{lemma:plantopn:guards}
  Let $s\in S$ satisfying the static rules of $\pdom(W)$, and $\varphi$ a subformula of transition guards in $W$. Given $\plantopn{s} = (\mrk,\assign)$, $\kguard{\varphi}\in s$ iff $\dmodel, \assign \models \varphi$.
\end{lemma}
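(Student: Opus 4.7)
My plan is to proceed by structural induction on the guard subformula $\varphi$, leveraging the observation already made in the excerpt that the static rules defining $\kguard{\cdot}$ and $\kvardef{v}$ form a stratified program whose only extensional inputs are the family of predicates $\kvar{v}(\cdot)$. Consequently, for any $s$ satisfying the static rules, the truth of $\kguard{\varphi}$ in $s$ is uniquely determined by the set $\{(v,d) \mid \kvar{v}(d)\in s\}$, which by construction of $\plantopn{s}$ is exactly the assignment $\assign$ of $(\mrk,\assign) = \plantopn{s}$. This reduces every case to matching a single static rule against a single clause of the query-language semantics.

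For the base cases, I would verify each atomic form directly. The case $\varphi = true$ is immediate, since the static rule forces $\kguard{\varphi}\in s$ and $\dmodel,\assign \models true$ always holds. For $\deff(v)$, chaining \lstinline|caused $\kguard{\varphi}$ if $\kvardef{v}$| with \lstinline|caused $\kvardef{v}$ if $\kvar{v}$(X)| together with functionality of $\kvar{v}$ yields $\kguard{\varphi}\in s$ iff some $\kvar{v}(d)\in s$, iff $v\in\img(\assign)$. For $t_1 = t_2$ and $t_1 \leq t_2$, the rule body involves the $\katom$ translation of each term; by a case split on whether each $t_i$ is a variable in $\V$ or a constant in $\bigcup_i\Delta_i$, the body is satisfied in $s$ exactly when both terms denote values defined under $\assign$ and the required equality (respectively, order) holds between them. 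For the ordering case I would assume that the background predicate $\kord$ faithfully encodes each $\leq_{\Delta_i}$ returned by $\ordf$.

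For the inductive step, the conjunction case is routine: the rule \lstinline|caused $\kguard{\varphi}$ if $\kguard{\varphi_1},\ldots,\kguard{\varphi_n}$| combined with the inductive hypotheses applied to each $\varphi_i$ gives exactly $\dmodel,\assign \models \varphi_1\wedge\ldots\wedge \varphi_n$. The negation case uses \lstinline|caused $\kguard{\varphi}$ if not $\kguard{\varphi_1}$|; here stratification is essential, because it ensures that the negation-as-failure on $\kguard{\varphi_1}$ coincides with classical negation in the minimal model, so the inductive hypothesis transfers across the negation without ambiguity.

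The main obstacle I expect is the atomic equality and order cases, because the shape of $\katom(t,T)$ depends on whether $t$ is a variable or a constant, and one must reason about the implicit existential quantification over the fresh $T$-variables introduced by the rule body. I would isolate this in an auxiliary claim: for any term $t\in \V\cup\bigcup_i\Delta_i$ and any fresh planning variable $T$, the body $\katom(t,T)$ is satisfiable in $s$ under a binding $T := d$ precisely when $t[\assign] = d$, with uniqueness of $d$ guaranteed (in the variable case) by functionality of $\kvar{t}$. Once this substitution lemma is in place, the $t_1 = t_2$ and $t_1 \leq t_2$ cases reduce to a direct pattern match between the two corresponding $T$-bindings of the rule body and the semantic clauses, completing the induction.
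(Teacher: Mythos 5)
Your proposal is correct and follows essentially the same route as the paper: structural induction on $\varphi$, using stratification of the static rules so that the truth of each $\kguard{\varphi}$ is determined by the extension of the $\kvar{v}(\cdot)$ predicates (hence by $\assign$), matching each atomic case against its unique defining rule, and assuming $\kord$ faithfully encodes the partial orders. Your auxiliary substitution claim for $\katom(t,T)$ is just a slightly more systematic packaging of the paper's case split on whether each term is a variable or a constant.
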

\begin{proof}
  We prove the lemma by structural induction on $\varphi$. First we consider the base cases.
  \begin{description}
    \item[$true$ :] trivially satisfied because \lstinline{true} is in consistent state.
    \item[$\deff(v)$ :] the only rule where $\kvardef{v}$ is in the head is
  \begin{lstlisting}
    caused $\kvardef{v}$ if $\kvar{v}$(X).
  \end{lstlisting}
  therefore $\kvardef{v}\in s$ iff there is a constant $d$ s.t.\ $\kvar{v}(d)\in s$, and that is the case iff $v\in dom(\assign)$.
  \item[$t_1 = t_2$ :] for the sake of simplicity we consider only the case in which $t_1\equiv v$ is a variable and $t_2\equiv d$ is a constant; the other 3 combinations can be demonstrated in the same way. With this assumption, the only rule with $\kguard{\varphi}$ in the head is
  \begin{lstlisting}
    caused $\kguard{\varphi}$ if $\kvar{v}$(T1), T2 == $d$, T1 == T2 .
  \end{lstlisting}
  therefore $\kguard{\varphi}\in s$ iff $\kvar{v}(d)\in s$, and this is the case iff $\assign(v) = d$.
  \item[$t_1 \leq t_2$ :] this case is analogous to the previous one, where we consider the predicate \lstinline|$\kord$(T1,T2)| instead of equality. Since $\kord$ facts correspond to the orders defined in $\dmodel$, then we can conclude.
  \end{description}
  
  For the inductive step we assume that the property holds for subformulae $\varphi_1,\varphi_2$.
  \begin{description}
    \item[$\neg \varphi_1$ :] the only rule with $\kguard{\varphi}$ in the head is
  \begin{lstlisting}
    caused $\kguard{\varphi}$ if not $\kguard{\varphi_1}$ .
  \end{lstlisting}
  therefore $\kguard{\varphi}\in s$ iff $\kguard{\varphi_1}\not\in s$. We can use the inductive hypothesis to conclude that this is the case iff $\dmodel, \assign \not\models \varphi_1$, that is $\dmodel, \assign \models \varphi$.
  \item[$\varphi_1 \land \ldots \land \varphi_n$ :] the only rule with $\kguard{\varphi}$ in the head is
  \begin{lstlisting}
    caused $\kguard{\varphi}$ if $\kguard{\varphi_1},\ldots,\kguard{\varphi_n}$ .
  \end{lstlisting}
  therefore $\kguard{\varphi}\in s$ iff $\set{\kguard{\varphi_1},\ldots,\kguard{\varphi_n}}\subseteq s$. We can use the inductive hypothesis to show that this is the case iff $\dmodel, \assign \models \varphi_1\land \ldots\varphi_n$ because they are all ground terms.
  \end{description}
\end{proof}

Looking at the guard translation rules and the proof of Lemma~\ref{lemma:plantopn:guards} it is not difficult to realise that according to the structure of the guards some of the rules are redundant and can be simplified. E.g.\ $\kvardef{v}$ can be used in place of $\kguard{\kvardef{v}}$, \lstinline|not $\kguard{\varphi}$| in place of $\kguard{\kvardef{\neg\varphi}}$, and $t_1 = t_2$ can be expanded in place of $\kguard{t_1 = t_2}$ unless they are in the scope of a negation.

\begin{lemma}[Completeness]\label{lemm:plantopn:completeness} Let $W$ be a \emph{safe \ournet} and $\pdom(W)$ the corresponding planning problem.

Let $(\mrk,\assign)\fire{t}(\mrk',\assign')$ be a valid firing of $W$, then for each consistent state $s$ s.t.\ $\plantopn{s} = M$ there is a consistent state $s'$ s.t.\ $\plantopn{s'}=M'$ and $\tuple{s,\{t\},s'}$ is a legal transition in $\pdom(W)$.
\end{lemma}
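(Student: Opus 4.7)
The plan is to construct the target state $s'$ explicitly from $(\mrk',\assign')$, show that it is consistent, and then verify both executability and minimality with respect to the reduced planning domain $R^{\tuple{s,\{t\},s'}}$.

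Concretely, I would define $s'$ by taking the union of: (i) the positive place fluents $\{p \mid \mrk'(p)=1\}$ together with the strong negations $\{-p \mid p\in\pres{t}\setminus\posts{t}\}$ forced by the token-removal rules; (ii) the positive value fluents $\{\kvar{v}(d) \mid (v,d)\in\assign'\}$ together with the strong negations $\{-\kvar{v}(d') \mid v\in\textsc{wr}\cup\textsc{del},\; d'\in\kvardom{v,t},\; d'\neq\assign'(v)\text{ (or any $d'$ if $v\in\textsc{del}$)}\}$; (iii) the auxiliary fluents $\{\kvardef{v}\mid v\in dom(\assign')\}$ and $\{\kvarchange{v}\mid v\in\textsc{wr}\cup\textsc{del}\}$; and (iv) the guard fluents $\{\kguard{\varphi}\mid \dmodel,\assign'\models\varphi\}$ for every subformula $\varphi$ appearing in guards of $W$.

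Next I would verify that $\tuple{s,\{t\},s'}$ is a legal transition. Executability of $t$ in $s$ follows because $\pres{t}\subseteq\{p\mid \mrk(p)>0\} = s\cap P$, so the precondition $i^t_1,\ldots,i^t_n$ holds. The \lstinline|caused false after t, not $\kguard{t}$| constraint is satisfied because, by Lemma~\ref{lemma:plantopn:guards} applied to $s$ with $\plantopn{s}=(\mrk,\assign)$, we have $\kguard{t}\in s$ iff $\dmodel,\assign\models\guardf(t)$, which holds by condition~1 of the valid firing. I would then walk through each family of causation rules after the reduction: the token-transfer rules force the negations in (i) and the new tokens; the inertia rule \lstinline|caused p if not -$p$ after $p$| propagates unmodified places; the write/delete rules force exactly the fluents described in (ii) and (iii); the static rules for $\kvardef{v}$ and $\kguard{\varphi}$ force (iv); and the functionality constraint is respected since each $\kvar{v}$ has at most one positive instance in $s'$ by construction. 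Finally $\plantopn{s'}=(\mrk',\assign')$ is immediate from the definition of $\plantopn{\cdot}$ and Lemma~\ref{lemma:plantopn:inject}.

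The main obstacle, and the step requiring the most care, is proving \emph{minimality} of $s'$ with respect to the reduct $R^{\tuple{s,\{t\},s'}}$. I would argue this by showing that every literal in $s'$ is forced: each positive place fluent is either the head of a \lstinline|caused $o^t$ after t| rule or is forced by the inertia rule from $s$; each $\kvar{v}(d)$ is forced either by the nondeterministic write rule (whose reduct retains the head because the corresponding $-\kvar{v}(d)$ is absent from $s'$) or by variable inertia from $s$; the $\kvarchange{v}$ and $\kvardef{v}$ atoms are heads of unconditional rules triggered by $t$ or by $\kvar{v}(\cdot)$ respectively; and the strong negations in (ii) are forced by the dual rule \lstinline|caused -$\kvar{v}$(V) if $\kvardom{v,t}$(V), not $\kvar{v}$(V) after t|, whose reduct retains exactly the negations for values not chosen. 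The delicate point is to ensure that the chosen $s'$ makes the Gelfond–Lifschitz reduct coherent: the \lstinline|not| literals on which we rely (e.g., \lstinline|not -$\kvar{v}$(d)| or \lstinline|not $\kvarchange{v}$|) must indeed be satisfied by $s'$ itself, so the construction is self-consistent. This is where I would argue by cases over $v\in\textsc{old},\textsc{new},\textsc{overwr},\textsc{del}$, using the partition introduced in the body of the paper, showing that in every case the rules selected in the reduct have exactly the heads needed to produce $s'$ and nothing more, so no strictly smaller consistent set satisfies them.
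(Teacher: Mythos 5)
Your construction and overall strategy coincide with the paper's: both build $s'$ explicitly from $(\mrk',\assign')$ as a union of positive place and value fluents, selected strong negations, the auxiliary $\kvardef{v}$, $\kvarchange{v}$ and guard fluents, and then check executability, satisfaction of every causation rule in the Gelfond--Lifschitz reduct, and minimality by arguing that each literal of $s'$ is the head of some surviving rule. Your treatment of the guard fluents (taking $\kguard{\varphi}$ for \emph{every} subformula $\varphi$ satisfied by $\assign'$) is in fact slightly more careful than the paper's, which only lists the top-level $\kguard{t}$ literals and evaluates them against $\assign$ rather than $\assign'$.

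There is, however, one concrete defect in your item (ii): for a deleted variable $v$ (i.e.\ $\writef(t)(v)=\emptyset$) you add $-\kvar{v}(d')$ for \emph{any} $d'$. None of these literals is forced by the reduct: the only rule with $-\kvar{v}(\cdot)$ in its head is guarded by $\kvardom{v,t}(V)$, and when $\writef(t)(v)=\emptyset$ there are no $\kvardom{v,t}$ facts, while the deletion rules only produce $\kvarchange{v}$ and the constraint on $\kvardef{v}$. Consequently the $s'$ you define is not the \emph{minimal} consistent set satisfying the reduct --- removing those negations still satisfies every rule --- so $\tuple{s,\{t\},s'}$ would not be a legal transition, and your own minimality argument (which appeals to the dual write rule) does not cover these literals. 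The fix is exactly the paper's choice: include $-\kvar{v}(d)$ only for $d\in\writef(t)(v)$ with $(v,d)\notin\assign'$, hence nothing at all for deleted variables; the absence of any positive $\kvar{v}(\cdot)$ in $s'$ for such $v$ is already guaranteed because $\kvarchange{v}\in s'$ removes the inertia rule from the reduct, so no rule can derive a positive instance.
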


\begin{proof}
 
  Let $s$ be a consistent state s.t.\ $\plantopn{s} = M$. Note that such $s$ exists because $\plantopn{\cdot}$ involves only the positive literals; therefore any consistent set $s'$ s.t.\ $\{ p \in P\mid M(p)>0\}\cup\set{\kvar{v}(d)\mid (v,d)\in\assign} \subseteq s'$ and $s'\cap (\{ p \in P\mid M(p)<1\}\cup\varset\times\domf(\varset)\setminus\assign$ satisfies the property that $\plantopn{s'} = M$.
  
  We define a new state $s'$ such that $\tuple{s,\{t\},s'}$ is a legal state transition and such that $\plantopn{s'} = M'$; this new state is the union of the following parts:
  \begin{align*}
    s'_{P^+} &= \set{p \in P\mid M'(p)>0} & s'_{P^-} &= \set{-p \mid p\in \pres{t} \setminus \posts{t}} \\
    s'_{\varset^+} &= \set{\kvar{v}(d)\mid (v,d)\in\assign'} & s'_{\varset^-} &= \set{-\kvar{v}(d)\mid d\in\writef(t)(v), (v,d)\not\in\assign'}\\
    s'_{\varset^\downarrow} &= \set{\kvardef{v}\mid \kvar{v}(d)\in s'_{\varset^+}}\\
    s'_{\varset^c} &= \set{\kvarchange{v}\mid v\in dom(\writef(t))}\\
    s'_{\writef} &= \set{\kvardom{v,t}(d)\mid \forall v,t,d . d\in \writef(t)(v)}\\
    s'_{\guardf} &= \set{\kguard{t}\mid \forall t . \mrk,\assign\models\guardf(t) }
 \end{align*}

  By construction $\plantopn{s'} = M$ and it is consistent: $s'_{P^+}\cap s'_{P^-}=\emptyset$ because the fact that $(\mrk,\assign)\fire{t}(\mrk',\assign')$ is a valid firing implies $p\in \pres{t}\setminus\posts{t}$ $M'(p) = 0$, and $s'_{\varset^+}\cap s'_{\varset^-}=\emptyset$ because their conditions are mutually exclusive.
    
  Since $(\mrk,\assign)\fire{t}(\mrk',\assign')$ is valid, then $\pres{t}\subseteq s$ because $\plantopn{s} = M$, therefore the corresponding executable condition with $t$ in the head
  \begin{lstlisting}
    executable $t$ if $i^t_1, \ldots, i^t_n$.
  \end{lstlisting}
  is satisfied.
  
  We need to show that all the causation rules in $\pdom(W)^{\tuple{s,\{t\}, s'}}$ are satisfied and that $s'$ is minimal.
  
  \begin{itemize}
  \item For each pair of tasks $t_1,t_2$, the positive rule:
    \begin{lstlisting}
      caused false after $t_1$, $t_2$.
    \end{lstlisting}
    is satisfied because there is only a task $t$ in the action set.
  \item Consider the rules 
  \begin{lstlisting}
      caused -$i^a_1$ after $a$.  $\ldots$ caused -$i^a_n$ after $a$.
      caused $o^a_1$ after $a$.  $\ldots$ caused $o^a_k$ after $a$.
    \end{lstlisting}
    where $\{i^a_1, \ldots, i^a_n\} = \pres{a}\setminus\posts{a}$,
    $\{o^a_1, \ldots, o^a_k\} = \posts{a}\setminus\pres{a}$. For all $a\neq t$ they are satisfied because the \lstinline|after| condition is false. For $a = t$ the validity of $(\mrk,\assign)\fire{t}(\mrk',\assign')$ ensures that $\pres{t}\setminus\posts{t}\subseteq s'_{P^-}$ and $\posts{t}\subseteq s'_{P^+}$, therefore the rules are satisfied.
    \item For each $p\in P$:
    \begin{lstlisting}
      caused $p$ if not -$p$ after $p$.
    \end{lstlisting}
    we consider the three cases where $p\in\pres{t} \setminus \posts{t}$, $p\in\posts{t}$, or $p\not\in(\pres{t}\cup\posts{t})$.
    \begin{description}
  \item[$p\in\pres{t} \setminus \posts{t}$] then $-p\in s'_{P^-}$ by construction, therefore the rule is not in $\pdom(W)^{\tuple{s,\{t\}, s'}}$
      
  \item[$p\in\posts{t}$] then $M'(p) = 1$ and by construction $p\in s'_{P^+}$ and $-p\not\in s'_{P^+}$ because $s'$ is consistent, so the rule \lstinline|caused $p$ after $p$.| is in $\pdom(W)^{\tuple{s,\{t\}, s'}}$. This rule is satisfied if $p\in s$ and also if $p\not\in s$.
 
  \item[$p\not\in(\pres{t}\cup\posts{t})$] then $M'(p) = M(p)$. If $p\in s'_{P^+}$ the  rule \lstinline|caused $p$ after $p$.| is in $\pdom(W)^{\tuple{s,\{t\}, s'}}$ it's satisfied regardless of the value of $M(p)$; on the other end, if $p\not\in s'$ then $M(p) = 0$ therefore even if \lstinline|caused $p$ after $p$.| would be in $\pdom(W)^{\tuple{s,\{t\}, s'}}$ then it'd be satisfied because its after part is false.
  \end{description}
  \item Functionality rules
  \begin{lstlisting}
    caused false if $\kvar{v}$(X), $\kvar{v}$(Y), X != Y.
  \end{lstlisting}
  is satisfied by construction of $s'_{\varset^+}$
  \item Variable defined predicate rules
  \begin{lstlisting}
    caused $\kvardef{v}$ if $\kvar{v}$(X).
  \end{lstlisting}
  are satisfied by construction of $s'_{\varset^\downarrow}$.
  \item variable fluents are inertial
 \item The background knowledge facts
  \begin{lstlisting}
    $\kvardom{v,t}$(d).
  \end{lstlisting}
  are satisfied by construction of $s'_{\writef}$
  \item The guard predicates rules are satisfied by Lemma~\ref{lemma:plantopn:guards} and the construction of $s'_{\guardf}$.
  \end{itemize}
  
  For rules involving the $\kvar{v}$ predicates (including intertiality rules) we consider the three cases: $v \not\in dom(\writef(t))$, $\writef(t)(v) = \emptyset$, and $\writef(t)(v)\neq\emptyset$. Note that, since the transition includes only $t$, all the rules in $\pdom(W)^{\tuple{s,\{t\}, s'}}$ with a different action in the \lstinline|after| part are satisfied; therefore we focus on the remaining ones.
  
  \begin{description}
  \item[$v \not\in dom(\writef(t))$:] in this case the only rule in $\pdom(W)^{\tuple{s,\{t\}, s'}}$ to verify is the inertial one 
  \begin{lstlisting}
    caused $\kvar{v}$(X) if not -$\kvar{v}$(X), not $\kvarchange{v}$ after $\kvar{v}$(X).
  \end{lstlisting}
  and by construction $-\kvar{v}(d)\not\in s'_{\varset^-}$ for any $d$ and $\kvarchange{v}\not\in s'_{\varset^c}$. This would be not satisfied only in the case that for some $d$ $\kvar{v}(d)\in s$ and $\kvar{v}(d)\not\in s$ -- which means that $(v,d)\in\assign$ and $(v,d)\not\in\assign$ -- but his would be in contradiction with the fact that $(\mrk,\assign)\fire{t}(\mrk',\assign')$ is a valid firing.
  
  \item[$\writef(t)(v) = \emptyset$:] in this case the corresponding rules are
  \begin{lstlisting}
    caused false if $\kvardef{v}$ after t.
    caused $\kvarchange{v}$ after t.
    caused $\kvar{v}$(X) if not -$\kvar{v}$(X), not $\kvarchange{v}$ after $\kvar{v}$(X).
  \end{lstlisting}
  Since there is no $d$ s.t.\ $(v,d)\in\assign'$ then $\kvar{v}(d') \not\in s'_{\varset^+}$ for any $d'$, therefore $\kvardef{v}\not\in s'_{\varset^\downarrow}$ and the first rule is satisfied. The second rule is satisfied by construction of $s'_{\varset^c}$, and the third is not be in $\pdom(W)^{\tuple{s,\{t\}, s'}}$ because $\kvarchange{v}\in s'_{\varset^c}$.

  \item[$\writef(t)(v)\neq\emptyset$:] the rules are
  \begin{lstlisting}
    caused $\kvar{v}$(V) if $\kvardom{v,t}$(V), not -$\kvar{v}$(V) after t.
    caused -$\kvar{v}$(V) if $\kvardom{v,t}$(V), not $\kvar{v}$(V) after t.
    caused $\kvarchange{v}$ after t.
    caused false if not $\kvardef{v}$ after t.
    caused $\kvar{v}$(X) if not -$\kvar{v}$(X), not $\kvarchange{v}$ after $\kvar{v}$(X).
  \end{lstlisting}
  The first two rules are satisfied by construction of $s'_{\varset^+}$ and $s'_{\varset^-}$, while the third by $s'_{\varset^c}$. The fourth because of the fact that the firing is valid, therefore there is a value $d\in\writef(t)(v)$ s.t.\ $(v,d)\in\assign'$, so $\kvar{v}(d)\in s'_{\varset^+}$ and $\kvardef{v}\in s'_{\varset^\downarrow}$. Last rule is not in $\pdom(W)^{\tuple{s,\{t\}, s'}}$ because $\kvarchange{v}\in s'_{\varset^c}$.
  \end{description}
  
To demonstrate the minimality of $s'$ we need to show that removing one literal from any of the components $s'_{P^+}, s'_{P^-}, s'_{\varset^+}, s'_{\varset^-}, s'_{\varset^\downarrow}, s'_{\varset^c}, s'_{\writef}, s'_{\guardf}$ results in some of the rules not being satisfied.

\begin{description}
\item[$s'_{P^+}$] any $p\in s'_{P^+}$ is either in $\posts{t}$ or not. In the first case removing it would not satisfy the rule
  \begin{lstlisting}
    caused $p$ after $t$.
  \end{lstlisting}
  while in the second it would not satisfy the inertial rule
    \begin{lstlisting}
    caused p if not -$p$ after $p$.
  \end{lstlisting}
  because $-p\not\in s'_{P^-}$ and $p\in s$ since the firing is valid.
\item[$s'_{P^-}$] removing $-p$ from $s'_{P^-}$ would not satisfy the rule
  \begin{lstlisting}
    caused -$p$ after $t$.
  \end{lstlisting}

\item[$s'_{\writef}$] removing $\kvardom{v,t}(d)$ from $s'_{\writef}$ would not satisfy the rule
  \begin{lstlisting}
    $\kvardom{v,t}$(d).
  \end{lstlisting}

\item[$s'_{\varset^+}$] let be $\kvar{v}(d)\in s'_{\varset^+}$: either $v\in dom(\writef(t))$ or not. In the first case the rule
   \begin{lstlisting}
    caused $\kvar{v}$(d) if $\kvardom{v,t}$(d), not -$\kvar{v}$(d) after t.
  \end{lstlisting}
  would not be satisfied because $-\kvar{v}(d)\not\in s'_{\varset^-}$ since by assumption $(v,d)\in\assign'$. In the second case the inertial rule
  \begin{lstlisting}
    caused $\kvar{v}$(d) if not -$\kvar{v}$(d), not $\kvarchange{v}$ after $\kvar{v}$(d).
  \end{lstlisting}
  would not be satisfied because $-\kvar{v}(d)\not\in s'_{\varset^-}$, $\kvarchange{v}\not\in s'_{\varset^c}$, and $\kvar{v}(d)\in s$ since the firing is valid.

\item[$s'_{\varset^-}$] removing $-\kvar{v}(d)$ from $s'_{\varset^-}$ would not satisfy rule
   \begin{lstlisting}
    caused -$\kvar{v}$(d) if $\kvardom{v,t}$(d), not $\kvar{v}$(d) after t.
  \end{lstlisting}
  because $\kvar{v}(d)\not\in s'_{\varset^+}$ since $(v,d)\not\in\assign'$.
\item[$s'_{\varset^\downarrow}$] removing any of the $\kvardef{v'}\in \set{\kvardef{v}\mid \kvar{v}(d)\in s'_{\varset^+}}$ would contradict one of the rules
  \begin{lstlisting}
    caused $\kvardef{v'}$ if $\kvar{v'}$(d).
  \end{lstlisting}
  since there is a an element $d'$ s.t.\ $\kvar{v'}(d')\in s'_{\varset^+}$
\item[$s'_{\varset^c}$] removing any $\kvarchange{v'}\in\set{\kvarchange{v}\mid v\in dom(\writef(t))}$
  since $v'\in dom(\writef(t))$, so therefore there is the rule
  \begin{lstlisting}
    caused $\kvarchange{v'}$ after t.
  \end{lstlisting}
  that would not be satisfied.
\item[$s'_{\guardf}$] removing $\kguard{t}$ from $s'_{\guardf}$ would contradict one of the guard rules according to Lemma~\ref{lemma:plantopn:guards}.
\end{description}
\end{proof}

\begin{lemma}[Correctness]\label{lemm:plantopn:correctness} Let $W$ be a \emph{safe \ournet} and $\pdom(W)$ the corresponding planning problem.

If $\tuple{s,\{t\},s'}$ is a legal transition in $\pdom(W)$, then $\plantopn{s} \fire{t} \plantopn{s'}$ is a valid firing of $W$.
\end{lemma}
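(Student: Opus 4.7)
The plan is to verify both clauses of the \ournet Valid Firing definition by inspecting which rules of $\pdom(W)$ survive in the Gelfond--Lifschitz reduct $\pdom(W)^{\tuple{s,\{t\},s'}}$ and what constraints they force on $s$ and $s'$. Throughout I would set $(\mrk,\assign) = \plantopn{s}$ and $(\mrk',\assign') = \plantopn{s'}$, exploiting the fact that $s'$ is the minimal consistent set satisfying all surviving rules relative to $s\cup\{t\}$.

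For the control-flow part, legality of $\tuple{s,\{t\},s'}$ requires $\{t\}$ to be executable in $s$, and the only relevant executability condition forces $\pres{t}\subseteq s$; hence $\mrk(p)=1$ for every $p\in\pres{t}$, i.e., $t$ is enabled in $\mrk$. For the marking update I would do a case split on $p\in P$: for $p\in\pres{t}\setminus\posts{t}$ the rule \lstinline|caused -$p$ after $t$| forces $-p\in s'$ and by consistency $p\notin s'$, giving $\mrk'(p)=0$; for $p\in\posts{t}\setminus\pres{t}$ the rule \lstinline|caused $p$ after $t$| forces $p\in s'$, while $k$-safeness of $W$ together with the fact that $s$ projects to a reachable $1$-safe marking ensures $p\notin s$, so $\mrk'(p)=1=\mrk(p)+1$; for $p\notin\pres{t}\cup\posts{t}$ the value of $p$ is preserved by the inertia rule \lstinline|caused $p$ if not -$p$ after $p$| together with the minimality of $s'$. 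This recovers clause 2 of the firing definition.

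Turning to data, the constraint \lstinline|caused false after t, not $\kguard{t}$| survives in the reduct unless $\kguard{t}\in s$, so consistency of $s'$ forces $\kguard{t}\in s$, and invoking Lemma~\ref{lemma:plantopn:guards} yields $\dmodel,\assign\models\guardf(t)$. For the assignment update I would split on $v$ according to $\writef(t)$: when $v\notin dom(\writef(t))$, neither any update rule for $\kvar{v}$ nor the $\kvarchange{v}$ rule is present, so the $\kvar{v}$ inertia rule preserves $\assign(v)$ exactly; when $\writef(t)(v)=\emptyset$, the rule \lstinline|caused false if $\kvardef{v}$ after t| forbids any $\kvar{v}(d)$ from surviving, so $v$ drops from $dom(\assign')$; when $\writef(t)(v)\neq\emptyset$, the dual rules on $\kvar{v}(V)$ and $-\kvar{v}(V)$ quantified by $\kvardom{v,t}$, combined with the functionality rule and \lstinline|caused false if not $\kvardef{v}$ after t|, force precisely one $d\in\writef(t)(v)$ to appear positively in $s'$, yielding $\assign'(v)=d\in\writef(t)(v)$ as required.

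The main obstacle will be the minimality of $s'$ as a stable model of the reduct: I must argue that the rules jointly force exactly the atoms making $\plantopn{s'}$ coincide with the expected post-state, neither more nor less. This is most delicate in the nondeterministic variable case, where a single positive $\kvar{v}(d)$ must survive out of the $|\writef(t)(v)|$ pairs of dual candidates, which requires simultaneously exploiting the ``not $-\kvar{v}(V)$'' and ``not $\kvar{v}(V)$'' default negations and the functionality constraint. A secondary subtle point is the coordination between the place-update rules and $k$-safeness of $W$, needed to exclude the pathological case where $p\in\posts{t}\setminus\pres{t}$ is already in $s$ and would otherwise break the $\mrk'(p)=\mrk(p)+1$ clause.
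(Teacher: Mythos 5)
Your proposal is correct and follows essentially the same route as the paper's proof: it verifies each clause of the \ournet valid-firing definition by inspecting the executability condition, the guard-constraint rule (via the guard-translation lemma), and the place- and variable-update rules in the reduct, using consistency and minimality of $s'$ in exactly the same places (the inertial cases for places not in $\pres{t}\cup\posts{t}$ and the argument that the surviving value of $\kvar{v}$ must lie in $\writef(t)(v)$). The only difference is cosmetic: you flag the $p\in\posts{t}\setminus\pres{t}$ safeness subtlety explicitly, which the paper leaves implicit in its restriction to markings with range $\{0,1\}$.
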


\begin{proof}
  Let $(\mrk,\assign) = \plantopn{s}$ and $(\mrk',\assign') = \plantopn{s'}$; to show that $\plantopn{s} \fire{t} \plantopn{s'}$ is a valid firing of $W$ (see Definition~\ref{def:dpn:firing}) we need to show that:
    \begin{compactenum}
  \item $t$ is enabled in $\mrk$, i.e., $\{ p\in P\mid \mrk(p)>0\}\supseteq \pres{t}$; and
  \item $\dmodel, \assign \models \guardf(t)$;
  \item the marking $\mrk'$ satisfies the property that for every $p\in P$:
  \begin{displaymath}
    \mrk'(p) =
    \begin{cases}
      \mrk(p)-1 & \text{if $p\in \pres{t}\setminus\posts{t}$}\\
      \mrk (p)+1  & \text{if $p\in \posts{t}\setminus\pres{t}$}\\
      \mrk (p) & \text{otherwise}
    \end{cases}
  \end{displaymath}
  \item the assignment $\assign'$ satisfies the properties that its domain is $$dom(\assign') = dom(\assign)\cup\set{v\mid \writef(t)(v)\neq\emptyset}\setminus\set{v\mid \writef(t)(v)=\emptyset}$$ and for each $v\in dom(\assign')$:
  \begin{displaymath}
    \assign'(v) =
    \begin{cases}
      d \in \writef(t)(v) & \text{if $v\in dom(\writef(t))$}\\
      \assign(v)  & \text{otherwise.}
    \end{cases}
  \end{displaymath}
\end{compactenum}

Since $\tuple{s,\{t\},s'}$ is a legal transition, then the action $t$ must be executable, therefore the rule:
  \begin{lstlisting}
    executable $t$ if $i^t_1, \ldots, i^t_n$.
  \end{lstlisting}
  with $\{i^t_1, \ldots, i^t_n\} = \pres{t}$ must be satisfied in $s$, that is $\pres{t}\subseteq s$ and $\mrk(i^t_j)=1$ for $1\leq j\leq n$.
  
Since $\tuple{s,\{t\},s'}$ is a legal transition, then the rule:
  \begin{lstlisting}
    caused false after t, not $\kguard{t}$.
  \end{lstlisting}
must be satisfied, therefore its body should be false. This means that $\kguard{t}\in s$ and by using Lemma~\ref{lemma:plantopn:guards} we can conclude that $\dmodel, \assign \models \guardf(t)$.

To verify the condition on $\mrk'$, for each $p\in P$ we consider the three cases:
  \begin{description}
  \item[$p\in\pres{t}\setminus\posts{t}$] then in $\pdom(W)$ there is the rule
    \begin{lstlisting}
      caused -$p$ after $t$.
    \end{lstlisting}
    therefore $p\not\in s'$ and $\plantopn{s'}(p)=0$

  \item[$p\in\posts{t}\setminus\pres{t}$] then in $\pdom(W)$ there is the rule
    \begin{lstlisting}
      caused $p$ after $t$.
    \end{lstlisting}
    therefore $p\in s'$ and $\plantopn{s'}(p)=1$
      
  \item[$p\not\in (\pres{t}\setminus\posts{t})\cup(\posts{t}\setminus\pres{t})$] in this case none of the bodies of rules with $p$ (or $-p$) in the head and an action in the body are satisfied because the only executed action is $t$. Therefore the only ``active'' rule having $p$ (or $-p$) in the head can be the ``inertial'' one for the positive atom:
    \begin{lstlisting}
      caused $p$ if not -$p$ after $p$.
    \end{lstlisting}
    Since rules with $-p$ in the head have their bodies falsified $-p\not\in s'$. This means that the rule \lstinline|caused $p$ after $p$.| is in $\pdom(W)^{\tuple{s,\{t\}, s'}}$.
    
    If $\plantopn{s'}(p)=0$ then $p\not\in s'$ therefore $p\not \in s$ otherwise the inertial rule would not be satisfied; so $\plantopn{s}(p)=0$.
    
    If $\plantopn{s'}(p)=1$ and $\plantopn{s}(p)=0$, then $s'$ would not be minimal because $s'\setminus\{p\}$ satisfies the only ``active'' rule with $p$ in the head, therefore $\plantopn{s}(p)=1$.
  \end{description}
  
  Now we verify the conditions on $\assign'$ and for each $v\in\varset$ we consider three distinct cases: $v \not\in dom(\writef(t))$, $\writef(t)(v) = \emptyset$, and $\writef(t)(v)\neq\emptyset$. First we should note that $\kvarchange{v}\in s'$ iff $v\in dom(\writef(t))$, therefore only in the two latter cases where the inertial rule
  \begin{lstlisting}
    caused $\kvar{v}$(X) if not -$\kvar{v}$(X), not $\kvarchange{v}$ after $\kvar{v}$(X).
  \end{lstlisting}
  would not be in $\pdom(W)^{\tuple{s,\{t\}, s'}}$.
  
  \begin{description}
  \item[$v \not\in dom(\writef(t))$:] In this case, the only \emph{active} rule where $\kvar{v}(\cdot)$ appears in the head is the inertial
  \begin{lstlisting}
    caused $\kvar{v}$(X) if not -$\kvar{v}$(X), not $\kvarchange{v}$ after $\kvar{v}$(X).
  \end{lstlisting}
  while there are no rules with $-\kvar{v}(\cdot)$ in the head, because for all actions $t'\neq t$ are ``false'' in $s$. Therefore $-\kvar{v}(d)\not\in s$ and $\kvarchange{v}\not\in s'$ so $\kvar{v}(d)\in s'$ iff $\kvar{v}(d)\in s$. This means that $v\in dom(\assign')$ iff $v\in dom(\assign)$, and $v\in dom(\assign)$ implies that $\assign'(v)=\assign(v)$.
  
  \item[$\writef(t)(v) = \emptyset$:] in this case if $\kvar{v}(d)\in s'$ for some $d$, then $\kvardef{v}\in s'$ as well; therefore the rule
   \begin{lstlisting}
    caused false if $\kvardef{v}$ after t.
    caused $\kvarchange{v}$ after t.
  \end{lstlisting}
  would not be satisfied contradicting the hypothesis that $\tuple{s,\{t\},s'}$ is a legal transition. 
  \item[$\writef(t)(v)\neq\emptyset$:] in this case $\pdom(W)$ contains the rules
  \begin{lstlisting}
    caused $\kvar{v}$(V) if $\kvardom{v,t}$(V), not -$\kvar{v}$(V) after t.
    caused -$\kvar{v}$(V) if $\kvardom{v,t}$(V), not $\kvar{v}$(V) after t.
    caused false if not $\kvardef{v}$ after t.
    caused $\kvarchange{v}$ after t.
  \end{lstlisting}
  Since $\kvardef{v}\in s'$ otherwise the third rule would not be satisfied, there there must be a $d$ s.t.\ $\kvar{v}(d)\in s'$, and this means that $v\in dom(\assign')$. Let assume that $d \not\in \writef(t)(v)$, then it means that $\kvardom{v,t}(d)\not\in s'$ therefore none of the rules with $\kvar{v}(d)$ in the head would be satisfied in $\pdom(W)^{\tuple{s,\{t\},s'}}$ that contradicts the minimality of $s'$. 
  \end{description}
  The analysis of the three cases confirms that the fourth condition is satisfied as well.
\end{proof}

\begin{theorem}\label{thm:wfnet:eq}
Let $W$ be a \emph{safe WF-net} and $\pdom(W)$ the corresponding planning problem. Let $(\mrk_0,\assign_0)$ be the initial state of $W$ -- i.e.\ with a single token in the source and no assignments -- and $s_0$ the planning state satisfying the initial condition.
 \begin{description} 
 \item[$(\Rightarrow)$]
 For \emph{any} case
  \begin{displaymath}
    \zeta: (\mrk_0,\assign_0) \fire{t_1} (\mrk_1,\assign_1) \ldots (\mrk_{n-1},\assign_{n-1})\fire{t_n} (\mrk_{n},\assign_{n})
  \end{displaymath}
  in $W$ there is a trajectory in $\pdom(W)$
  \begin{displaymath}
    \eta: \tuple{s_0,\{t_1\},s_1}, \ldots, \tuple{s_{n-1},\{t_n\},s_{n}}
  \end{displaymath}
  such that $(\mrk_i,\assign_i) = \plantopn{s_i}$ for each $i \in \{0 \ldots n\}$
  and vice versa.
  \item[$(\Leftarrow)$]
  For each trajectory
  \begin{displaymath}
    \eta: \tuple{s_0,\{t_1\},s_1}, \ldots, \tuple{s_{n-1},\{t_n\},s_{n}}
  \end{displaymath}
   in $\pdom(W)$ the sequence of firings 
  \begin{displaymath}
    \zeta: \plantopn{s_0} \fire{t_1} \plantopn{s_1}\ldots \plantopn{s_{n-1}} \fire{t_n} \plantopn{s_{n}}
  \end{displaymath}
    is a case of $W$.
 \end{description}
\end{theorem}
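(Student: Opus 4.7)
The plan is a straightforward induction on the length $n$, with Lemmas~\ref{lemm:plantopn:correctness} and~\ref{lemm:plantopn:completeness} providing the step in each direction. The two directions are essentially symmetric so I will set them up in parallel, using the same induction hypothesis (that the prefixes of length $i$ already correspond via $\plantopn{\cdot}$).

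First I would handle the base case $n = 0$, which reduces to showing that $\plantopn{s_0} = (\mrk_0,\assign_0)$. By construction the only initial state constraint in $\pdom(W)$ is \lstinline|initially: $start$.|, together with the stratified static rules (functionality, $\kvardef{v}$, guard definitions) and the background facts $\kvardom{v,t}(d)$. Since a legal initial state is the least set satisfying all static and initial rules, no positive $\kvar{v}(d)$ literal can be forced into $s_0$ (no static rule has such a head), so $s_0$ contains $start$ and the auxiliary facts/derived guard fluents, but no place fluent other than $start$ and no variable atom. Hence $\plantopn{s_0}$ has a single token in $start$ and empty assignment, matching the initial state of $W$.

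For the inductive step of $(\Rightarrow)$, suppose a trajectory $\tuple{s_0,\{t_1\},s_1},\ldots,\tuple{s_{i-1},\{t_i\},s_i}$ has been built with $\plantopn{s_j} = (\mrk_j,\assign_j)$ for $0 \le j \le i$. The next firing $(\mrk_i,\assign_i) \fire{t_{i+1}} (\mrk_{i+1},\assign_{i+1})$ of $\zeta$ is valid by assumption, so Lemma~\ref{lemm:plantopn:completeness} applied with the state $s_i$ (which satisfies $\plantopn{s_i} = (\mrk_i,\assign_i)$) produces a state $s_{i+1}$ such that $\plantopn{s_{i+1}} = (\mrk_{i+1},\assign_{i+1})$ and $\tuple{s_i,\{t_{i+1}\},s_{i+1}}$ is a legal transition in $\pdom(W)$. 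Appending this transition extends the trajectory and preserves the invariant.

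For the inductive step of $(\Leftarrow)$, given a trajectory prefix ending at $s_i$ with $\plantopn{s_i} = (\mrk_i,\assign_i)$ and a further legal transition $\tuple{s_i,\{t_{i+1}\},s_{i+1}}$, Lemma~\ref{lemm:plantopn:correctness} directly gives that $\plantopn{s_i} \fire{t_{i+1}} \plantopn{s_{i+1}}$ is a valid firing of $W$, and appending it to the case under construction maintains the invariant. Iterating yields the full case $\zeta$. The only delicate point in the whole argument is therefore the base case, where one must carefully invoke the minimality of the legal initial state to rule out extraneous positive atoms; the rest is routine induction bookkeeping, since the two lemmas do all the per-step work.
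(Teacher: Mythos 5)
Your proof is correct and follows essentially the same route as the paper's: induction on the length of the case/trajectory, with Lemma~\ref{lemm:plantopn:completeness} supplying the step for $(\Rightarrow)$ and Lemma~\ref{lemm:plantopn:correctness} for $(\Leftarrow)$, and the base case resting on the structure of the legal initial state. Your treatment of the base case is in fact more explicit than the paper's (which just asserts $\plantopn{s_0}=(\mrk_0,\assign_0)$ ``by construction''), but the argument is the same.
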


\begin{proof}
  We first prove the left-to-right direction by induction on the length of the case.
  \begin{compactitem}
  \item Base case: by construction, $\plantopn{s_0}= (\mrk_0,\assign_0)$ because of the structure of the initial state.
  \item Inductive case: we consider a case of size $n+1$. By inductive hypothesis, for the case $(\mrk_0,\assign_0) \fire{t_1} (\mrk_1,\assign_1) \ldots (\mrk_{n-1},\assign_{n-1})\fire{t_n} (\mrk_{n},\assign_{n})$ there is a trajectory $\tuple{s_0,\{t_1\},s_1}, \ldots, \tuple{s_{n-1},\{t_n\},s_{n}}$ s.t.\ $\plantopn{s_i}=M_i$ for each $i \in \{0 \ldots n \}$.
  
   Since $s_n$ is consistent and $\plantopn{s_{n}}=(\mrk_{n},\assign_{n})$, by Lemma~\ref{lemm:plantopn:completeness}, there is a state $s_{n+1}$ s.t.\ $\tuple{s_{n}, \{t_n\}, s_{n+1}}$ is a legal transition and $\plantopn{s_{n+1}}=(\mrk_{n},\assign_{n})$ thus proving the claim.
  \end{compactitem}

\medskip

  The right-to-left direction can be proved -- in the same way as the other case -- by induction on the length trajectories by using the Lemma~\ref{lemm:plantopn:correctness}.
\end{proof}

%%% Local Variables:
%%% mode: latex
%%% TeX-master: "DataWFNets-planning.tex"
%%% save-place: t
%%% End:

}

%%\newpage
\bibliographystyle{splncs03}
\bibliography{BPAI-17-DataWFNets-bib}

\end{document}